\documentclass{article}

\usepackage[preprint]{neurips_2026}

\usepackage[utf8]{inputenc} 
\usepackage[T1]{fontenc}    
\usepackage{hyperref}       
\usepackage{url}            
\usepackage{booktabs}       
\usepackage{amsfonts}       
\usepackage{amsmath,amssymb,amsthm}
\usepackage{mathtools}
\usepackage{thmtools}
\usepackage{thm-restate}

\usepackage{enumitem}
\usepackage{multirow}

\usepackage{nicefrac}       
\usepackage{microtype}      
\usepackage{xcolor}         

\usepackage{graphicx}
\usepackage{subcaption}
\usepackage{siunitx} 
\usepackage{todonotes}
\usepackage{placeins}
\usepackage{tikz}
\usepackage{pifont} 
\usepackage[export]{adjustbox} 
\usepackage{csquotes} 

\usepackage{comment}

\definecolor{tabblue}{HTML}{1F77B4}
\definecolor{taborange}{HTML}{FF7F0E}
\definecolor{tabgreen}{HTML}{2CA02C}
\definecolor{ourblue}{rgb}{0.368,0.507,0.71}    
\definecolor{ourdarkblue}{HTML}{354d72}         
\definecolor{ourlightblue}{HTML}{b4c5dc}    

\definecolor{ourorange}{rgb}{0.881,0.611,0.142} 
\definecolor{ourdarkorange}{HTML}{8f6213}         
\definecolor{ourlightorange}{HTML}{efc785}    

\definecolor{ourgreen}{rgb}{0.56,0.692,0.195}   
\definecolor{ourdarkgreen}{HTML}{677f24}         
\definecolor{ourlightgreen}{HTML}{cee298}  

\definecolor{ourred}{rgb}{0.923,0.386,0.209}    
\definecolor{ourdarkred}{HTML}{94300f}         
\definecolor{ourlightred}{HTML}{f7c5b2}  

\definecolor{ourviolet}{rgb}{0.528,0.471,0.701} 
\definecolor{ourdarkviolet}{HTML}{463b68}         
\definecolor{ourlightviolet}{HTML}{bcb3d4}

\newcommand{\E}{\mathbb{E}}

\newcommand{\KL}{\mathrm{KL}}

\usepackage{float}
\usepackage{algorithm}
\let\AND\relax
\usepackage{algorithmic}

\usepackage{adjustbox}

\usepackage[capitalize,noabbrev]{cleveref}

\theoremstyle{plain}
\newtheorem{theorem}{Theorem}[section]

\newtheorem{proposition}[theorem]{Proposition}
\newtheorem{lemma}[theorem]{Lemma}
\newtheorem{corollary}[theorem]{Corollary}
\theoremstyle{definition}
\newtheorem{definition}[theorem]{Definition}

\newtheorem{counterexample}[theorem]{Counter-Example}
\theoremstyle{remark}
\newtheorem{remark}[theorem]{Remark}

\usepackage[toc,page,header]{appendix}
\usepackage{minitoc}

\title{Stochastic Decision Horizons\\ for Constrained Reinforcement Learning}

\newcommand{\affmpi}{\textsuperscript{1}}
\newcommand{\affscads}{\textsuperscript{2}}
\newcommand{\affhicbr}{\textsuperscript{3}}
\newcommand{\afftub}{\textsuperscript{4}}
\newcommand{\affmpiis}{\textsuperscript{5}}
\newcommand{\eqcontrib}{\textsuperscript{*}}
\newcommand{\eqsupervision}{\textsuperscript{\textdagger}}

\author{%
	\begin{tabular}{c}
		\textbf{Nikola Milosevic}\affmpi\affscads\eqcontrib \quad
		\textbf{Leonard T. Franz}\affhicbr\afftub\eqcontrib \quad
		\textbf{Daniel Haeufle}\affhicbr\afftub \quad
		\textbf{Georg Martius}\afftub\affmpiis \\[0.25em]
		\textbf{Nico Scherf}\affmpi\affscads\eqsupervision \quad
		\textbf{Pavel Kolev}\afftub\eqsupervision
	\end{tabular}
}

\begin{document}

    \doparttoc 
    \faketableofcontents 
    
    \maketitle
    
    \begingroup
    \renewcommand{\thefootnote}{}
    \footnotetext{%
        \textsuperscript{1} Max Planck Institute for Human Cognitive and Brain Sciences, Leipzig.
        \textsuperscript{2} Center for Scalable Data Analytics and Artificial Intelligence (ScaDS.AI), Dresden/Leipzig.
        \textsuperscript{3} Hertie Institute for Clinical Brain Research \& Center for Integrative Neuroscience.
        \textsuperscript{4} University of Tübingen.
        \textsuperscript{5} Max Planck Institute for Intelligent Systems, Tübingen.\linebreak
        \textsuperscript{*} Equal contribution.
        \textsuperscript{\textdagger} Equal supervision.
        Correspondence to: Nikola Milosevic \texttt{nmilosevic@cbs.mpg.de}, Pavel Kolev \texttt{pavel.kolev@uni-tuebingen.de}.
        Project website with videos: \url{https://tinyurl.com/sdh-crl}
    }
    \begin{abstract}
        We propose \emph{stochastic decision horizons} (SDH), a theoretically grounded framework for solving constrained RL problems with every-step constraint satisfaction, a desirable property in many real-world applications.
        This is a different objective than the typical constrained MDP (CMDP) which has cumulative violation budget.
        In SDH, a constraint violation yields an effective shortening of horizon via a state-action continuation probability. 
        Using Control as Inference, we develop the first off-policy and regularized algorithms for RL with instantaneous constraints.
        We identify two principled semantics for what counts as a decision after a violation.
        Absorbing-state semantics end the decision process, so only surviving decisions pay entropy cost, yielding max-entropy AS-SAC.
        Virtual-termination keeps the decision process alive while stopping reward credit, yielding KL-regularized VT-MPO.
        To connect SDH with CMDPs, we track how violations accumulate along trajectories (their \emph{violation-depth profile}).
        SDH effectively weights each trajectory by the exponential of its total violation; this matches an additive CMDP budget exactly when violations occur at a single characteristic scale, and we pinpoint where it cannot: when rare, deep violations mix with frequent, shallow ones.
        Experiments validate the theory.
        On the 90-muscle H2190 humanoid (Hyfydy), VT-MPO matches state-of-the-art gait realism with $4\times$ fewer environment steps and substantially more stable training.
        On Safety Gymnasium, violation-depth profiles correctly identify the regimes in which SDH delivers strong reward-violation trade-offs.\looseness-1
    \end{abstract}

    \section{Introduction}

	Reinforcement learning (RL) for real-world control is rarely unconstrained: robots, biomechanical controllers, and medical or chemical processes must keep variables inside an operating envelope \emph{at every step}, and a single deep violation can damage hardware, harm a patient, or destabilize a controller. We study reward maximization under such \emph{instantaneous constraints}.
	The standard tool, the constrained MDP (CMDP)~\citep{altman1999cmdp}, bounds an \emph{expected cumulative cost} and fits global budgets (fuel, average exposure) well, but suits instantaneous feasibility poorly: encoding a per-step limit forces a tiny budget, a regime in which CMDP algorithms struggle.

	We propose \emph{stochastic decision horizons} (SDH). A state-action \emph{continuation probability} $\alpha(s,a)\in[0,1]$ determines whether the decision horizon continues; violations push $\alpha$ down, shrinking the effective horizon rather than spending a budget. 
    Continuation probabilities were heuristic in CaT~\citep{chane2024cat}; we show SDH is the fixed-dual form of a \emph{survival-chance problem} (max reward subject to a survival bound).
    Multiple constraints fold into one cost $c$ and continuation model $\alpha$ (e.g. $\alpha_\lambda=\exp\{-\lambda c\}$). 
    To enforce constraint satisfaction, we introduce a \emph{single} dual Lagrange multiplier $\eta$: fixed $\eta$ requires only a survival-shaped reward critic, while adaptive enforcement adds one survival critic to update $\eta$.

    Sample-efficient deep RL relies on off-policy, entropy- or KL-regularized algorithms such as SAC~\citep{haarnoja2018soft} and MPO~\citep{abdolmaleki2018maximum}. 
    Once a violation occurs, what information cost should post-violation actions pay? 
    We resolve this through Control as Inference (CaI)~\citep{toussaint2006probabilistic, rawlik2010CaI, levine2018reinforcement}: different graphical models of post-violation behavior yield different but principled regularizers. \emph{Absorbing-state} (AS) ends the process at the horizon, so only surviving decisions pay entropy cost, yielding a maximum-entropy algorithm, AS-SAC; \emph{virtual-termination} (VT) keeps the process alive but cuts reward credit, yielding a KL-regularized algorithm, VT-MPO. Both share a survival-shaped critic and differ only in the actor.

    \begin{figure*}[t]
        \centering
        \includegraphics[width=\linewidth]{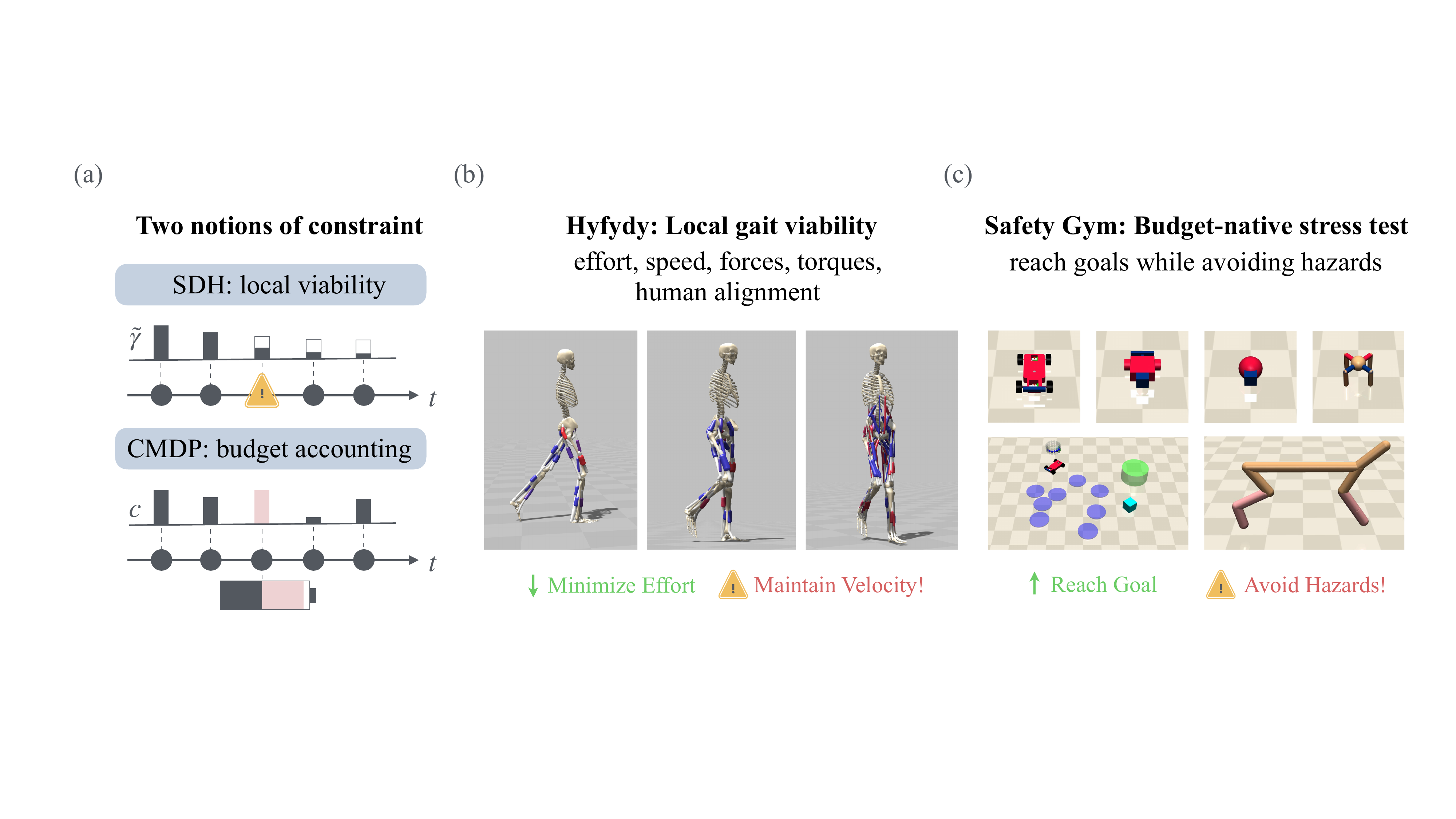}\vspace{-1em}%
        \caption{
            \textbf{Stochastic decision horizons versus budget accounting.}
            CMDPs treat violations as additive cost in a history-dependent budget.
            SDH treats violations as instantaneous (in)feasibility signals: a low continuation probability $\alpha$ shortens the effective planning horizon by attenuating current reward credit and future bootstrapping through $(\tilde r:=\alpha r,\tilde\gamma:=\alpha\gamma)$.
            This distinction explains the scope of SDH: it matches locomotion problems well, while violation-depth profiles expose when budget-native benchmarks require additive CMDP accounting.\vspace{-1em}
        }
        \label{fig:budget-vs-viability}
    \end{figure*}
    
    Instantaneous and cumulative-budget feasibility are different objectives, but they look at the same underlying quantity: how often a policy accumulates how much violation.
	A CMDP sums this distribution over all depths; SDH summarizes it through a single weighted statistic.
	The two views are equivalent, feasibility at one threshold maps to feasibility at the other, when violations are governed by a single characteristic scale.
	They diverge when the distribution mixes scales, for instance frequent shallow violations alongside rare deep ones.
	The theory is therefore predictive: it identifies, in advance, where SDH also solves the matching CMDP and where it cannot.
	
	We validate both sides empirically. On Safety Gymnasium~\citep{ray2019benchmarking}, violation-depth profiles predict, where SDH excels and where additive budgets are required. Our main result is in biomechanics: learning stable, energy-efficient, human-like gait for high-fidelity musculoskeletal models is an open problem, with the metabolic-effort/target-velocity trade-off historically demanding heavy per-task reward shaping or heuristics. SDH frames it as one fixed instantaneous constraint problem. On the 90-muscle H2190 humanoid in Hyfydy~\citep{Geijtenbeek2021Hyfydy}, VT-MPO matches state-of-the-art gait realism~\citep{schumacher2025emergancenatural} with markedly more stable training and roughly $4\times$ fewer steps, to our knowledge the first principled constrained-RL solution at this scale.
     
    \vspace{-8pt}
    \paragraph{Contributions.}
    \begin{itemize}[leftmargin=*,topsep=0pt,itemsep=0pt]
        \item \textbf{Stochastic decision horizons (SDH).} We formalize general continuation models $\alpha(s,a)$ as Bellman-consistent survival occupancies with shaped rewards and state-action-dependent discounts. SDH is the fixed-dual Lagrangian form of a stochastic survival-chance problem. 
        \item \textbf{Regularized off-policy algorithms.} Using Control as Inference (CaI), we derive two principled regularized objectives from different post-horizon graphical models: AS-SAC (max-entropy, absorbing) and VT-MPO (KL-regularized, virtual-termination), both off-policy and replay-compatible.
        \item \textbf{Theory linking SDH and CMDP.} Through violation-depth profiles, we characterize when SDH solves an additive-budget CMDP and when the two objectives genuinely diverge.
        \item \textbf{State-of-the-art biomechanical control.} On the 90-muscle H2190 humanoid, VT-MPO matches state-of-the-art gait realism with stable training and $\sim 4\times$ fewer environment steps.
    \end{itemize}

    \paragraph{Related work.}
    The closest line of work treats instantaneous constraints by terminating or truncating the trajectory on violation~\citep{sun2022constrained,chane2024cat,chane2024soloparkour}.
    CaT~\citep{chane2024cat} uses a continuation probability heuristically and optimizes a survival-weighted return on policy.
    SDH places this construction on a principled footing: it identifies the survival-chance problem the continuation probability solves, replaces the on-policy unregularized objective with an off-policy, replay-compatible Bellman recursion, and shows that the dual variable acts as a survival bonus rather than a cost penalty.
    A second, mostly orthogonal line enforces pointwise feasibility through reachability, barrier, or projection semantics~\citep{so2023solving,dalal2018safe,chow2019lyapunov}; these methods typically rely on differentiable dynamics or task-specific safety machinery that SDH does not require, and handle one constraint at a time rather than aggregating many into a single critic.
    The constrained MDP (CMDP) framework~\citep{altman1999cmdp} addresses a different objective, feasibility against an additive cost budget, with methods based on trust regions~\citep{achiam2017cpo}, primal-dual schemes~\citep{ray2019benchmarking,stooke2020responsive}, penalty and barrier objectives~\citep{liu2020ipo,zhang2022penalizedproximalpolicyoptimization,zhang2024constrained,dey2024p2bpo,usmanova2024log}, and off-policy constrained critics~\citep{yang2021wcsac,tessler2018reward,roy2021direct,koirala2024fawac,zheng2024safeofflinereinforcementlearning,sootla2022saute}.
    Section~\ref{sec:sdh-cmdp-main} delineates when this objective coincides with SDH and when it does not.

    Our regularized variants build on Control as Inference~\citep{toussaint2006probabilistic,rawlik2010CaI,rawlik2012stochastic,levine2018reinforcement,geist2019theory} and reuse the policy-improvement principles of SAC~\citep{haarnoja2018soft} and MPO~\citep{abdolmaleki2018maximum}; our contribution is to identify which post-violation graphical model each induces under stochastic termination.
    Hyfydy enables high-throughput musculoskeletal simulation~\citep{Geijtenbeek2021Hyfydy}, where Effort Weight Adaptation~\citep{schumacher2025emergancenatural} is the state-of-the-art reference for emergent natural gait via adaptive reward weighting.

	\section{Stochastic Decision Horizons}
	\label{sec:sdh-main}
	
	We consider an infinite-horizon MDP
	$\mathcal M=(\mathcal S,\mathcal A,P,r,\gamma,\nu)$ with bounded non-negative reward $r\in[0,R_\mathrm{max}]$, discount $\gamma\in[0,1)$, and initial state distribution $\nu$. A policy $\pi$ induces trajectories $\tau=(s_0,a_0,s_1,\ldots)$. In addition to reward, the environment exposes a nonnegative instantaneous violation signal $c(s,a)\ge0$, where $c(s,a)=0$ denotes feasibility. For clarity we state the finite-state, finite-action case; the algorithms apply to continuous spaces with densities and standard measurability assumptions.
    
	\subsection{Survival chance and the SDH Bellman equation}
    Feasibility in RL is often enforced via chance constraints, but these are generally incompatible with standard dynamic programming techniques~\cite{celik2019chance}.
    SDH resolves this by recasting the constraint as a \emph{survival event}. The key observation is that discounted RL admits an interpretation based on random terminal time~\citep{toussaint2006probabilistic}, where the normalized return evaluates expected reward at the final step of a geometrically terminated trajectory $J(\pi)\propto \mathbb{E}_{s_T,a_T\sim\pi,T\sim\gamma}[r(s_{T},a_{T})]$. We augment this model with a survival process such that the resulting Lagrangian becomes compatible with dynamic programming: primal updates reduce to standard Bellman recursions with shaped rewards, and dual updates admit a direct interpretation as adding a \emph{survival bonus}.
    
	\begin{definition}[Survival-chance problem]
		\label{def:survival-chance-main}
        Consider the following survival process:
        \begin{equation}
		C_t\mid s_t,a_t\sim\mathrm{Bernoulli}(\alpha(s_t,a_t)),
		\qquad
		B_t:=\prod_{k=0}^{t}C_k,
		\label{eq:survival-process-main}
    	\end{equation}
    	where $\alpha:\mathcal S\times\mathcal A\to[0,1]$.
		  For survival probability $\rho\in(0,1)$, the survival-chance problem is
		\begin{equation}
			\max_\pi\;
			\mathbb E_{s_T,a_T\sim\pi,T\sim\gamma}[B_{T}r(s_{T},a_{T})]
			\quad
			\text{s.t.}
			\quad
			\Pr_{\pi,\gamma}(B_{T}=1)\ge \rho,
			\label{eq:survival-chance-main}
		\end{equation}
        where $T\sim\gamma$ is shorthand for $T$ sampled from the geometric distribution with probability $1-\gamma$.
	\end{definition}
    
	At each step, the trajectory is infeasible with probability $1-\alpha$, where $\alpha$ encodes local feasibility information computed from the cost signals. 
    This formulation asks for high reward at the random evaluation time, but only when the sampled prefix survives, while requiring $\Pr_{\pi,\gamma}(B_T=1)\ge \rho$. 
    Up to the constant factor $(1-\gamma)$, the SDH return is obtained by applying the same survival gate to reward and discount: $\tilde r=\alpha r$, $\tilde\gamma=\gamma\alpha$. The survival critic is the unique fixed point of
	\begin{equation}
		Q_{\rm surv}^\pi(s,a)
		=
		\tilde r(s,a)
		+
		\tilde\gamma(s,a)
		\mathbb E_{s'\sim P(\cdot\mid s,a),a'\sim\pi(\cdot\mid s')}
		[Q_{\rm surv}^\pi(s',a')]
		\label{eq:survival-bellman-main}
	\end{equation}
    and determines a modified MDP with return
	$J_\mathrm{surv}(\pi)
		= \mathbb{E}_{s,a\sim\nu,\pi}[Q_{\rm surv}^\pi(s,a)]$.
    Equivalently, SDH induces the unnormalized survival-shaped occupancy
	\begin{equation}
		\mu^\pi_\alpha(f)
		:=
		\mathbb E_\pi
		\Big[
		\sum_{t\ge0}
		\gamma^t
		\Big(\prod_{k=0}^{t}\alpha(s_k,a_k)\Big)
		f(s_t,a_t)
		\Big].
		\label{eq:survival-occupancy-main}
	\end{equation}
	Thus $J_\mathrm{surv}(\pi)=\mu^\pi_\alpha(r)$; under $T\sim\gamma$, the corresponding random-time reward and survival chance are
    $\mathbb E_{s_T,a_T\sim\pi,T\sim\gamma}[B_T r(s_T,a_T)]=(1-\gamma)\mu^\pi_\alpha(r)$ and $\Pr_{\pi,\gamma}(B_T=1)=(1-\gamma)\mu^\pi_\alpha(1)$.
	
	\begin{theorem}[Survival Lagrangian]
		\label{thm:survival-lagrangian-main}
		The Lagrangian of~\eqref{eq:survival-chance-main} is
		\begin{equation}
			\mathcal L(\pi,\eta)
			=
			(1-\gamma)\mu^\pi_\alpha(r+\eta)-\eta\rho,
			\label{eq:survival-lagrangian-main}
		\end{equation}
        for a dual variable $\eta\ge0$ and a fixed survival probability $\rho \in (0,1)$.
		For fixed $\eta$, maximizing $\mathcal L$ is equivalent to maximizing an SDH return with shifted reward.
	\end{theorem}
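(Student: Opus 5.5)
The plan is to write the Lagrangian directly from the constrained problem~\eqref{eq:survival-chance-main} and then rewrite both the objective and the constraint as linear functionals of the survival-shaped occupancy~\eqref{eq:survival-occupancy-main}.

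\textbf{Step 1 (objective and constraint as occupancy functionals).} Since $T\sim\gamma$ is geometric, independent of the trajectory and of the survival coins, conditioning on $T=t$ gives
\[
\mathbb E[B_T r(s_T,a_T)]=\sum_{t\ge0}(1-\gamma)\gamma^t\,\mathbb E_\pi\big[\mathbb E[B_t\mid\tau]\,r(s_t,a_t)\big].
\]
The coins $C_k$ are conditionally independent Bernoulli variables given the state-action pairs, so $\mathbb E[B_t\mid\tau]=\prod_{k=0}^t\alpha(s_k,a_k)$. This yields $(1-\gamma)\mu^\pi_\alpha(r)$. Setting $r\equiv1$ in the same computation gives $\Pr(B_T=1)=(1-\gamma)\mu^\pi_\alpha(1)$. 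Exchanging expectation and sum is justified by Tonelli, since all terms are nonnegative and bounded by $R_{\max}\gamma^t$.

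\textbf{Step 2 (form the Lagrangian).} Write the constraint as $(1-\gamma)\mu^\pi_\alpha(1)-\rho\ge0$ and attach the multiplier $\eta\ge0$:
\[
\mathcal L(\pi,\eta)=(1-\gamma)\mu^\pi_\alpha(r)+\eta\big((1-\gamma)\mu^\pi_\alpha(1)-\rho\big).
\]
The functional $\mu^\pi_\alpha(\cdot)$ is linear in its argument $f$, because it is an expectation of a sum that is linear in $f$. Hence $\mu^\pi_\alpha(r)+\eta\,\mu^\pi_\alpha(1)=\mu^\pi_\alpha(r+\eta)$, which gives~\eqref{eq:survival-lagrangian-main}.

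\textbf{Step 3 (fixed-$\eta$ equivalence).} For fixed $\eta$, the term $-\eta\rho$ and the factor $1-\gamma>0$ do not depend on $\pi$. So $\arg\max_\pi\mathcal L=\arg\max_\pi\mu^\pi_\alpha(r+\eta)$. By the discussion preceding the theorem, $\mu^\pi_\alpha(r')=J_{\rm surv}(\pi)$ for the SDH Bellman equation~\eqref{eq:survival-bellman-main} with reward $r'=r+\eta$, i.e. shaped reward $\tilde r=\alpha(r+\eta)$ and discount $\tilde\gamma=\gamma\alpha$. It remains to check that this survival critic is well defined with the shifted reward. The shifted reward is bounded in $[\eta,R_{\max}+\eta]$ and $\tilde\gamma\le\gamma<1$, so the Bellman operator is a $\gamma$-contraction in sup norm. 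Its unique fixed point, unrolled, reproduces $\mu^\pi_\alpha(r+\eta)$ averaged over $\nu$ and $\pi$.

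\textbf{Main obstacle.} The only step needing care is the identity $\mathbb E[B_t\mid\tau]=\prod_{k\le t}\alpha(s_k,a_k)$ together with the unrolling of~\eqref{eq:survival-bellman-main} into the occupancy. Here I would fix the modeling convention that survival does not alter the dynamics: the coins are drawn but the trajectory $\tau$ is generated by $P$ and $\pi$ regardless. Then an induction on $t$ shows that the $n$-step unrolled critic equals the truncated sum in~\eqref{eq:survival-occupancy-main}, and the remainder term is bounded by $\gamma^n\|Q\|_\infty\to0$.
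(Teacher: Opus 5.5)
Your proposal is correct and follows the paper's route. The paper likewise writes the Lagrangian as $\mathbb E_\pi[B_{T_\gamma}r(s_{T_\gamma},a_{T_\gamma})]+\eta(\Pr_\pi(B_{T_\gamma}=1)-\rho)$, converts both terms via the survival-occupancy identity (proved exactly as in your Step 1 by conditioning on $\tau$ and averaging over the geometric time), and concludes by linearity of $\mu^\pi_\alpha$; your extra Step 3 check that the shifted-reward critic is a well-defined $\gamma$-contraction whose unrolled fixed point equals $\mu^\pi_\alpha(r+\eta)$ is detail the paper leaves to its separate variable-discount contraction lemma.
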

	
	The proof is given in App.~\ref{app:survival_chance_problem}.
	In a CMDP, the dual variable charges a weighted cost penalty.
	In SDH, the reward $\alpha(s,a)(r(s,a)+\eta)$ encourages surviving prefixes if it is larger than zero. 
    The dual variable $\eta$ is therefore a \emph{survival bonus}, not a cost penalty.
	Importantly, in the modified MDP, reward offsets matter, since trajectories are terminated early and their expected length depends on the policy.
	
	\subsection{Relation to constrained MDPs (CMDPs)}
	\label{sec:sdh-cmdp-main}
	The survival-chance problem in \Cref{def:survival-chance-main} is well-posed on its own terms, but a practical question remains: when a designer \emph{does} have an additive cost budget in mind, how faithfully does SDH enforce it? We introduce the \emph{violation-depth profile} as a diagnostic that makes a precise connection between the survival return of \Cref{eq:survival-occupancy-main} and expected cost constraints. 
    Figure~\ref{fig:budget-vs-viability-app} gives the corresponding conceptual map.
	The standard discounted CMDP solves
	\[
	\max_\pi J_r(\pi):=
	\mathbb E_\pi\!\Big[\sum_{t\ge0}\gamma^t r(s_t,a_t)\Big]
	\quad
	\text{s.t.}
	\quad
	J_c(\pi):=
	\mathbb E_\pi\!\Big[\sum_{t\ge0}\gamma^t c(s_t,a_t)\Big]\le d.
	\]
	SDH is not a general solver for this additive-budget problem. A local continuation model \(\alpha(s,a)\) is history-blind: it cannot, on the original state space, reproduce exact budget-aware feasibility rules that depend on how much cost has already been accumulated
	(see App.~\ref{app:sdh-vs-cmdp}).
	
    To relate local continuation with additive budgets, we introduce the \emph{violation-depth profile}
	\begin{equation}
		\Omega_\pi(b)
		:=
		\mathbb E_\pi
		\Big[
		\sum_{t\ge0}
		\gamma^t
		\mathbf 1\{G_t\ge b\}
		\Big],
		\qquad
		G_t:=\sum_{k=0}^{t}c(s_k,a_k).
		\label{eq:violation-depth-profile-main}
	\end{equation}
	This profile records how much discounted trajectory mass reaches cumulative	violation depth of at least \(b\). 
	The connection to SDH is clearest for the exponential continuation model under which $\prod_{k=0}^{t}\alpha_\lambda(s_k,a_k)=\exp(-\lambda G_t)$, so exponential SDH evaluates the cumulative violation depth through an
	exponential survival statistic. Thus $J_c(\pi)$ and exponential SDH are two different one-dimensional
    projections of the same violation-depth profile:
    \[
    J_c(\pi)
    =
    (1-\gamma)\int_0^\infty \Omega_\pi(b)\,db,
    \qquad
    \mathfrak S_\pi(\lambda)
    :=
    \mu_{\alpha_\lambda}^\pi(1)
    =
    \frac{1}{1-\gamma}
    -
    \lambda\int_0^\infty e^{-\lambda b}\Omega_\pi(b)\,db .
    \]
    In particular, $J_c(\pi)$ reads the unweighted area under $\Omega_\pi(b)$, whereas exponential SDH reads a single Laplace-weighted slice of the same profile. Without additional structure, one such exponential statistic cannot determine or control the additive budget \(J_c(\pi)\); see App.~\ref{app:sdh-vs-cmdp}.

	The positive result is that this obstruction disappears in a single-scale regime.
	If the violation-depth profile has the form $\Omega_\pi(b)=(1-\gamma)^{-1}\exp(-b/\tau_\pi)$, then the scalar \(\tau_\pi\) controls additive cost, random-time chance tails, and the exponential SDH statistic (see App.~\ref{subsec:single-scale-viability-app}). 
	In this exact single-scale model, the entire violation geometry collapses to one scalar $\tau_{\pi}$, which is identifiable from one exponential SDH statistic; hence SDH recovers the additive statistic.
	For approximately single-scale profiles, this identity predicts when SDH should behave as an instantaneous-constraint surrogate and when broad or multi-scale profiles should break the approximation.
	Sec.~\ref{sec:experiments} tests this prediction by estimating \(\Omega_\pi(b)\) on Safety Gymnasium policies.

    \begin{figure*}[ht]
    	\centering
    	\includegraphics[width=\linewidth]{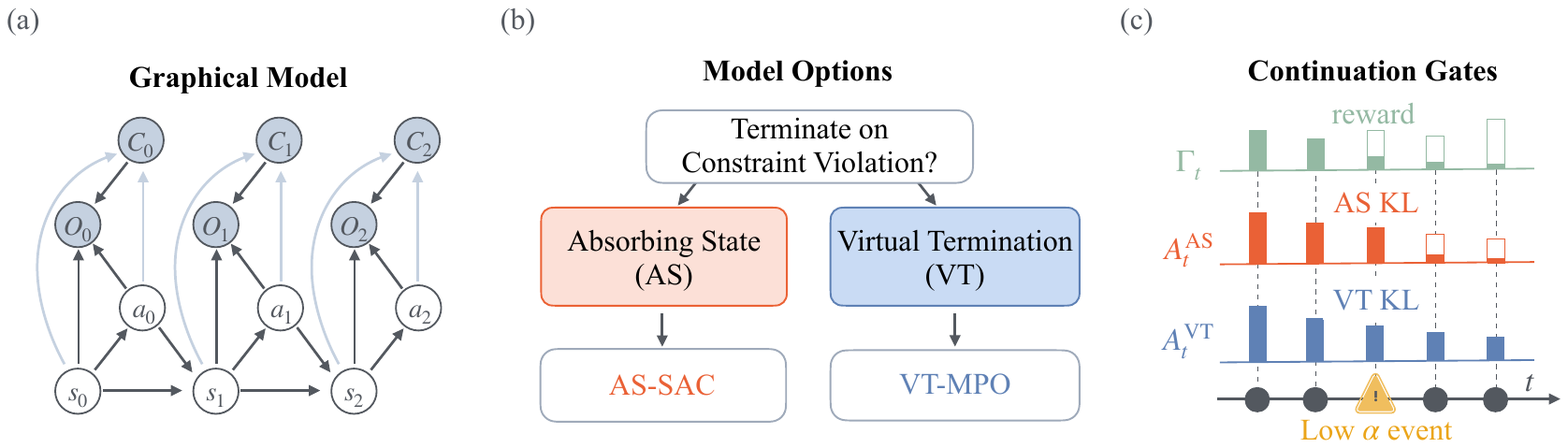}\vspace{-.5em}%
        \caption{
            \textbf{Control as inference with stochastic decision horizons.}
            The CaI model (a) adds feasibility continuation
            $C_t\sim \mathrm{Ber}(\alpha(s_t,a_t))$ and geometric stopping
            $D_t\sim \mathrm{Ber}(1-\gamma)$.
            AS and VT share the same survival-gated reward process, but differ in which decisions remain in the semantic trajectory law after feasibility failure (b-c).
            Panel (c) shows the key timing distinction: reward at time $t$ is gated by the current continuation event, so an action with low $\alpha(s_t,a_t)$ immediately loses reward credit; in AS, the KL term for that same action is still charged because the AS decision gate depends on survival before time $t$, not on $C_t$, which only gates future AS decisions.
            Under VT, reward credit is stopped by feasibility failure, but KL remains charged on the ordinary discounted horizon.
            Thus the same survival-shaped reward induces different regularizers: AS yields a variable-discount SAC-style recursion, while VT yields an MPO-style update with ordinary discounted policy divergence.
        }
    	\label{fig:cai-sdh-as-vt}
    \end{figure*}

	\section{Control as Inference Formulation}
	\label{sec:cai_sdh}
	
    By interpreting SDH through CaI, we derive policy regularizers from a probabilistic model of trajectories. Survival decides which rewards count; CaI decides which action likelihoods remain after violation.
	Our goal is to use this distinction to derive off-policy, replay-compatible, and KL-regularized RL algorithms.
	This matters because stochastic termination changes not only reward credit, but also which action likelihoods remain in the trajectory KL after violation.
    Thus the CaI derivation explains why AS and VT share the same survival-gated reward while inducing different regularizers, and hence different actor updates.
    For a fixed survival multiplier $\eta$, the survival-chance Lagrangian is equivalent, up to a policy-independent constant, to an SDH return with shifted reward $r+\eta$, so the CaI derivation can be stated for a generic bounded reward.

    In standard CaI, the regularizer comes from the KL between a variational and a prior trajectory law.
    However, with feasibility-based stochastic termination, the weight of a policy-ratio term is determined by which post-violation decisions remain in the semantic trajectory law, as illustrated in \Cref{fig:cai-sdh-as-vt}: does feasibility failure end the controlled decision process, or does it only terminate the reward process?
    
    We use the standard CaI convention that discounting is represented by a geometric
    horizon: $D_t\sim\mathrm{Bernoulli}(1-\gamma)$,
    $H_t:=\prod_{k<t}(1-D_k)$, and $\mathbb E[H_t]=\gamma^t$.
    Feasibility continuation is represented by $C_t\sim\mathrm{Bernoulli}(\alpha(s_t,a_t))$.
    Reward at time $t$ contributes only if the geometric horizon has not stopped and feasibility survives through the current state-action pair, resulting in the optimality potential
    \begin{equation}
    	p(O=1\mid \tau,C,D)
    	:=
    	\exp\!\Big(
    	\kappa^{-1}
    	\sum_{t\ge0}
    	\Gamma_t r(s_t,a_t)
    	\Big),
    	\label{eq:opt-potential-main}
    \end{equation}
    where $\Gamma_t:=H_t\prod_{k=0}^{t}C_k$ is the reward gate common to both model options.
    The only difference is the trajectory law after feasibility failure. 
    Under \emph{absorbing-state} (AS), the first failure ends the controlled decision process, so post-violation actions contribute zero trajectory KL. 
    Under \emph{virtual-termination} (VT), feasibility failure terminates the reward process only and the agent continues to act on the underlying discounted MDP, so later action likelihoods remain part of the trajectory KL.
	
	Let $p_\sigma$ and $q_\sigma$ denote the trajectory laws for
    $\sigma\in\{\mathrm{AS},\mathrm{VT}\}$. They share the same initial distribution,
    environment dynamics, feasibility variables, and geometric horizon variables. The
    only difference is that $p_\sigma$ uses the reference policy $\pi_0$ while $q_\sigma$ uses $\pi$. The KL term is induced by whether the chosen
    trajectory model stops at violation or merely stops reward credit, see ~\Cref{lem:sdh-semantic-kl}:
    \begin{equation}
    \label{eq:semantic-kl-main}
    D_{\mathrm{KL}}(q_\sigma\|p_\sigma)
    =
    \mathbb E_{q_\sigma}
    \Big[
    \sum_{t\ge0}
    A_t^\sigma
    \log\frac{\pi(a_t\mid s_t)}{\pi_0(a_t\mid s_t)}
    \Big],
    \qquad
    A_t^\sigma
    =
    \begin{cases}
    H_t\prod_{k=0}^{t-1}C_k, & \sigma=\mathrm{AS},\\
    H_t, & \sigma=\mathrm{VT}.
    \end{cases}
    \end{equation}

    We write $\mathcal J_\sigma$ for the $\kappa$-scaled CaI ELBO, $\kappa\mathbb E_{q_\sigma}[\log p(O=1\mid\tau,C,D)]-\kappa D_{\mathrm{KL}}(q_\sigma\|p_\sigma)$, after marginalizing $C,D$.
    Evaluating this ELBO yields the AS/VT split.
	\begin{theorem}[AS and VT CaI objectives]
        \label{thm:as-vt-elbo-main}
        Let
        $\alpha_t:=\alpha(s_t,a_t)$,
        $\tilde r_t:=\alpha_t r(s_t,a_t)$,
        $\tilde\gamma_t:=\gamma\alpha_t$, and
        $u_t:=\prod_{k=0}^{t-1}\tilde\gamma_k$ with $u_0=1$.
        For $\sigma\in\{\mathrm{AS},\mathrm{VT}\}$, the $\kappa$-scaled CaI ELBO evaluates to
        \begin{align}
        \mathcal J_{\sigma}(\pi,\pi_0)
        &=
        \mathbb E_{\tau\sim\pi}
        \left[
        \sum_{t\ge0}
        u_t\tilde r_t
        -
        \kappa
        \sum_{t\ge0}
        v^\sigma_t
        \log\frac{\pi(a_t\mid s_t)}{\pi_0(a_t\mid s_t)}
        \right],
        \quad
        v^\sigma_t =
        \begin{cases}
        u_t, & \sigma=\mathrm{AS},\\
        \gamma^t, & \sigma=\mathrm{VT}.
        \end{cases}
        \end{align}
        For the fixed-dual survival-chance problem, replace $r$ by $r+\eta$ and add the policy-independent constant $-\eta\rho/(1-\gamma)$.
    \end{theorem}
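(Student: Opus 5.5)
The plan is to evaluate the two terms of the ELBO separately and then marginalize the auxiliary variables $C$ and $D$ conditionally on the state-action trajectory $\tau$. Taking $\log$ of the optimality potential \eqref{eq:opt-potential-main} and multiplying by $\kappa$, the reward term becomes $\mathbb E_{q_\sigma}[\sum_{t\ge0}\Gamma_t r(s_t,a_t)]$. The KL term is given by \eqref{eq:semantic-kl-main} (Lemma~\ref{lem:sdh-semantic-kl}), which we take as established. Both are expectations of sums $\sum_t W_t f(s_t,a_t)$, where $W_t$ is a product of indicator variables and $f$ depends only on $(s_t,a_t)$. Whenever the law of $(s_t,a_t)$ on the event $\{W_t=1\}$ coincides with the law under the plain rollout of $\pi$, we can apply the tower property and replace $W_t$ by $\mathbb E[W_t\mid\tau]$.

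For the reward term, the geometric variables $D_k$ are independent of everything else, so $\mathbb E[H_t]=\gamma^t$. Conditionally on $\tau$, the $C_k$ are independent $\mathrm{Bernoulli}(\alpha_k)$, so $\mathbb E[\prod_{k=0}^t C_k\mid\tau]=\prod_{k=0}^t\alpha_k$. Hence
\[
\mathbb E[\Gamma_t\mid\tau]=\gamma^t\prod_{k=0}^t\alpha_k=u_t\,\alpha_t=u_t\,\tilde r_t/r(s_t,a_t),
\]
which gives $u_t\tilde r_t$ after multiplying by $r(s_t,a_t)$. For the KL term, $\mathbb E[A_t^{\mathrm{AS}}\mid\tau]=\gamma^t\prod_{k<t}\alpha_k=u_t$ and $\mathbb E[A_t^{\mathrm{VT}}\mid\tau]=\gamma^t$. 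These reproduce $v_t^\sigma$ with the factor $-\kappa$. Summation and expectation can be exchanged by dominated convergence, since $r$ is bounded and $\gamma<1$. For the KL sum one additionally assumes $\mathbb E_\pi$ of the absolute log-ratio is uniformly bounded, or works with finite action spaces and $\pi_0$ having full support.

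The main obstacle is justifying that the expectation over $q_\sigma$ may be written as an expectation over $\tau\sim\pi$. Under VT this is immediate, because the agent keeps acting with $\pi$ on the underlying MDP regardless of $C$ and $D$. Under AS the trajectory is absorbed after the first failure, so $(s_t,a_t)$ is not even defined on the failure event. The approach is to couple $q_{\mathrm{AS}}$ with a full $\pi$-rollout in which $C$ and $D$ are sampled alongside, and to note two facts. First, every term in the sums carries a weight $W_t$ that vanishes unless the process is still alive at $t$: $\Gamma_t$ requires survival through $t$, and $A^{\mathrm{AS}}_t$ requires survival through $t-1$. Second, on the alive event the prefix $(s_0,a_0,\dots,s_t,a_t)$ has exactly the law of the unabsorbed $\pi$-rollout, since $C_k$ and $D_k$ influence only whether absorption occurs and not the transitions. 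I would state this coupling as a short lemma, by induction on $t$, showing that the joint law of $(W_t,s_{0:t},a_{0:t})$ agrees under the two constructions. The tower-property computation above then applies verbatim.

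Finally, for the fixed-dual statement, Theorem~\ref{thm:survival-lagrangian-main} gives $\mathcal L=(1-\gamma)\mu^\pi_\alpha(r+\eta)-\eta\rho$. Replacing $r$ by $r+\eta$ in the reward gate and rescaling by $(1-\gamma)^{-1}$ to match the unnormalized SDH return yields the stated constant $-\eta\rho/(1-\gamma)$, which does not depend on $\pi$. The KL term is unaffected because it does not involve $r$.
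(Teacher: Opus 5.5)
Your proposal is correct and follows the paper's own route: write the ELBO as the gated reward $\mathbb E_{q_\sigma}[\sum_t\Gamma_t r_t]$ minus $\kappa$ times the semantic KL, transfer expectations from $q_\sigma$ to the plain $\pi$-rollout on the gated event (your coupling lemma is exactly Lemma~\ref{lem:sdh-gated-prefix-transfer}), insert the conditional gate expectations of Lemma~\ref{lem:sdh-gate-expectations} via the tower property, and read off the constant $-\eta\rho/(1-\gamma)$ by dividing the Lagrangian of Theorem~\ref{thm:survival-lagrangian-main} by $1-\gamma$. One small correction: VT is not ``immediate'', since under $q_{\mathrm{VT}}$ actions switch to the dummy kernel once $H_t=0$ (this is what makes the KL carry the factor $H_t$), so VT needs the same transfer on $\{H_t=1\}$; your coupling argument supplies this verbatim because both $\Gamma_t$ and $A_t^{\mathrm{VT}}$ contain $H_t$, but that argument should assert agreement of the laws restricted to $\{W_t=1\}$, not of the full joint law of $(W_t,s_{0:t},a_{0:t})$, which differs on $\{W_t=0\}$.
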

	Thm.~\ref{thm:as-vt-elbo-main} gives the main algorithmic consequence of the CaI semantics; see App.~\ref{app:sdh-elbo}, Thm.~\ref{thm:as-vt-elbo}, for the full proof.
    AS and VT share the same survival-shaped reward term but differ in KL mass: AS charges only surviving decisions, whereas VT charges ordinary discounted decisions. 
    Thus AS preserves a SAC-style soft Bellman recursion on the survival-shaped MDP, while VT pairs the same survival critic with MPO-style KL-constrained policy improvement rather than a single soft Bellman recursion.
	
	\section{Algorithms}
	\label{sec:algorithms-main}
	
	The theory above has a deliberately small implementation footprint. For every replay transition $(s,a,r,c,s',d)$, the violation signal is converted into a continuation probability $\alpha(s,a)$. For fixed survival multiplier $\eta$, the critic target uses
	\begin{equation}
		\label{eq:algorithm-shaped-target-main}
		\tilde r_\eta(s,a)=\alpha(s,a)(r(s,a)+\eta),
		\qquad
		\tilde\gamma(s,a)=(1-d)\gamma\alpha(s,a).
	\end{equation}
	Thus constraints enter through the Bellman target: violations reduce both immediate reward credit and future bootstrapping.
    AS-SAC and VT-MPO share the survival-shaped critic; they differ in whether policy information is weighted by the surviving or ordinary discounted horizon.
	
	\paragraph{Absorbing-state semantics gives AS-SAC.}
    In AS, reward and policy information are weighted by the same surviving decision mass. 
    For the CaI reference policy $\pi_0$, AS yields the KL-regularized soft Bellman recursion in the survival-shaped MDP:
    \begin{align}
        Q_{\mathrm{AS}}^\pi(s,a)
        &=
        \tilde r_\eta(s,a)
        +
        \tilde\gamma(s,a)
        \mathbb E_{s'\sim P(\cdot\mid s,a)}
        [V_{\mathrm{AS}}^\pi(s')],
        \label{eq:as-soft-bellman-main}
        \\
        V_{\mathrm{AS}}^\pi(s)
        &=
        \mathbb E_{a\sim\pi(\cdot\mid s)}
        \left[
        Q_{\mathrm{AS}}^\pi(s,a)
        -
        \kappa\log\frac{\pi(a\mid s)}{\pi_0(a\mid s)}
        \right].
        \label{eq:as-soft-value-main}
    \end{align}
    With uniform $\pi_0$, this becomes the SAC-style entropy-regularized recursion.
    Unlike ordinary discounted SAC, the $-\log\pi_0$ offset is not generally harmless: under AS, the amount of surviving decision mass depends on the policy through $\alpha$. 
    \Cref{app:as-sac-unified} gives the exact uniform-prior derivation, including a novel two-critic off-policy $\kappa$-temperature learning method, and further algorithmic details.

	\paragraph{Virtual-termination semantics gives VT-MPO.}
	Under VT, the same survival critic satisfies
	\begin{equation}
		\label{eq:qsurv-main}
		Q_{\mathrm{surv}}^\pi(s,a)
		=
		\tilde r_\eta(s,a)
		+
		\tilde\gamma(s,a)
		\mathbb E_{s'\sim P(\cdot\mid s,a),\,a'\sim\pi(\cdot\mid s')}
		[Q_{\mathrm{surv}}^\pi(s',a')].
	\end{equation}
	However, the VT objective is not a soft Bellman problem: reward is survival-weighted, while the policy KL stays on the ordinary $\gamma$-discounting.
    We therefore pair the survival critic with MPO's KL-constrained policy improvement. In the E-step, actions are reweighted according to
	\begin{equation}
		\label{eq:vt-mpo-boltzmann-main}
		q^\star(a\mid s)
		\propto
		\pi_0(a\mid s)
		\exp\!\left(\beta^{-1}Q_{\mathrm{surv}}(s,a)\right),
	\end{equation}
	where $\beta$ is the MPO temperature chosen to satisfy the policy-improvement KL budget. We reserve $\eta$ for the survival multiplier. The M-step then fits the parametric actor to $q^\star$ as in standard MPO.
	
	\paragraph{Adaptive survival multipliers.}
    The survival bonus $\eta$ from Thm.~\ref{thm:survival-lagrangian-main} can be adapted online to enforce a target survival rate. Define the survival probability
    \begin{equation}
        \label{eq:survival-probability-main}
        p_{\mathrm{surv}}(\pi)
        =
        (1-\gamma)\mu^\pi_\alpha(1)
        =
        \Pr_\pi(B_{T_\gamma}=1),
    \end{equation}
    where $\mu^\pi_\alpha(1)$ is the survival return with $r\equiv1$, estimated from rollouts via a critic.
    The dual update $\eta\leftarrow[\eta+\beta_\eta(\rho-\hat{p}_{\mathrm{surv}})]_+$ raises $\eta$ under frequent violations and lowers it when the constraint is comfortably met. 
    Alternatively, $\eta$ can be fixed when the reward scale is known: a sufficiently large $\eta$ drives the violation probability to zero for any $\lambda>0$, after which $\lambda$ can be annealed from small (permissive exploration) to large (strict feasibility).
    In our experiments, this fixed-$\eta$ plus $\lambda$-annealing strategy substantially improved training stability on long-horizon navigation tasks where adaptive $\eta$ was prone to oscillation.

	\section{Experiments}
	\label{sec:experiments}

    Our experiments address four questions: (i) In instantaneous-constraint settings, can SDH satisfy instantaneous constraints while preserving reward? 
    (ii) On Hyfydy, challenging large state- and action-space problems, does a fixed SDH objective avoid the training instability induced by adaptive reward reweighting? 
    (iii) Does this formulation produce biologically plausible locomotion in high-dimensional musculoskeletal models? 
    (iv) On Safety Gymnasium, do violation-depth profiles explain where SDH succeeds or fails beyond the instantaneous-constraint regime?
    
    \subsection{Hyfydy: constrained musculoskeletal control}
    \label{sec:hyfydy-main}

    We evaluate VT-MPO on three Scone-Hyfydy musculoskeletal humanoids~\citep{geijtenbeek2019scone,Geijtenbeek2021Hyfydy} of increasing complexity: H0918 (2D, 9 DOFs, 18~muscles), H1622 (3D, 16 DOFs, 22~muscles), and H2190 (3D, 21 DOFs, 90~muscles).
    We compare against the state-of-the-art domain-specific Effort Weight Adaptation (EWA)~\citep{schumacher2025emergancenatural} algorithm, which mitigates the effort-velocity conflict inherent in gait learning by dynamically reweighting effort penalties.
    Achieving realistic gaits on these models requires problem-specific co-design of the objective and training procedure, and EWA is, to our knowledge, the only prior method tailored to this setting.
    Its adaptive reweighting, however, introduces a non-stationary reward signal that can cause oscillatory training dynamics (See \cref{fig:hyfydy-full-training-app}).

    \paragraph{Formulation.}
    SDH recasts gait realism as an instance of the survival-shaped objective in Eq.~\eqref{eq:survival-occupancy-main}: effort is the reward, and biomechanical requirements define the continuation model $\alpha$.
    Concretely, target velocity, bounded ground reaction forces, joint-torque limits, and muscle activation sparsity enter through $\alpha=\exp(-\lambda(c_1+c_2+\dots))$, see App.~\ref{app:hyfydy-full-formulation} for the full formulation.
    Each constraint is a instantaneous (in)feasibility signal: exceeding a GRF threshold or over-extending a knee indicates deteriorating gait quality, not consumption of a transferable budget. This is the regime where SDH is theoretically well matched (Sec.~\ref{sec:sdh-cmdp-main}). Neither the reward nor the constraint thresholds change during training.
    All experiments use the strongly off-policy DEP exploration~\citep{schumacher2022deprl}.

    \begin{figure}[t]
        \centering
        \includegraphics[width=\linewidth]{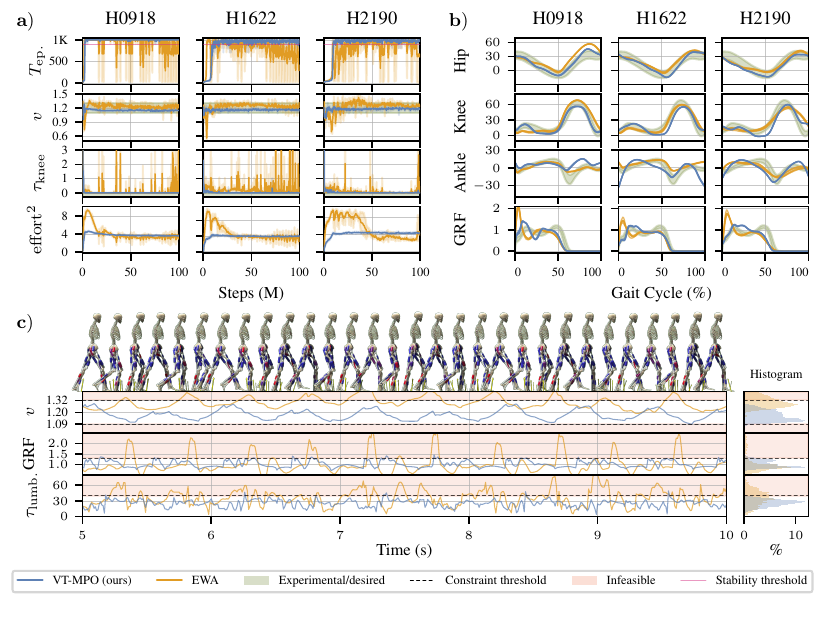}\vspace{-1.3em}%
        \caption{\textbf{HyFyDy natural human locomotion.} 
        $\mathbf{a}$) Sample efficiency curves (episode length, velocity, knee violation, effort) for VT-MPO vs.\ EWA: VT-MPO trains efficiently and stably, while many low-effort EWA checkpoints fail the stability requirement (Ep.\ Len.\ $\geq 900$ steps); full cost/reward dynamics in \cref{fig:hyfydy-full-training-app}.
        $\mathbf{b}$) Gait cycle patterns (hip, knee, ankle angles in degrees; ground reaction force in body weight) for the VT-MPO and EWA checkpoints with the highest human match percentage; both produce similarly realistic gaits. See \cref{par:gait_cycle_intr} for a full discussion of the gait cycle kinematics. 
        $\mathbf{c}$) Time series and histograms of velocity, ground reaction force, and lumbar torque with their imposed constraint limits for an exemplary H1622 rollout (pseudo video, top), contrasted with EWA: VT-MPO keeps the rollout concentrated within the feasible regime.
        }
        \label{fig:cost_realism}
    \end{figure}
    \vspace{-1pt}
    \paragraph{Training stability.}
    Figure~\ref{fig:cost_realism}$\mathbf{a}$) shows the main optimization advantage of the SDH formulation.
    VT-MPO converges to stable walking within 5-10M steps on all models: episode length saturates near the maximum, velocity settles within the target range, and constraint violations (knee over-extension) are driven to near zero. The effort reaches values comparable with the best stable EWA checkpoints. EWA, by contrast, shows large oscillations in task performance metrics as the adaptive effort reweighting fails to settle into a stable equilibrium. Note that many low-effort EWA checkpoints simultaneously fail stability requirements (episode length $< 900$~steps) and thus do not constitute valid gaits. Because VT-MPO optimizes a stationary objective, its critic learns a stationary value function, eliminating this failure mode. Detailed per-constraint cost curves are presented in App.~\ref{app:hyfydy-full-results}.
    
    \paragraph{Constraint satisfaction and gait realism.}
    We evaluate emergent gait realism via \emph{gait match percentage}~\citep{geijtenbeek2019scone}: the fraction of the gait cycle for which simulated kinematics fall within one standard deviation of experimental human data. This metric is not optimized by either method. Figure~\ref{fig:cost_realism}$\mathbf{a}$) shows that VT-MPO produces joint angle and GRF profiles comparable to EWA across all models. Both methods saturate the gait match metric (Table~\ref{tab:gait_match}): $75.9\%$ vs.\ $75.8\%$ on H0918, $70.1\%$ vs.\ $69.3\%$ on H1622, $62.9\%$ vs.\ $64.6\%$ on H2190. Saturation reflects intrinsic model limitations (missing toes and soft tissues \citep{dhondt2024DynamicFootModel,buchmann2024EffectIncludingMobile}, simplified muscle dynamics), not algorithmic ceiling. However, VT-MPO reaches peak realism at 21M steps on H2190 versus 90M for EWA: a $4{\times}$ reduction in sample complexity.\looseness-1

    Figure~\ref{fig:cost_realism}$\mathbf{b}$) shows that VT-MPO respects constraint limits tightly: velocity, GRF, and lumbar torque distributions concentrate within their prescribed bounds. 
    The setting matches the SDH formulation: violations shorten the effective planning horizon, so the policy learns to avoid constraint boundaries rather than treating them as occasional budget events.
    EWA is included as a reference trace showing that the plotted constraints are nontrivial to satisfy.
    Ablations comparing AS-SAC vs.\ VT-MPO, continuation families, PID-Lagrangian baselines, and continuation scale sensitivity are in App.~\ref{app:hyfydy-diagnostics}.

    \subsection{Safety Gymnasium: violation-depth profiles as a scope test}
	\label{sec:safety-gym-main}
	
	We evaluate on Safety Gymnasium~\citep{ji2023safetygymnasium}, using locomotion and navigation tasks with standardized cost signals and compare AS-SAC and VT-MPO against unconstrained SAC/MPO, on-policy constrained baselines CPO and C-TRPO, and off-policy constrained baselines WCSAC and MPO-PID. Safety Gymnasium primarily targets expected cumulative-cost budgets, whereas the SDH is designed for instantaneous constraints. 
    Hence, we derive a per-step cost threshold $b$ from the target budget $d$ via $b = d(1-\gamma)$, yielding a lower bound on the expected cumulative cost. 
    Since Safety Gym uses signed rewards, it violates the assumptions of Sec.~\ref{sec:sdh-main}; in SDH updates, we use a fixed positive survival shift $\eta$ as described in Sec. \ref{sec:algorithms-main}, while reporting native benchmark rewards.
    Hyperparameters are listed in App.~\ref{app:results-safety-gymnasium}.
    We treat this benchmark as a \emph{scope test}: it combines signed rewards and budget-style costs and also allows us to test the predictive quality of our theory of violation-depth profiles.
    
    \begin{figure}[t]
        \centering
        \includegraphics[width=\linewidth]{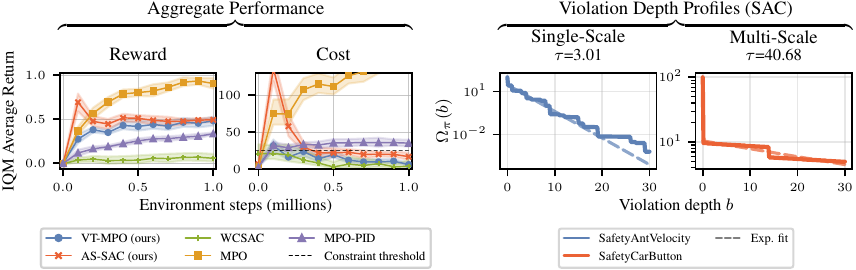}
        \caption{
        \textbf{Safety Gymnasium: violation-depth profiles predict where SDH succeeds and fails.} \emph{Left:} IQM reward and cost across environments and seeds~\citep{agarwal2021deep}. VT-MPO and AS-SAC substantially reduce violations vs.\ unconstrained MPO while maintaining competitive reward; MPO-PID is a strong baseline and WCSAC trades reward for lower cost. 
        \emph{Right:} Violation-depth profiles $\Omega_\pi(b)$ of a converged unconstrained SAC policy at budget-relevant depths (threshold $d{=}25$). \texttt{AntVelocity} (single-scale, $\tau{=}3.01$) decays exponentially well within the budget range, giving SDH a discriminative signal; \texttt{CarButton} ($\tau{=}40.68$ near the budget) is multi-scale and flat, so the SDH return cannot approximate \emph{expected} costs at the operational scale.       
        }
        \label{fig:safety-aggregate-main}
    \end{figure}
     
    \paragraph{SDH gives competitive trade-offs when profile geometry is favorable.}Figure~\ref{fig:safety-aggregate-main} (left) summarizes the aggregate behavior on the benchmark. Relative to unconstrained MPO, the SDH substantially reduces violations while preserving useful reward. Relative to constrained baselines, VT-MPO and AS-SAC exhibit competitive reward-cost trade-off with high sample efficiency. VT-MPO is typically more conservative and stable; AS-SAC is more aggressive and can achieve higher reward without violations.

    \paragraph{Violation-depth profiles predict the failure modes.} Locomotion tasks like \texttt{AntVelocity} are natural fits for the SDH: cost signals indicate gait failure that genuinely degrades future performance, matching SDH's instantaneous-(in)feasibility interpretation. The same is true for the \texttt{Circle} tasks, which admit high-reward policies with little constraint violation. The SDH achieves strong reward-cost trade-offs on these tasks without task-specific tuning. However, on the navigation tasks such as \texttt{PointGoal} and \texttt{Button}, SDH either fails to satisfy constraints or does so with catastrophic reward degradation (see Table~\ref{tab:sgym-lambda-sweep}). Crucially, this is predicted by the theory since SDH encodes a preference for viability over reward, but these tasks require the agent to \emph{sacrifice} some constraints satisfaction to achieve the goal. Specifically, Sec.~\ref{sec:sdh-cmdp-main} predicts this mismatch when typical policies have a \textit{multi-scale violation-depth profile}, as illustrated in \Cref{fig:safety-aggregate-main}. Lagrangian approaches, which balance reward against cost additively, are better suited to this type of trade-off.

    \section{Conclusion}
	
    We introduced \emph{stochastic decision horizons} (SDH), a framework for reinforcement learning with instantaneous constraints, where a single deep violation can be catastrophic and per-step feasibility cannot be averaged into a budget.
    SDH attaches a continuation probability $\alpha(s,a)$ to each transition: when a constraint activates, $\alpha$ drops, and current reward credit and future bootstrapping are attenuated together. 
    We showed that the resulting Bellman recursion is the fixed-dual form of a survival-chance problem and the dual variable presents a survival bonus. 
    A single exponential continuation model folds any number of constraint signals into one objective handled by one critic and one dual variable, which is a deployment advantage over multi-critic CMDP methods that scale with constraint count.  
    
    To combine SDH with regularized off-policy RL, we used Control as Inference and develop two algorithms: maximum-entropy AS-SAC and trust-region VT-MPO, to our knowledge the first regularized off-policy algorithms for instantaneous constraints.
    VT-MPO is the safer default and remains stable at scales where adaptive reward reweighting oscillates; AS-SAC pays off when reward density is high and exploration is naturally bounded.
    To clarify the relation to CMDPs, we introduce \emph{violation-depth profiles} and show that SDH also solves the matching additive-budget problem when violations occur at a single characteristic scale.
    We evaluate this on Safety Gymnasium and find strong performance in the single-scale regimes where violation-depth profiles predict SDH should match additive-budget behavior.
    For musculoskeletal control, VT-MPO recasts the long-standing effort-velocity gait trade-off as one fixed instantaneous-constraint problem and matches state-of-the-art realism on the 90-muscle H2190 humanoid with substantially more stable training.
    
    We believe that stable training and finding high-performance solutions are key enablers for more real-world applications with natural instantaneous constraints or where the task can be phrased as such. 
    SDH thus opens a complementary design axis for constrained RL: shape the horizon rather than tune a budget. Promising directions include learned, state-conditional continuation models, offline and pessimistic variants, and calibration of continuation scales for safety-critical deployment. 
    A dedicated benchmark for the instantaneous constraint violation setting would also be important future work. 
	
    \bibliographystyle{plain}
    \bibliography{ref}
	
    \clearpage
    \newpage
    
    \appendix
    \addcontentsline{toc}{section}{Appendix}

    \part{Appendix}

    \section{Acknowledgements} Nikola Milosevic and Nico Scherf are supported by BMFTR (Federal Ministry of Research, Technology and Space) through ACONITE (16IS22065) and the Center for Scalable Data Analytics and Artificial Intelligence (ScaDS.AI.) Leipzig.
	Georg Martius is a member of the Machine Learning Cluster of Excellence, EXC number 2064/1 – Project number 390727645.
	This work was supported by the ERC - 101045454 REAL-RL and the German Federal Ministry of Education and Research (BMBF): Tübingen AI Center, FKZ: 01IS18039A.
	Support from the International Max Planck Research School for Intelligent Systems (IMPRS-IS) for Leonard Franz is gratefully acknowledged.
	The authors thank Pierre Schumacher for valuable feedback on an earlier version of the manuscript.

    \section{Impact Statement}\label{sec:impact}
    This work contributes a technical reinforcement learning objective and accompanying algorithms for simulated constrained control. Its primary expected impact is within machine learning research, safe RL benchmarking, and musculoskeletal control. Deployment in real physical systems would require additional validation, conservative safety mechanisms, and domain-specific oversight beyond the guarantees provided here.

    \part{Algorithmic Details}
	\label{app:algo-details}

    \section{Overview}
	This section collects \emph{only} the engineering/implementation points that are shared across methods and are not already
	spelled out in App.~\ref{app:vt-mpo} (VT-MPO details) and App.~\ref{app:as-sac-unified} (AS-SAC and uniform-prior structure).
	In particular, we avoid restating MPO’s E/M steps, the definition of the survival critic, or the AS two-critic decomposition.
	
	\paragraph{Unifying viewpoint (what changes and what does not).}
	Across all SDH-based variants, constraints enter learning solely through a continuation probability
	$\alpha(s,a)\in[0,1]$ computed from constraint signals, inducing shaped quantities
	\[
	\tilde r(s,a)=\alpha(s,a)\,(r(s,a)+\eta),
	\qquad
	\tilde\gamma(s,a)=\gamma\,\alpha(s,a),
	\]
	which appear \emph{only} in critic targets. Replay, target networks, and actor-side updates (SAC or MPO) are otherwise unchanged.
	AS vs.\ VT is a semantic distinction of the CaI objective that affects how information costs are accumulated (main text),
	but the continuation model $\alpha$ and survival gate used for shaping are shared.
	
	\subsection{Environment termination and \texttt{done} handling}
	\label{app:done-handling}
	
	In our experiments, constraint events do \emph{not} reset the simulator; “termination” is objective-level through $\alpha(s,a)$.
	Episodes end only when the environment returns \texttt{done}.
	
	\paragraph{True terminals vs.\ time limits.}
	For critic bootstrapping we follow standard off-policy practice:
	\begin{itemize}[leftmargin=*,itemsep=0pt]
		\item for true terminal transitions, we disable bootstrapping with $(1-\texttt{done})$ (so $\tilde\gamma=(1-\texttt{done})\gamma\alpha$);
		\item for time-limit truncations, we allow bootstrapping via the usual timeout mask (absorbed into \texttt{done} in notation).
	\end{itemize}
	
	\subsection{Continuation models used in the experiments}
	\label{app:cont-models}
	
	The theory requires only $\alpha(s,a)\in[0,1]$. In practice, the choice of $\alpha$ is a modeling decision that controls how
	aggressively violations shorten the effective horizon and can materially affect off-policy stability. We use an \emph{exponential continuation model} throughout our experiments.
	Given per-step violation magnitudes $c_i(s,a)\ge 0$, we use the smooth attenuation
	\begin{equation}
		\alpha_{\mathrm{exp}}(s,a)
		:=
		\exp\!\Big(-\lambda \sum_{i\in I} c_i(s,a)\Big)\in(0,1],
		\label{eq:alpha_exp_app}
	\end{equation}
	which avoids discontinuities and yields stable critic targets under replay. This continuation is used for both AS-SAC and VT-MPO
	on Safety Gymnasium and for the Schumacher et al. style formulation on Hyfydy.

    \subsection{Practitioner's Guide}
	In practice, we recommend VT-MPO with exponential continuation (Exp) as the default configuration, because it gives the most reliable overall reward/cost trade-off: across Safety Gymnasium it is the more conservative and stable option, consistently achieving feasible or near-feasible behavior with good returns and lower variance, while AS-SAC is better viewed as the high-reward alternative when some additional instability or tighter use of the cost budget is acceptable. Concretely, AS-SAC attains the highest rewards on several easier locomotion-style tasks (e.g., Ant, HalfCheetah, HumanoidVelocity, RacecarCircle), but VT-MPO is stronger when robust feasibility matters more, including tasks such as Hopper and PointPush, and it degrades more gracefully under a single shared hyperparameter setting. 
    We suggest the exponential as the default continuation because it is simple and empirically robust, with fixed-$\lambda$ results and no-schedule ablations showing that performance is largely unchanged once $\lambda$ is above a small threshold. However, domain-specific choices may be interesting to consider, such as the hard thresholded continuation with batched exponential moving everage scaling (Batch-EMA) used by ~\citep{chane2024cat} for legged locomotion.

    \section{VT-MPO: MPO with a Survival-Shaped Critic}
	\label{app:vt-mpo}
	
	This section specifies the only deviation from standard MPO: we keep MPO’s \emph{actor-side} E/M updates unchanged, but replace the critic by the \emph{survival critic} induced by SDH (Sec.~\ref{sec:cai_sdh}). Concretely, constraints affect learning \emph{only} through critic targets via
	\[
	\tilde r(s,a)=\alpha(s,a)\,(r(s,a)+\eta),
	\qquad
	\tilde\gamma(s,a)=\gamma\,\alpha(s,a).
	\]
	All KL budgets, dual updates, and policy-fitting details are exactly as in MPO.
	
	\paragraph{What is different from original MPO?}
	Relative to MPO, VT-MPO changes only:
	\vspace{-2mm}
	\begin{enumerate}[leftmargin=*,itemsep=0pt]
		\item the \textbf{score} used in the E-step (use $Q_{\mathrm{surv}}$ instead of the standard return critic), and
		\item the \textbf{critic training target} (use shaped reward $\tilde r$ and variable discount $\tilde\gamma$).
	\end{enumerate}
	\vspace{-1mm}
	Everything else (E/M steps, KL constraints, trust region options, action sampling) is standard MPO.
	
	\subsection{Survival critic induced by SDH}
	\label{app:vt-mpo:surv-critic}
	
	Define the (unregularized) survival action-value function of a policy $\pi$ as
	\begin{equation}
		Q_{\mathrm{surv}}^{\pi}(s,a)
		\;\coloneqq\;
		\E_{\tau\sim\pi}\!\Big[\sum_{t\ge 0} u_t\,\tilde r(s_t,a_t)\,\Big|\,s_0=s,a_0=a\Big],
		\qquad
		u_t:=\prod_{k=0}^{t-1}\tilde\gamma(s_k,a_k),
		\label{eq:Qsurv_def_app_v2}
	\end{equation}
	which is the unique fixed point of the variable-discount Bellman recursion
	\begin{equation}
		Q_{\mathrm{surv}}^{\pi}(s,a)
		=
		\tilde r(s,a)
		+
		\tilde\gamma(s,a)\,
		\E_{s'\sim P(\cdot\mid s,a)}\Big[\E_{a'\sim\pi(\cdot\mid s')}\big[Q_{\mathrm{surv}}^{\pi}(s',a')\big]\Big].
		\label{eq:Qsurv_bellman_app_v2}
	\end{equation}
	This is the critic used by VT-MPO.
	
	\subsection{Policy improvement (unchanged MPO E/M steps)}
	\label{app:vt-mpo:policy-improvement}
	
	Fix an update state distribution $d$ (estimated from replay) and a behavior/reference policy $\pi_0$ (typically the current actor, as in MPO).
	Given a critic estimate $Q_{\mathrm{surv}}^{\pi_\theta}$, the nonparametric MPO improvement problem becomes
	\begin{equation}
		\max_{q}\;\; \E_{s\sim d}\Big[\E_{a\sim q(\cdot\mid s)} Q_{\mathrm{surv}}^{\pi_\theta}(s,a)\Big]
		\quad\text{s.t.}\quad
		\E_{s\sim d}\Big[\KL\!\big(q(\cdot\mid s)\,\|\,\pi_0(\cdot\mid s)\big)\Big]\le \varepsilon,
		\label{eq:vt_estep_app_v2}
	\end{equation}
	(with $\int q(\cdot\mid s)\,da=1$ for each $s$).
	The solution is the standard Boltzmann form
	\begin{equation}
		q^\star(a\mid s)\ \propto\ \pi_0(a\mid s)\,\exp\!\Big(\tfrac{1}{\eta}\,Q_{\mathrm{surv}}^{\pi_\theta}(s,a)\Big),
		\label{eq:vt_boltzmann_app_v2}
	\end{equation}
	where the dual temperature $\eta>0$ is chosen to satisfy the KL budget (exactly as in MPO).
	The M-step then fits the parametric actor to $q^\star$ using the same objective/parameterization as MPO (e.g.\ weighted maximum-likelihood or reverse KL), optionally with an additional trust region to the previous actor.
	\emph{No actor-side changes are required beyond substituting $Q_{\mathrm{surv}}$ for the usual critic.}
	
	\subsection{Off-policy critic learning with variable discount}
	\label{app:vt-mpo:critic-learning}
	
	VT-MPO learns $Q_\phi\approx Q_{\mathrm{surv}}^{\pi_\theta}$ off-policy using replay and a target network $\phi^-$. The only modification to standard off-policy regression is that targets use $(\tilde r,\tilde\gamma)$.
	
	\paragraph{Virtual termination and \texttt{done} handling.}
	Constraint events do \emph{not} reset the simulator; ``termination'' is purely objective-level through $\alpha(s,a)$.
	Environment termination is handled normally via \texttt{done}: for true terminal transitions we disable bootstrapping by multiplying the bootstrap term by $(1-\texttt{done})$. (Time-limit truncations may still bootstrap, using the standard timeout mask.)
	
	\paragraph{Compressed TD($n$) replay (engineering choice).}
	To avoid storing full sequences for per-step corrections, we use a compressed $n$-step format that precomputes the survival-shaped return within the rollout window. For each time $t$, define
	\begin{equation}
		R^{(n)}_{t,\tilde\gamma}
		\;\coloneqq\;
		\sum_{k=0}^{n-1} u_{t,k}\,\tilde r_{t+k},
		\qquad
		u_{t,k}:=\prod_{j=0}^{k-1}\tilde\gamma_{t+j},
		\qquad
		u_{t,n}:=\prod_{j=0}^{n-1}\tilde\gamma_{t+j}.
		\label{eq:compressed_nstep_app_v2}
	\end{equation}
	The replay buffer stores $(s_t,a_t,R^{(n)}_{t,\tilde\gamma},s_{t+n},u_{t,n},\texttt{done}_{t+n})$.
	The TD($n$) target is
	\begin{equation}
		y_t
		=
		R^{(n)}_{t,\tilde\gamma}
		+
		(1-\texttt{done}_{t+n})\,u_{t,n}\,
		\E_{a\sim\pi_\theta(\cdot\mid s_{t+n})}\big[Q_{\phi^-}(s_{t+n},a)\big],
		\label{eq:tdn_target_app_v2}
	\end{equation}
	where the expectation is approximated by averaging $Q_{\phi^-}(s_{t+n},a)$ over multiple action samples from the actor (as commonly done in MPO implementations).
	
	\paragraph{Remark (off-policy corrections).}
	Original MPO suggests sequence-based corrections (e.g.\ Retrace) for off-policy evaluation. Our compressed replay format does not retain the per-step information required for such corrections, so we do not apply them. Empirically, TD($n$) with survival-shaped $(\tilde r,\tilde\gamma)$ was stable in our settings.

    \newpage
    
    \section{Algorithmic Descriptions of VT-MPO and AS-SAC}
    We present the VT-MPO and AS-SAC implementation faithful pseudocode in Algorithm~\ref{alg:vt-mpo} and Algorithm~\ref{alg:as-sac}, respectively.
    We define parameters: cost limit $b$, hazard cap $p^{\max}$, EMA scalar $c_{\max}\leftarrow \varepsilon_c$, smoothing $\rho$; 
    violation $v_t=[c_t - c_{\mathrm{lim}}]_+$, stopping probability $\delta_t=p^{\max}\,\mathrm{clip}(\tfrac{v_t}{\max(c_{\max},\varepsilon_c)},0,1)$
    continuation probability $\alpha_t=1-\delta_t$, attenuated reward $\tilde r_t=\alpha_t r_t$, variable discount $\tilde\gamma_t=\gamma\alpha_t$.
    \begin{algorithm}[H]
    	\caption{VT-MPO (termination=Virtual; actor=MPO, critic=unregularized-survival-TD($n$))}
    	\label{alg:vt-mpo}
    	\begin{algorithmic}[1]
    		\STATE Initialize actor $\pi_\theta$, prior $\pi_0$, critic $Q_\phi$, target critic $Q_{\phi^-}\!\leftarrow Q_\phi$, replay buffer $\mathcal B$
    		\STATE Initialize rolling window of length $n$ for $(s,a,r,c,\alpha)$
    		\WHILE{training}
    		\STATE \textbf{Collect data:} run policy $\pi_0$ in the environment; observe $(s_t,a_t,r_t,c_t,s_{t+1},\texttt{done})$
    		\STATE Append $(s_t,a_t,r_t,c_t,\alpha_t)$ to the rolling window
    		\STATE If the rolling window has length $n$, compute and store in replay:
    		\STATE \hspace{1.5em} $R^{(n)}_{t,\tilde\gamma}=\sum_{k=0}^{n-1}(\prod_{j=0}^{k-1}\gamma\alpha_{t+j})\,(\alpha_{t+k}(r_{t+k}+\eta))$
    		\STATE \hspace{1.5em} $u_{t,n}=\prod_{j=0}^{n-1}\gamma\alpha_{t+j}$ and bootstrap state $s_{t+n}$
    		\STATE \hspace{1.5em} store $(s_t,a_t,R^{(n)}_{t,\tilde\gamma},c_t,s_{t+n},u_{t,n},\texttt{done},\log \pi(a_t|s_t))$ in $\mathcal B$
    		\STATE \textbf{Batch-EMA scaling (periodic):} sample replay costs, update $c_{\max}\leftarrow \rho\, c_{\max} + (1-\rho)[c-b]_+$
    		\STATE \textbf{Critic update:} sample minibatch from $\mathcal B$, regress
    		\STATE \hspace{1.5em} $Q_\phi(s_t,a_t)\leftarrow R^{(n)}_{t,\tilde\gamma}+(1-\texttt{done})\,u_{t,n}\,\E_{a\sim\pi(\cdot|s_{t+n})}[Q_{\phi^-}(s_{t+n},a)]$
    		\STATE \textbf{Actor update (standard MPO):} perform MPO E/M steps using $Q_{\phi^-}$ as score
    		\STATE \hspace{1.5em} (Implementation: MPO regression may use stacked states $\{s_t\}\cup\{s_{t+n}\}$ for improved coverage.)
    		\STATE Periodically update target networks $\phi^- \leftarrow \phi$, and optionally a slow-moving actor
    		\ENDWHILE
    	\end{algorithmic}
    \end{algorithm}
    
    \begin{algorithm}[H]
    	\caption{AS-SAC (termination=AbsorbingState; actor=SAC, critic=regularized-survival-TD(0) )}
    	\label{alg:as-sac}
    	\begin{algorithmic}[1]
    		\STATE Init actor $\pi_\theta$ (tanh-Gaussian), critics $Q_{\phi_1},Q_{\phi_2}$ and targets $\phi^-_i\!\leftarrow\phi_i$; replay $\mathcal B$ stores $(s,a,\tilde r,c,s',\tilde\gamma,\texttt{done})$
    		\STATE Init entropy temperature $\alpha_{\mathrm{ent}}$ (fixed)
    		\WHILE{training}
    		\STATE \textbf{Collect:} sample $a_t$ (random \textbf{if} $t<t_{\text{learn}}$, \textbf{else} $a_t\sim\pi_\theta(\cdot|s_t)$); step env $\to (r_t,c_t,s_{t+1},\texttt{term},\texttt{trunc})$
    		\STATE \textbf{if} truncated, set $s_{t+1}\leftarrow s^{\text{final}}_{t+1}$; set $\texttt{done}_t\leftarrow\texttt{term}$
    		\STATE Compute $(\tilde r_{\eta,t},\tilde\gamma_t)$; store $(s_t,a_t,\tilde r_{\eta,t},c_t,s_{t+1},\tilde\gamma_t,\texttt{done}_t)$ in $\mathcal B$
    		\IF{$t\ge t_{\text{learn}}$}
    		\STATE \textbf{Critic:} sample batch from $\mathcal B$; sample $(a',\log\pi_\theta(a'|s'))$;\\
    		\STATE set $\hat Q^-(s',a')=\min_i Q_{\phi^-_i}(s',a')-\alpha_{\mathrm{ent}}(\log\pi_\theta(a'|s')-\mathcal{H}_\mathrm{tgt})$; set $y=\tilde r_{\eta}+(1-\texttt{done})\,\tilde\gamma\,\hat Q^-(s',a')$
    		\STATE Update $\phi_i$ by MSE to target $y$ (twin critics)
    		\STATE \textbf{Periodically update}
    		\STATE \quad\textbf{Actor:} $a_\pi\sim\pi_\theta(\cdot|s)$;  $\max_\theta\E[\min_i Q_{\phi_i}(s,a_\pi) - \alpha_{\mathrm{ent}}\log\pi_\theta(a_\pi|s)]$
    		\STATE \quad\textbf{if} \texttt{autotune} update $\alpha_{\mathrm{ent}}$ using loss $-\alpha_{\mathrm{ent}}(\log\pi_\theta(a_\pi|s)+\mathcal H_{\text{tgt}})$
    		\STATE \quad update target critics: $\phi^-_i \leftarrow \tau\phi_i+(1-\tau)\phi^-_i$
    		\STATE \quad\textbf{if} \texttt{CaT-scaler} update violation scale: $c_{\max}\leftarrow \rho\, c_{\max} + (1-\rho)[c-b]_+$
    		\ENDIF
    		\ENDWHILE
    	\end{algorithmic}
    \end{algorithm}

	\clearpage
	\newpage

    \part{Experimental Results}
    \parttoc 
	
    \section{Hyfydy Experiments}
    \label{app:hyfydy}

    This section details the Hyfydy experiments: full-formulation results achieving state-of-the-art realism, followed by diagnostic variants.
	
    \subsection{Environments, metrics, and evaluation protocol}
    \label{app:hyfydy-models-metrics}
	
    \paragraph{Musculoskeletal environments.} We evaluate on three humanoid musculoskeletal models simulated with Scone-Hyfydy~\citep{geijtenbeek2019scone,Geijtenbeek2021Hyfydy}. The models increase in dimensionality and actuation redundancy from H0918 to H2190, with H2190 being the hardest setting utilizing 90 muscle actuators. The models are among the most realistic options for studying gait with reinforcement learning, as they provide high simulation speed compared to e.g. OpenSim \citep{opensim}, while being significantly more realistic than, e.g., MuJoCo \citep{todorov2012mujoco,schumacher2025emergancenatural}. See \cref{tab:hyfydy-models-app} for an overview of key model properties. We generally use the \texttt{v1} version of the sconegym \citep{geijtenbeek2023sconegym} environments.
	
    \begin{table}[htb]
        \centering
        \caption{Hyfydy model overview. The models we use for our experiments span different complexity levels, ranging from 2D restricted models with only 9 muscles and neglecting trunk muscles, to full range of motion with 90 muscles and trunk muscles being modeled.}
        \label{tab:hyfydy-models-app}
        \begin{tabular}{lrrrcc}
            \toprule
            Model & Motion restriction & DOFs & Muscle count & Trunk muscles & Appearance\\
            \midrule
            H0918 & Sagittal plane only (2D)   & 9  & 18 & \ding{55} & \includegraphics[width=1cm,valign=m]{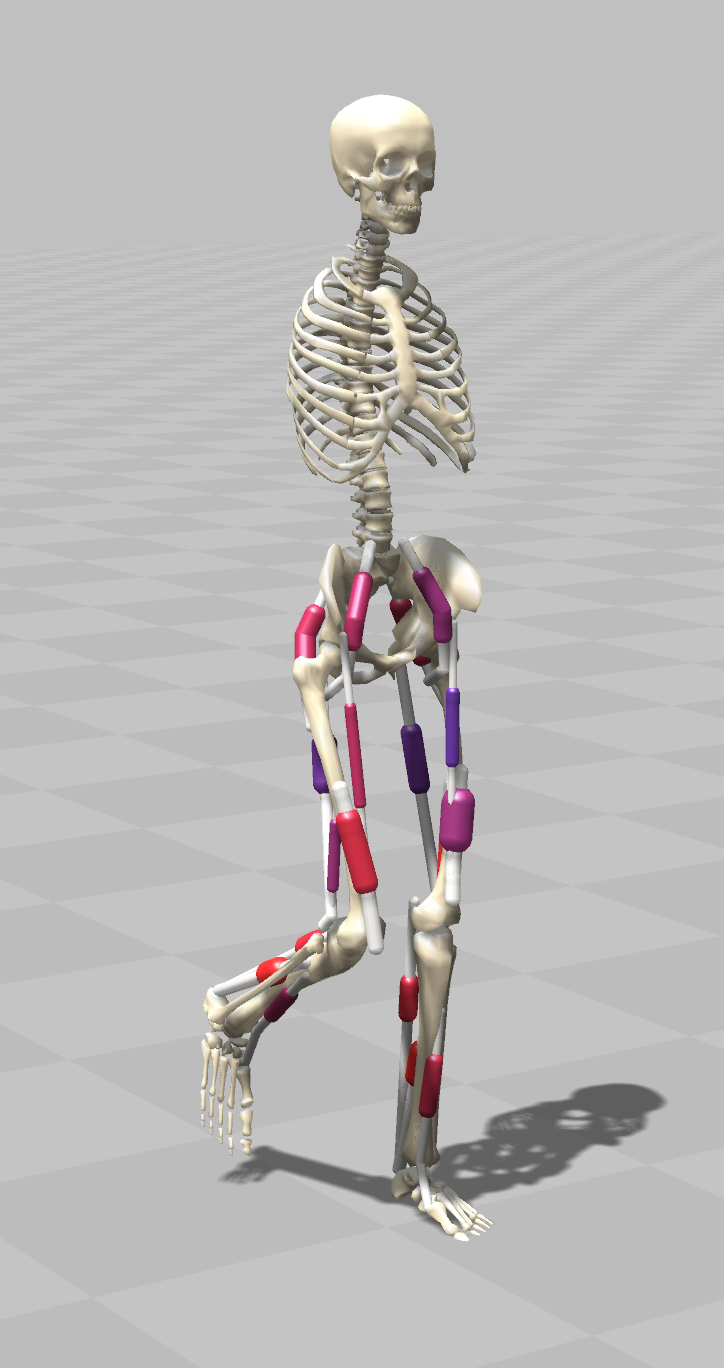}\\
            H1622 & None & 16 & 22 & \ding{55} & \includegraphics[width=1cm,valign=m]{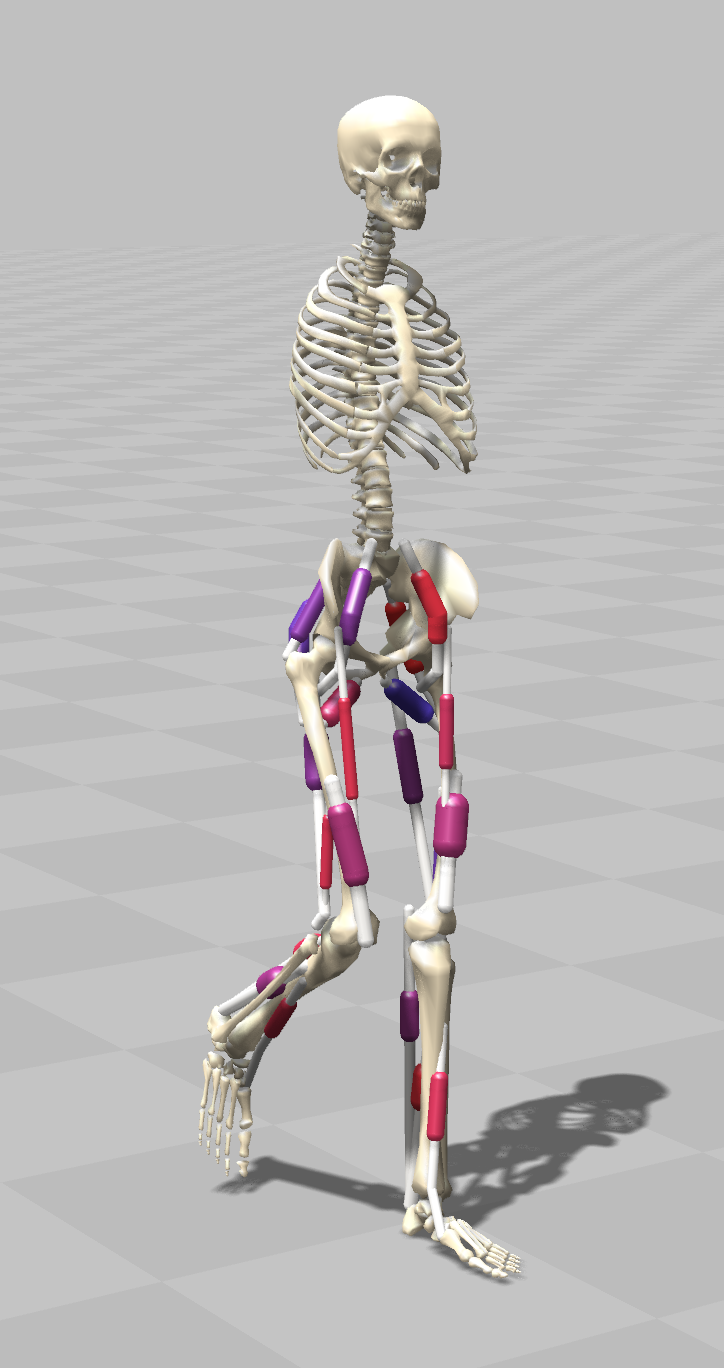}\\
            H2190 & None & 21 & 90 & \ding{51} & \includegraphics[width=1cm,valign=m]{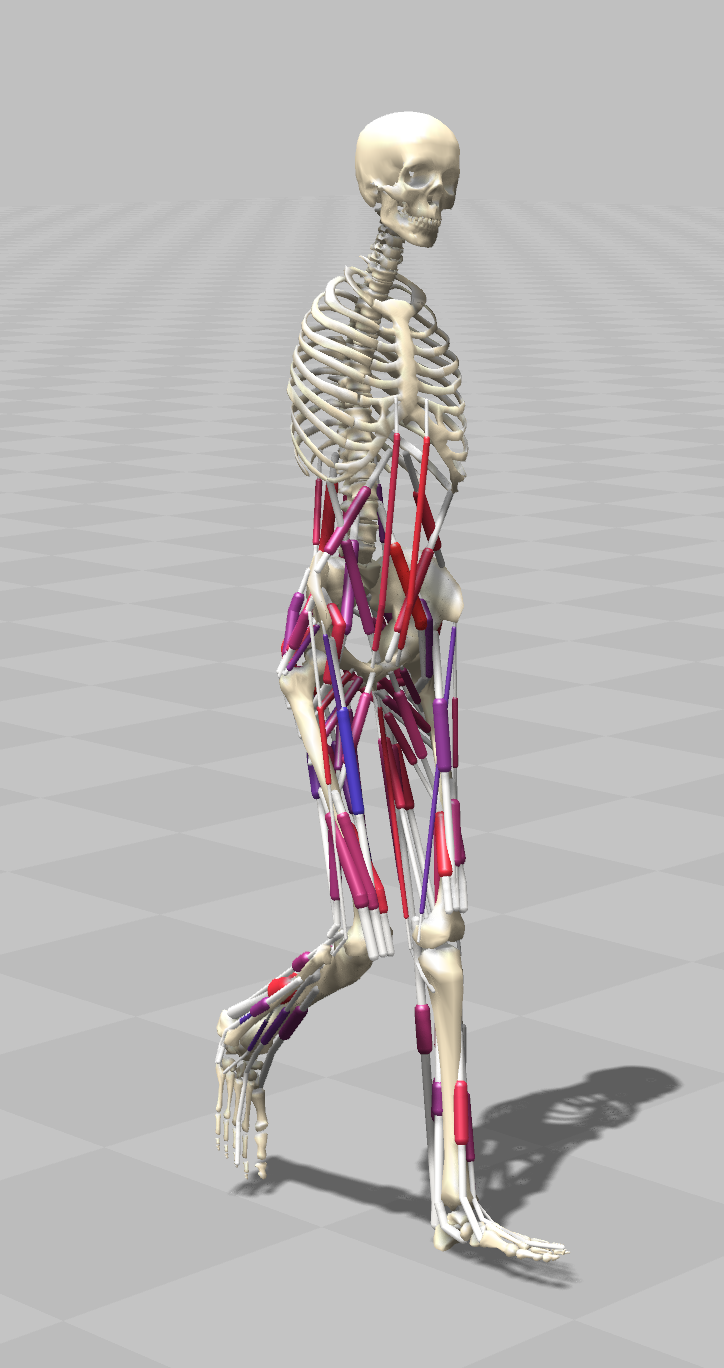}\\
            \bottomrule
        \end{tabular}
    \end{table}
	
    \paragraph{Metrics for gait characterization.}
    We consider the following metrics in our humanoid gait experiments:
    \begin{itemize}[leftmargin=*,itemsep=2pt,topsep=2pt]
        \item \textbf{Forward velocity.} Non-zero velocity is defining for the state of locomotion and thus essential for judging gait dynamics. When we speak about velocity, we refer to the $x$-component of the center of mass velocity of the model throughout this document. Note that we are targeting natural \emph{walking}, not running, in this work. This means that our goal is \emph{not} to maximize velocity, but rather to bring it to a range that is natural for human walking ($\approx \SI{1.2}{\meter\per\second}$ \citep{Ralston1958EnergyspeedRA}).
        \item \textbf{Effort.} We define $p$-effort (with $p \geq 2$) to be the muscle averaged $p$-th power of the muscle activation $a$:
        \begin{equation}
            \operatorname{effort}^p = \frac{1}{|\mathcal M|}\sum_{m \in \mathcal M} a_m^p,
            \label{eq:effort}
        \end{equation}
        where $\mathcal M$ is the model's set of muscles. The metric has a tight relationship to the metabolic energy consumption, but is notably \emph{not} equivalent. Effort penalizes high activations more than metabolic consumption rates would suggest, and reflects the disproportionate preference of humans for low activations to prevent muscle fatigue \citep{ackermann_optimality_2010}. Effort definitions often vary in which exponent of the effort ($p=2,3,4,\dots$) they use, and it is debated which exponent provides the best optimization target. In order to stay consistent with Schumacher et al. \citep{schumacher2025emergancenatural}, who also optimize 3-effort but report 2-effort, we report effort with $p=2$ in our visualizations.
        \item \textbf{Episode length.} Number of simulated control steps (each with a time duration of \SI{0.025}{\second}) before environment termination or timeout (1000 steps). Higher values indicate stable sustained gait. We consider a gait to be stable, if the average attained episode length is higher than 900 steps (\SI{22.5}{\second}).
        \item \textbf{Ground Reaction Forces (GRF).} The force the ground exerts back on the foot when the foot is pushed into the ground. Humans choose motion patterns that lack peaks in ground reaction forces, making it a key metric for gait naturalness \citep{veerkamp_evaluating_2021,falisse_rapid_2019}. Gait simulations, in contrast, tend to produce unrealistically high ground reaction forces \citep{schumacher2025emergancenatural, bunz_spinal_2025}. One hypothesis for this is that soft tissues, which are mostly not modeled in musculoskeletal simulations, help dampen peaks in the force signal \citep{zelik_human_2010}, making peaks more likely in models which lack them. Notably, our models lack soft tissues entirely.
        \item \textbf{Joint limit violation.} The total torque exerted by joint limits. Model joints are constrained to anatomically plausible ranges of motion; when a joint reaches its limit, the simulator applies a restoring force to prevent further excursion. Humans select movement patterns that avoid forceful contact with their joint limits, as repeated or excessive limit forces risk joint damage.
        \item \textbf{Human gait alignment.} We report the gait match percentage as calculated in SCONE Studio's gait analysis \citep{geijtenbeek2019scone}. For each evaluated joint, detected gait cycles are time-normalized and averaged into a single mean trajectory, which is then compared against a reference norm band derived from healthy human walking data (see green band in \cref{fig:cost_realism}). At each sample point, the excess outside the norm envelope is computed and normalized by the band width; the per-joint score is $100 \times \max(1 - \bar{e},\, 0)$, where $\bar{e}$ denotes the mean normalized excess. The overall alignment score is the arithmetic mean across all evaluated joints (Hip, Knee, Ankle) and the ground reaction force, yielding a value in $[0, 100]$ where higher values indicate closer agreement with human gait kinematics. We report values from our own Python reimplementation of the score. Numerical differences to SCONE Studio's implementation arise primarily from gait cycle detection and time discretization (we use uniform step sizes, whereas SCONE uses variable-step simulation time points). Per-joint scores can differ by up to ${\sim}15\%$, though overall alignment scores are consistent in magnitude. We note that the metric is not highly discriminative — it is best understood as a coarse indicator of gait realism rather than a precise ranking tool.
    \end{itemize}
	
    \paragraph{Training and evaluation protocol.}
    We train each method for 100 million environment steps and report mean and standard deviation over three random seeds (in selected plots mean, min and max). Evaluation is performed every 200000 steps using 10 deterministic rollouts. Our algorithms were implemented within the DEP-RL code base \cite{deprl2023github}, thus sharing many structural components with EWA, which is also part of the code base, and thus ensuring a maximally fair comparison. EWA runs were performed with the original implementation of Schumacher et al.\@. All methods use the same simulator settings, time limits, observation preprocessing, action bounds, and DEP exploration settings. The exact hyperparameters used are listed in \cref{tab:hyfydy-vtmpo-hparams-app}.
    
    \subsection{Schumacher-style constrained gait formulation}
    \label{app:hyfydy-full-formulation}
    We cast the problem solved by EWA \citep{schumacher2025emergancenatural} into our instantaneous constraint setting. We first recapitulate Schumacher et al.'s reward and the EWA reweighting procedure, which provides relevant context for RL based gait learning and motivates our continuation and reward design.

    \paragraph{Schumacher et al.'s reward function.}
    Schumacher et al.'s reward is used in the standard MDP setting with cumulative discounted returns. It consists of three components, each targeting a different aspect of natural gait:
    \begin{align}
        r = r_\text{vel} - c_\text{effort} - c_\text{pain}
        \label{eq:nat-rob-walk-reward}
    \end{align}
    where 
    \begin{align}
        r_\text{vel} &= \begin{cases}
            \exp\left[-(v_\text{com}-v_\text{target})^2\right] & \text{if} \quad v < v_\text{target} \\
            1 & \text{otherwise}
        \end{cases}
    \end{align}
    \begin{align*}
        c_\text{effort} = \alpha(t)\operatorname{effort}^3 + w_1 \sum_{m\in\mathcal M}(u - u_\text{prev})^2  + w_2 N_\text{active} \qquad
        c_\text{pain} = w_3 \sum_{j \in \mathcal J} \tau_j^\text{lim} + w_4 \sum_{l \in \mathcal L} F_l^\text{GRF}
    \end{align*}
    Here, $r_\text{vel}$ is the velocity objective, $c_\text{effort}$ combines several effort related objectives (3-effort, activation smoothness and activation sparseness), and $c_\text{pain}$ targets joint limit violations and ground reaction force, labeled jointly as \enquote{pain}. Individual symbols are defined in \cref{tab:reward-symbol-definitions}. Note that the weights $w_1$ to $w_4$ were found via extensive hyperparameter search in the original publication, targeting maximal human gait alignment.

    \begin{table}[htb]
        \centering
        \caption{Symbol definitions for reward and constraint specifications.}
        \label{tab:reward-symbol-definitions}
        \begin{tabular}{cp{10cm}}
            \toprule
            \textbf{Symbol} & \textbf{Definition} \\
            \midrule
            $v_\text{com}$ & $x$-velocity (forward direction) of the center of mass. \\
            $v_\text{target}$ & Target $x$-velocity. \\
            $\alpha(t)$ & Dynamically reweighted effort coefficient. \\
            $\operatorname{effort}^3$ & 3-effort (see \cref{eq:effort}) with third power activation exponent. \\
            $w_1, w_2, w_3, w_4$ & Objective weights, each representing a hyperparameter of the reward formulation. \\
            $\mathcal M, \mathcal J, \mathcal L$ & The sets of muscles, joints and legs of a model. \\
            $\mathcal H, \mathcal K, \mathcal A$ & Sets of hip, knee and ankle joints (containing the left and right joint each). \\
            $u_m, u_{m,\text{prev}}$ & Current and previous muscle excitation (policy output signal) for muscle $m$. This is \emph{not} the activation $a$, which is distinct from the control signal $u$ as excitations first get translated into activations via the activation dynamics of the model. \\
            $N_\text{active}$ & Number of currently active muscles. A muscle is considered to be active if its activation $a_m$ is greater than 0.15. \\
            $\tau_j^\text{lim}$ & Joint torque produced by model joint limit constraints of joint $j$. \\
            $F_l^\text{GRF}$ & Ground reaction force produced by leg $l$ measured in terms of body weight: $F/(M\cdot g)$. \\
            \bottomrule
        \end{tabular}
    \end{table}

    \paragraph{Effort Weight Adaptation (EWA).}
    EWA is a heuristic, adaptive reward scheduling procedure that resolves the tension between effort minimization and other gait objectives. Since effort optimality is trivially attained by a non-acting policy, it conflicts with every other objective. EWA balances these by dynamically adjusting the effort weight $\alpha(t)$ based on a moving average of the task return $R_{\text{mean}}$ (\cref{alg:ewa}).

    \begin{algorithm}[htb]
        \caption{Effort Weight Adaptation}
        \label{alg:ewa}
        \renewcommand{\algorithmiccomment}[1]{\hfill // #1}
        \begin{algorithmic}
            \STATE {\bfseries Input:} threshold $\theta$, smoothing $\beta$, adaptation rate $\Delta \alpha$, decay rate $\lambda$
            \STATE $R_{\text{mean}} \gets 0, \alpha \gets 0, c_{\text{mean}} \gets 0$
            \FOR{episodes $t = 0, 1, 2, \dots$}
            \STATE $R \gets \texttt{train\_episode()}$ \COMMENT{Train and get return $R$}
            \STATE $R_{\text{mean}} \gets \beta R_{\text{mean}} + (1 - \beta)R$ 
            \IF{$R_{\text{mean}} > \theta$ \AND $c_{\text{mean}} < 0.5$}
            \STATE $\Delta \alpha \gets \lambda \Delta \alpha$ \COMMENT{Convergence decay}
            \ELSIF{$R_{\text{mean}} > \theta$ \AND $c_{\text{mean}} > 0.5$}
            \STATE $\alpha(t+1) \gets \alpha(t) + \Delta \alpha$ \COMMENT{Increase effort pressure}
            \ELSE
            \STATE $\alpha(t+1) \gets \alpha(t) - \Delta \alpha$ \COMMENT{Relax effort pressure}
            \ENDIF
            \STATE $c_{\text{target}} \gets \mathbb{I}(R_{\text{mean}} > \theta)$
            \STATE $c_{\text{mean}} \gets \beta c_{\text{mean}} + (1 - \beta)c_{\text{target}}$
            \ENDFOR
        \end{algorithmic}
    \end{algorithm}

    We categorize EWA informally as being a \enquote{Lagrange style} constrained RL approach. This is because the return-based update logic of a sub-objective weight is reminiscent of the dual update in Lagrangian constrained RL in the CMDP setting. Notably, the adaptation logic of the \enquote{Lagrange multiplier} $\alpha(t)$ is more complicated than vanilla Lagrange dual updates, but modifications to Lagrange multiplier updates are commonly found in constrained RL literature, for example in PID Lagrangian RL \citep{stooke2020responsive}, where they smooth training dynamics. We interpret the logic in \cref{alg:ewa} as serving a similar purpose.

    \paragraph{Interpretation of Schumacher et al.'s gait learning procedure.} The structural similarity of EWA to primal-dual optimization suggests the following interpretation: EWA implicitly solves a \emph{constrained problem} where pain, velocity and non-$p$-effort terms form the \emph{constraint}, and $p$-effort is the \emph{objective}. In other words, behavior is constrained to resemble a natural gait, and the optimizer seeks the lowest-effort solution within this feasible set.

    \paragraph{SDH reformulation.} We now cast Schumacher et al.'s procedure into our SDH framework. According to our interpretation, effort must be the objective, which is why we choose the following reward function:
    \begin{equation}
        r = 1 - \frac{1}{|\mathcal M|}\sum_{m \in \mathcal M} \frac{a_m^3}{a_\text{max}^3},
        \label{eq:hyfydy-reward}
    \end{equation}
    where $a_\text{max}$ is the maximal possible activation, enforced on the model side to be $0.5$. Note that this ensures $r \in [0, 1]$, with 0 being achieved with maximal activation and 1 for zero activation. Other gait objectives are encoded as constraints:
    
    \begin{table}[ht]
        \centering
        \caption{Hyfydy constraint violation definitions with thresholds and normalizations. See \cref{tab:reward-symbol-definitions} for definitions of the individual symbols. Due to some models being able to control the lumbar torque, while others do not (see \cref{tab:hyfydy-models-app}), we use different limit torques depending on run and model. For H2190 we always use \SI{10}{\newton\meter}. For H1622 and H0918 we used \SI{80}{\newton\meter}, with the exception for our highest gait match H1622 run, which after tuning resulted in a \SI{40}{\newton\meter} threshold, but showed slightly diminished sample efficiency.}
        \label{tab:hyfydy-full-constraints-app}
        \begin{tabular}{llll}
            \toprule
            Constraint & Violation $c_i(s,a)$ & Threshold & Normalization\\\midrule 
            Minimum velocity & $\left[v_\text{min} - v\right]_+$ & \SI{1.085}{\meter\per\second} & $v_\text{min}$\\
            Maximum velocity & $\left[v - v_\text{max}\right]_+$ & \SI{1.315}{\meter\per\second} & $v_\text{max}$\\
            GRF & $\left[\max_{l \in \mathcal L} F_l^\text{GRF} - F_\text{max}\right]_+$ & $1.3$ & $1.0$\\
            Muscle count & $\left[N_\text{active} / |\mathcal M| - p_\text{max}\right]_+$ &  0.33 & $1.0$ \\
            Hip violation & $\left[\max_{h \in \mathcal H} \tau_{\text{hip},h}^\text{lim} - \tau^\text{lim}_{\text{hip},\text{max}}\right]_+$ & \SI{10}{\newton\meter}& $\tau^\text{lim}_{\text{hip},\text{max}}$ \\
            Knee violation & $\left[\max_{k \in \mathcal K} \tau_{\text{knee},k}^\text{lim} - \tau^\text{lim}_{\text{knee},\text{max}}\right]_+$ & \SI{80}{\newton\meter}& $\tau^\text{lim}_{\text{knee},\text{max}}$ \\
            Ankle violation & $\left[\max_{a \in \mathcal A} \tau_{\text{ankle},a}^\text{lim} - \tau^\text{lim}_{\text{ankle},\text{max}}\right]_+$ & \SI{20}{\newton\meter}& $\tau^\text{lim}_{\text{ankle},\text{max}}$ \\
            Lumbar torque & $\left[\tau_{\text{lumbar}}^\text{lim} - \tau^\text{lim}_{\text{lumbar},\text{max}}\right]_+$ & $\{10, 40, 80\}$\SI{}{\newton\meter}& $\tau^\text{lim}_{\text{lumbar},\text{max}}$ \\
            \bottomrule
        \end{tabular}
    \end{table}
    
    \paragraph{Constraint terms.}
    Let $c_i(s,a)\ge0$ denote nonnegative violation magnitudes. We map each of the non $p$-effort objectives to one or multiple of these violation magnitudes, summarized in \cref{tab:hyfydy-full-constraints-app}.
    
    Similarly to Schumacher et al. \citep{schumacher2025emergancenatural}, we performed an extensive hyperparameter search to find constraint thresholds which yield the highest human match percentage. Our problem formulation is the most sensitive to the knee torque limit. H1622 is uniquely sensitive to the lumbar torque limit. Unlike reward-based formulations, where penalty terms implicitly adapt to model morphology by summing over existing joints, explicit constraint thresholds must reflect model-specific biomechanics — e.g., models with active trunk muscles require fundamentally different lumbar torque limits than those without. The resulting thresholds are, however, physically interpretable (e.g., Nm of joint torque) rather than abstract reward weights.
    
    \paragraph{Continuation construction.}
    Given the per-constraint violations, we construct the SDH continuation probabilities $\alpha_i$, using the following exponential continuation design:
    \begin{equation}
        \label{eq:hyfydy-alpha-exp-perconstraint-app}
        \alpha_i(s,a)=\exp(-\lambda\,\bar c_i(s,a)),
    \end{equation}
    where $\bar c_i$ is the normalized violation and $\lambda$ is a hyperparameter. We explore the influence of $\lambda$ on our solutions in \cref{par:continuation}. As a default we choose $\lambda = 0.2$, which works reliably across models and experiments. The individual continuations are aggregated via a product:
    \begin{equation}
        \label{eq:hyfydy-alpha-aggregate-app}
        \alpha(s,a)
        =\prod_{i=1}^m\alpha_i(s,a)
        =\exp\!\left(-\lambda\sum_i \bar c_i(s,a)\right),
    \end{equation}
    allowing us to simply sum individual violations, to obtain a global one.
    
    \paragraph{Critic target.}
    For every replay transition, the critic target uses
    \begin{equation}
        \tilde r_t=\alpha(s_t,a_t)r(s_t,a_t),
        \qquad
        \tilde\gamma_t=(1-d_t)\gamma\alpha(s_t,a_t),
        \label{eq:hyfydy-shaped-target-app}
    \end{equation}
    with standard timeout handling for time-limit truncations. VT-MPO then uses the survival critic and the standard MPO E/M update, as described in \cref{app:vt-mpo}. We choose VT-MPO due to the higher stability observed in our ablations (\cref{app:hyfydy-diagnostics}).

    \subsection{Constrained gait learning results}
    \label{app:hyfydy-full-results}

    We now analyze the results of the Schumacher-style constrained gait formulation, focusing on constraint satisfaction, gait realism, and comparison with EWA.

    \paragraph{Constraint satisfaction and objective maximization.}
    \begin{figure}[htb]
        \centering
        \includegraphics[width=\linewidth]{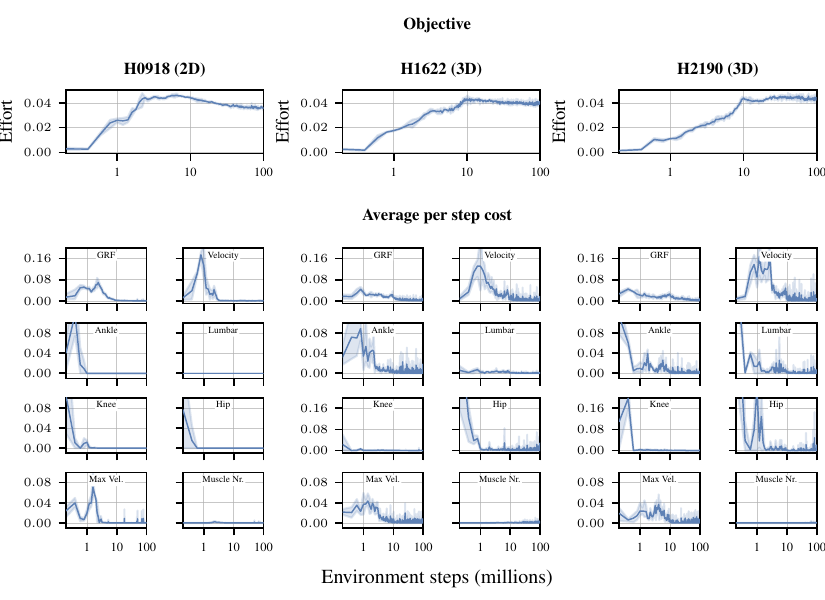}
        \caption{
            Objective and costs over the training for all three models. The plot shows mean and standard deviation over three seeds. We plot the $x$-axis logarithmically to emphasize early training dynamics, as later during the training costs and objective remain relatively constant. We see that the effort objective converges to low values, and the costs are reduced to near zero.
        }
        \label{fig:hyfydy-full-training-app}
    \end{figure}
    
    The training dynamics of selected gait metrics are visualized in \cref{fig:cost_realism}$\mathbf{a}$) and compared with EWA. The most notable features of our method are the \emph{speed} and \emph{stability} with which we converge to a solution. Effort reaches comparable results on the smaller models. On H2190 we see a slight gap in effort minimization. A possible explanation is that the solution EWA recovers is infeasible within our problem formulation — \cref{fig:cost_realism} shows that the EWA policy violates our constraints. There might not exist a feasible policy with equal effort as the EWA solution. 

    The sample efficiency curves of all cost and reward terms are visualized in \cref{fig:hyfydy-full-training-app}. The curves show that average per-step violations quickly converge to near zero across all models, while the effort reaches a plateau at a low effort level.
    
    \paragraph{Human gait alignment.} Both EWA and SDH aim to obtain human-like policies without demonstrations, relying only on biologically plausible objectives (and constraints). We demonstrate that SDH matches EWA in this regard. The evolution of gait match during training is shown in \cref{fig:hyfydy-gait-match}; the best checkpoints are listed in \cref{tab:gait_match}.
    
    \begin{figure}[htb]
        \centering
        \includegraphics[width=\linewidth]{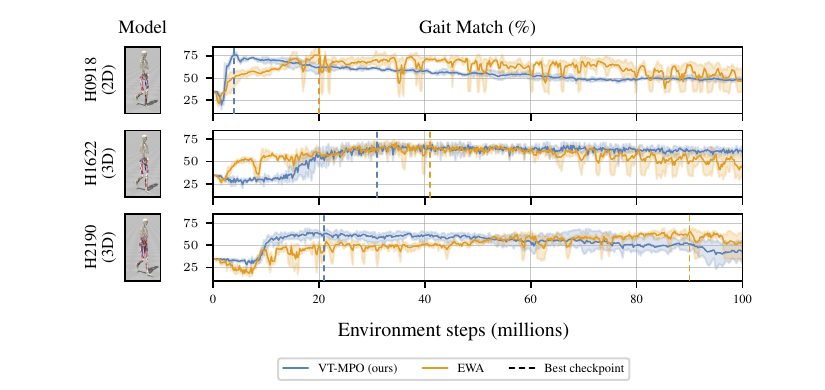}
        \caption{Emergent gait match for SDH and EWA. Note that gait match is \emph{not explicitly optimized} but rather emerges from the biologically plausible problem formulations of both EWA and SDH. Our instantaneous constraint formulations saturate the gait match percentage significantly faster than the adaptive reweighting procedure employed by EWA. Specifically for the highest complexity model, we reach our best checkpoints in only one fourth the time that EWA takes to reach it. The decaying tails reflect continued optimization of the respective problem formulations, which are not perfectly aligned with the gait match metric.}
        \label{fig:hyfydy-gait-match}
    \end{figure}
    
    \begin{table}[htb]
        \centering
        \caption{Best gait match percentage and corresponding training step (in millions) per model and method. Values are mean $\pm$ std across three seeds. Both methods saturate the gait match metric, with results lying within each other's standard deviation. Notably, highest match checkpoints are reached significantly earlier by VT-MPO.}
        \label{tab:gait_match}
        \begin{tabular}{l|cc|cc}
            \toprule
            & \multicolumn{2}{c}{VT-MPO (ours)} & \multicolumn{2}{c}{EWA} \\
            \cmidrule(lr){2-3} \cmidrule(lr){4-5}
            Model & Gait Match (\%) & Checkpoint (steps) & Gait Match (\%) & Checkpoint  (steps)\\
            \midrule
            H0918 (2D) & $75.9 \pm 2.5$ & 4 million & $75.8 \pm 7.1$ & 20 million\\
            H1622 (3D) & $70.1 \pm 1.7$ & 31 million & $69.3 \pm 4.0$ & 41 million\\
            H2190 (3D) & $62.9 \pm 2.1$ & 21 million & $64.6 \pm 4.7$ & 90 million\\
            \bottomrule
        \end{tabular}
    \end{table}

    A notable difference is the efficiency with which a high gait match checkpoint is reached: our method reaches highly realistic checkpoints in only $20$\% to $75$\% of the time it takes EWA. Both approaches show decaying tails in gait realism. Since neither method directly optimizes gait match, continued training naturally diverges from peak alignment as the policy further optimizes its respective problem formulation. The tails of SDH decay faster, likely because the stationary problem formulation allows more efficient optimization, which moves the solution away from peak gait match sooner.
    
    \paragraph{Interpretation of gait cycle curves in \cref{fig:cost_realism}$\mathbf{b}$).}
    \label{par:gait_cycle_intr}
    We consider knee and hip kinematics to be equally well matched by both SDH and EWA on all models. For the GRF and ankle dynamics, we see a different picture: EWA generally possesses better ankle kinematics, while the ground reaction forces are more realistic for SDH. We hypothesize that there is a trade-off between realistic ankle dynamics, and realistic ground reaction force patterns in the considered models, possibly representing a model limitation.

    As we constrain the ground reaction force for SDH, we hypothesize that the only feasible gait remains to be a toe first ground contact, which does not constitute a proper roll-off. This reduces the realism of the ankle kinematics. As discussed previously, a possible explanation is the missing soft tissue in the models, especially in the foot \citep{dhondt2024DynamicFootModel,buchmann2024EffectIncludingMobile}.

    \paragraph{Visual evaluation.} We provide comparative videos on our project page, and static rollout visualizations in \cref{fig:hyfydy-full-rollouts-app}. Both EWA and SDH yield visually comparable policies. For the H2190 model, we observe slightly less postural stability for EWA.
    
    \begin{figure}[htb]
        \centering
        \includegraphics[width=\linewidth]{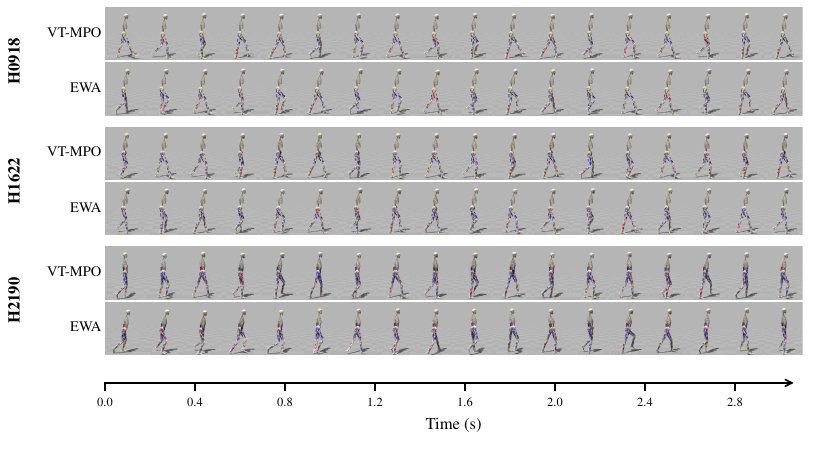}
        \caption{Representative full-formulation rollouts for H0918, H1622, and H2190 of SDH and EWA. Videos are available on the project page. We see no significant gap in visual realism between both approaches.}
        \label{fig:hyfydy-full-rollouts-app}
    \end{figure}

    \FloatBarrier
    \subsection{Diagnostic Variants}
    \label{app:hyfydy-diagnostics}
    
    The following experiments are diagnostic and not part of the main comparison to Schumacher et al. \citep{schumacher2025emergancenatural}. They isolate specific algorithmic properties of SDH under simplified problem formulations.

    \subsubsection{Velocity-only diagnostic setting}
    \label{app:hyfydy-velocity-only}

    The velocity-only formulation uses a single instantaneous constraint, omitting the additional biomechanical constraints (GRF, joint limits, muscle count) of the full formulation.
    
    While we retain the reward formulation in \cref{eq:hyfydy-reward}, we modify the cost to only consist of the velocity violation:
    \begin{equation}
    	c_\text{vel} = [v_\text{min}-v]_+.
    	\label{eq:velocity-only-diagnostic-app}
    \end{equation}
    This setting focuses on the core tension of gait optimization: the balancing of effort and velocity.
    
    \paragraph{AS-SAC vs. VT-MPO on Hyfydy.}
    We compare AS-SAC and VT-MPO on the velocity-only diagnostic formulation to separate the effect of the base algorithm from the problem design. The main empirical pattern is that VT-MPO is more robust on H2190, while AS-SAC can be more aggressive on easier models. AS-SAC fails learning to perform a walking gait on the H2190 model for one of the three seeds. See \cref{fig:cross_contin} for a comparison of sample efficiency curves.

    \begin{figure}[htb]
        \centering
        \includegraphics[width=\linewidth]{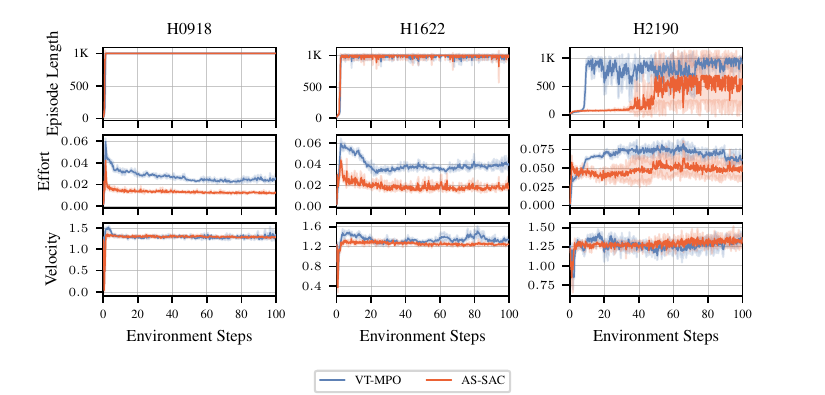}
        \caption{Sample efficiency curves for VT-MPO and AS-SAC on the velocity-only diagnostic problem formulation. Shown are mean and standard deviation over three seeds.}
        \label{fig:cross_contin}
    \end{figure}

    \begin{table}[htb]
    	\centering
    	\caption{Velocity-only diagnostic results at 100 million environment steps. Results reflect common knowledge about the tradeoff between MPO and SAC: SAC can optimize more aggressively, but is potentially unstable. MPO is more stable but peak performance (effort minimization) is lacking.}
    	\label{tab:hyfydy-cross-continuation-app}
    	\begin{tabular}{llccc}
    		\toprule
    		Model & Method & Effort $\downarrow$ & Episode length $\uparrow$ & Velocity \\
    		\midrule
    		H0918 & VT-MPO & $0.026\pm0.004$ & $1000.00\pm0.00$   & $1.352\pm0.076$ \\
    		H0918 & AS-SAC & $0.012\pm0.001$ & $1000.00\pm0.00$   & $1.279\pm0.020$ \\
    		\midrule
    		H1622 & VT-MPO & $0.040\pm0.003$ & $974.03\pm36.72$   & $1.346\pm0.023$ \\
    		H1622 & AS-SAC & $0.017\pm0.002$ & $1000.00\pm0.00$   & $1.251\pm0.010$ \\
    		\midrule
    		H2190 & VT-MPO & $0.061\pm0.007$ & $901.43\pm83.88$   & $1.318\pm0.040$ \\
    		H2190 & AS-SAC & $0.054\pm0.009$ & $622.93\pm394.02$  & $1.322\pm0.050$ \\
    		\bottomrule
    	\end{tabular}
    \end{table}

    An interesting side observation of these experiments is that effort reaches \emph{worse} values with the simplified, velocity-only formulation, compared to the full problem formulation (see \cref{fig:hyfydy-full-training-app}). This suggests that added constraints are synergistic with the effort objective, in the sense that they seem to steer the training in the right direction. Final values at 100 million environment steps are given in \cref{tab:hyfydy-cross-continuation-app}.

    To ensure fair comparison, and decouple algorithmic learning properties (SAC vs. MPO) from the influence of the problem design (EWA vs. SDH), we opted for VT-MPO for our final full problem formulation experiments, as EWA is also MPO based. While SAC might yield even faster convergence, we also expect a stability loss, which seems to be a downside on the H2190 model.
    
    \paragraph{PID-Lagrangian comparison.}
    For diagnostic comparisons, we evaluate SAC-PID-Lagrangian and MPO-PID-Lagrangian variants \cite{stooke2020responsive}. These baselines adapt a multiplier-like parameter using PID feedback on constraint violations. The goal of the comparison is to illustrate that training instability is a general downside of objective reweighting based approaches, such as Lagrange multiplier based ones. We implemented PID Lagrange ourselves in the DEP-RL codebase, and tuned PID parameters on the considered environments. For the PID baseline we designed a similar effort-velocity problem setting in the CMDP framework, where the reward equals \cref{eq:hyfydy-reward}, and the constraint is given by:
    \begin{equation}
        \mathbb{E}\left[\sum_{t=0}^\infty \gamma^t c_t\right] \leq 0 \qquad \text{where } \qquad c = v_\text{target} - v.
    \end{equation}
    
    As we have a cumulative constraint, not an instantaneous one, we choose $v_\text{target} = \SI{1.2}{\meter\per\second}$, the midpoint of the instantaneous velocity band used in our full formulation. The sample efficiency curves are contrasted with SDH based approaches in \cref{fig:pid-lag-hyfydy}.
    
    \begin{figure}[htb]
        \centering
        \includegraphics[width=\linewidth]{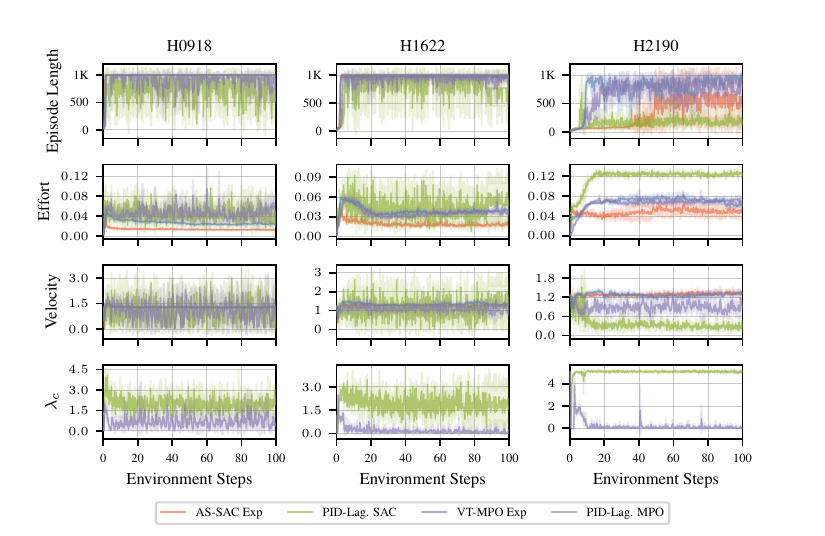}
        \caption{Comparison of PID Lagrangian based approaches (PID SAC, PID MPO) with SDH based approaches (AS-SAC, VT-MPO). Shown are mean and standard deviation for three seeds over 100 million environment steps.}
        \label{fig:pid-lag-hyfydy}
    \end{figure}

    There are three general patterns: PID SAC is more unstable than PID MPO, PID MPO and VT-MPO follow similar mean curve trajectories, and, most importantly: compared to SDH methods, even after tuning, PID methods still show significantly larger objective oscillations. This illustrates the difficulty of getting Lagrange based methods to work properly, clearly highlighting the practical benefits of our method.
    
    \paragraph{Continuation schedule and strength ablations.}
    \label{par:continuation}
    We report fixed-strength and no-schedule ablations to characterize sensitivity to $\lambda$. The purpose is to support the narrower claim that SDH replaces adaptive dual-variable feedback with a continuation-design choice; it is not tuning-free. 

    In our standard setup we use a linear rampup of the $\lambda$ parameter in the exponential continuation over the first 5 million (out of 100 million) environment steps. We ablated this design choice on a test run on H1622. The training curves are shown in \cref{fig:hyfydy-ramp-up}. Additionally, we evaluate the effects of choosing different $\lambda$ values for the exponential continuation. The results are shown in \cref{fig:lambda-p0-var}.
    
    \begin{figure}[htb]
        \centering
        \includegraphics[width=\linewidth]{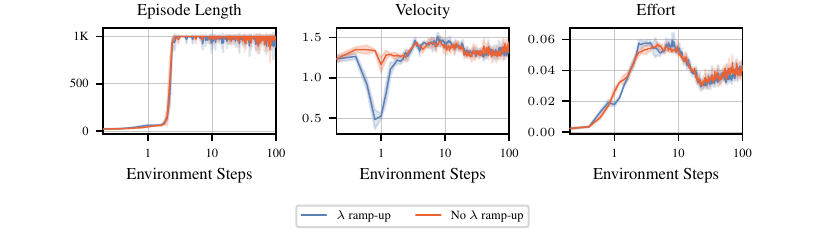}
        \caption{Ablation of initial $\lambda$ ramp-up. The $\lambda$ ramp-up has negligible effects on the training in the limit, only showing moderate influence in the very beginning of training.}
        \label{fig:hyfydy-ramp-up}
    \end{figure}
           
    \begin{figure}[htb]
        \centering
        \includegraphics[width=\linewidth]{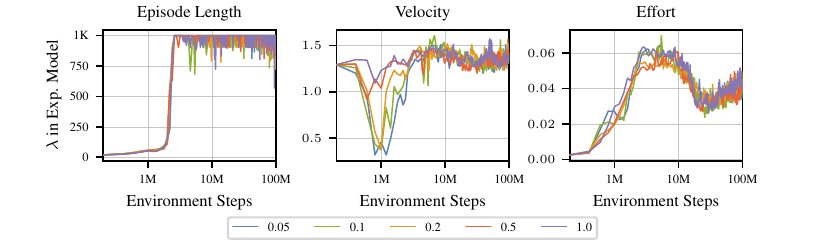}
        \caption{Runs on the velocity-only constraint formulation with SDH, for different choices of $\lambda$ in the exponential continuation on H1622. We see that training dynamics are only affected in the initial phases of training, with the limit behavior being remarkably robust to $\lambda$.}
        \label{fig:lambda-p0-var}
    \end{figure}

    The results show that SDH is generally not sensitive to the exact choice of continuation-related hyperparameters. This robustness is a clear upside of our method.
    
    \subsection{Hyperparameters}
    \label{app:hyfydy-hyperparameters}
    
    \begin{table}[h]
        \centering
        \caption{VT-MPO hyperparameters for Hyfydy.}
        \label{tab:hyfydy-vtmpo-hparams-app}
        \begin{tabular}{lc}
        	\toprule
        	Hyperparameter & Value \\
        	\midrule
        	Discount $\gamma$ & $0.99$ \\
        	Replay buffer size & $10^6$ \\
        	Batch size & $256$ \\
        	Batch iterations & $30$ \\
        	$n$-step returns & $1$ \\
        	Warmup steps & $2\times10^5$ \\
        	Steps between batches & $1\,000$ \\
        	Total training steps & $10^8$ \\
        	Parallel environments & $20$ \\
        	Sequential steps & $10$ \\
        	Network architecture & MLP$(256, 256)$ / MLP$(1024, 1024)$\textsuperscript{$\dagger$} \\
        	Actor learning rate & $3\times10^{-4}$ / $3.53\times10^{-5}$\textsuperscript{$\dagger$} \\
        	Critic learning rate & $10^{-3}$ / $6.08\times10^{-5}$\textsuperscript{$\dagger$} \\
        	Dual variable learning rate & $10^{-2}$ / $2.13\times10^{-3}$\textsuperscript{$\dagger$} \\
        	Action samples per state & $20$ \\
    		Continuation family & Exponential \\
    		$\lambda$ schedule & $[0,\,0.2]$ \\
            $\eta$ & $0$ \\
    		Schedule horizon & $5\times10^6$ \\
        	\bottomrule
        \end{tabular}

        \smallskip
        {\small $\dagger$\,H2190 uses MLP$(1024, 1024)$ and tuned learning rates (second value); H0918 and H1622 use MLP$(256, 256)$ with defaults (first value).}
    \end{table}
    
    \begin{table}[h]
    	\centering
    	\caption{DEP hyperparameters for Hyfydy.}
    	\label{tab:hyfydy-cont-dep-hparams-app}
    	\begin{tabular}{lc}
    		\toprule
    		Hyperparameter & Value \\
    		\midrule
    		DEP intervention probability & $0.00371$ \\
    		DEP intervention length & $8$ \\
    		DEP bias rate & $0.002$ \\
    		DEP buffer size & $200$ \\
    		DEP $\kappa$ & $1\,000$ \\
    		DEP regularization & $32$ \\
    		DEP $\tau_\text{DEP}$ & $40$ \\
    		\bottomrule
    	\end{tabular}
    \end{table}
    
    \subsection{Compute and runtime}
    \label{app:hyfydy-compute}
    
    SDH does not introduce additional environment interaction or policy rollouts relative to the underlying actor-critic method. The only algorithmic change is the critic target through $(\tilde r,\tilde\gamma)$ and the computation of $\alpha(s,a)$ from constraint signals. For transparency, we report the compute used in Table~\ref{tab:hyfydy-compute-app}; we do not claim a systematic wall-clock advantage.
    
    \begin{table}[h]
    	\centering
    	\caption{Hyfydy compute summary. All runs use NVIDIA RTX 2080 on our cluster. SDH modifies only the critic target and incurs no additional runtime overhead.}
    	\label{tab:hyfydy-compute-app}
    	\begin{tabular}{lc}
    		\toprule
    		Resource / setting & Value \\
    		\midrule
    		Hardware & NVIDIA RTX 2080 \\
    		Parallel environments & $20$ \\
    		Total environment steps & $10^8$ \\
    		Throughput H0918 (steps/s) & ${\sim}1386$ \\
    		Throughput H1622 (steps/s) & ${\sim}1278$ \\
    		Throughput H2190 (steps/s) & ${\sim}870$ \\
    		Wall-clock H0918 (h) & ${\sim}21$ \\
    		Wall-clock H1622 (h) & ${\sim}25$ \\
    		Wall-clock H2190 (h) & ${\sim}36$ \\
    		\bottomrule
    	\end{tabular}
    \end{table}
    
    \clearpage
    \newpage    
    \FloatBarrier
    \clearpage

    \section{Experiments on Safety Gymnasium}\label{app:results-safety-gymnasium}
    
    \subsection{Details on the Experimental Setup}
    We evaluate on a subset of 8 continuous-control tasks from Safety Gymnasium \citep{ji2023safetygymnasium}, which provide locomotion and navigation-style environments with standardized cost signals (e.g., hazards). 
    The benchmark consists of 4 locomotion and 4 navigation environments. 
    The full list can be seen in Table~\ref{table:safety-gymnasium-results}.
    We report average episodic returns of reward and cost per episode in the form of sample efficiency curves measured against the number of environment interactions. 
    Each training run is repeated for 5 seeds and the sample efficiency results are aggregated across seeds for each environment. 
    We compare against unconstrained MPO~\citep{abdolmaleki2018maximum}, constrained off-policy baselines MPO-PID~\cite{stooke2020responsive} and WCSAC~\cite{yang2021wcsac} as well as constrained on-policy baselines CPO~\citep{achiam2017cpo} and C-TRPO~\citep{milosevic2025embedding}. Hyperparameters are taken from the respective publications, as they evaluate on the same environments. VT-MPO and AS-SAC introduce one additional hyperparemeter $\lambda$ that controls the mapping from constraint violations to continuation probabilities. For a fair comparison, we fix a linear schedule $\lambda\in[0, 0.9]$ for the first $500k$ timesteps for all environments. All hyperparameters are summarized in Table~\ref{tab:hyperparameters}
    
    \begin{table}[h]
    	\centering
    	\caption{Hyperparameters for the off-policy algorithms on Safety Gymnasium.}
    	\label{tab:hyperparameters}
        \adjustbox{max width=\textwidth}{%
    	\begin{tabular}{lcccc}
    		\toprule
    		\textbf{Hyperparameter} & \textbf{AS-SAC} & \textbf{WCSAC} & \textbf{MPO-PID} & \textbf{VT-MPO} \\
    		\midrule
    		\multicolumn{5}{l}{\textit{General}} \\
    		Discount factor $\gamma$             & $0.99$           & $0.99$           & $0.99$           & $0.99$ \\
    		Replay buffer size                   & $10^6$           & $10^6$           & $10^6$           & $10^6$ \\
    		Batch size                           & $256$            & $256$            & $256$            & $256$ \\
    		Warmup steps                         & $5\,000$         & $5\,000$         & $1\,000$         & $1\,000$ \\
            Network architecture & MLP$(256, 256)$& MLP$(256, 256)$& MLP$(256, 256)$& MLP$(256, 256)$\\
    		\midrule
    		\multicolumn{5}{l}{\textit{SAC}} \\
    		Actor learning rate                  & $3\times10^{-4}$ & $3\times10^{-4}$ & —                & — \\
    		Critic learning rate                 & $10^{-3}$        & $10^{-3}$        & —                & — \\
    		Target smoothing $\tau$              & $5\times10^{-3}$ & $5\times10^{-3}$ & —                & — \\
    		Actor update frequency               & $2$              & $2$              & —                & — \\
    		Target network update frequency      & $1$              & $1$              & —                & — \\
    		Entropy coefficient $\alpha$         & adaptive         & $0$              & —                & — \\
    		\midrule
    		\multicolumn{5}{l}{\textit{MPO}} \\
    		Policy/critic learning rate          & —                & —                & $3\times10^{-4}$ & $3\times10^{-4}$ \\
    		Dual variable learning rate          & —                & —                & $10^{-2}$        & $10^{-2}$ \\
    		$n$-step returns                     & —                & —                & $4$              & $4$ \\
    		$\varepsilon$ (non-parametric KL)    & —                & —                & $0.1$            & $0.1$ \\
    		$\varepsilon_\mu$ (policy mean KL)   & —                & —                & $0.01$           & $0.01$ \\
    		$\varepsilon_\sigma$ (policy std KL) & —                & —                & $10^{-6}$        & $10^{-6}$ \\
    		$\varepsilon_\text{pen}$ (action limit KL) & —          & —                & $10^{-3}$        & $10^{-3}$ \\
    		Target/policy update period          & —                & —                & $100$            & $100$ \\
    		Action samples per state             & —                & —                & $20$             & $20$ \\
    		Gradient norm clip                   & —                & —                & $40$             & $40$ \\
    		\midrule
    		\multicolumn{5}{l}{\textit{Constraint}} \\
    		Cost limit                           & $25.0$           & $25.0$           & $25.0$           & $25.0$ \\
    		$\lambda$-schedule (0-500k steps)                & $[0, 0.9]$           & —                & —                & $[0, 0.9]$ \\
            SDH survival bonus $\eta$                & $0.1$           & —                & —                & $0.1$ \\
    		CVaR confidence level                & —                & $0.9$            & —                & — \\
    		Dual variable $\beta$ learning rate  & —                & $5\times10^{-4}$ & —                & — \\
    		PID gains $(K_P,\,K_I,\,K_D)$       & —                & —                & $(0.1,\,0.01,\,0.5)$ & — \\
    		\bottomrule
    	\end{tabular}}
    \end{table}

    \subsection{Main results on Safety Gymnasium} \paragraph{Aggregate performance.}
    Figure~\ref{fig:safety-aggregate-main} shows that VT-MPO achieves the most favorable aggregate return-violation trade-off.
    Compared to unconstrained MPO, both VT-MPO and AS-SAC substantially reduce cost while retaining competitive reward.
    Compared to the on-policy baselines, they reach similar or lower violation levels with substantially higher return.
    
    \paragraph{Per-environment behavior.}
    Figure~\ref{fig:sample_efficiency_envs} shows that this aggregate trend is consistent across most individual environments.
    VT-MPO and AS-SAC reliably suppress cost towards the end of training while improving reward rapidly early on, whereas CPO and C-TRPO are typically conservative early on and improve reward slowly, reflecting their on-policy nature.
    Overall, AS-SAC tends to achieve higher asymptotic return in environments where constraint satisfaction is maintained throughout training.
    VT-MPO, however, is more robust across tasks and seeds, which we attribute to MPO’s KL-regularized policy improvement under variable discounting.
    Finally, in two of the eight environments (\texttt{CarButton} and \texttt{PointGoal}), neither VT-MPO nor AS-SAC produce a feasible policy.
    
    \paragraph{Effect of survival weighting.}
    Across environments, decreasing the scale of the continuation probability $\alpha$ over training improves the return-violation trade-off.
    Early in learning, higher $\alpha$ allows exploration of infeasible regions and reuse of infeasible transitions.
    Later, reduced survival weighting attenuates long-horizon credit assignment through infeasible states, guiding convergence toward feasible behavior.
    This supports interpreting SDH as a continuous relaxation of hard constraints rather than a binary feasibility mechanism.
    
    \begin{table}[ht]
    	\caption{Expected returns (R) and costs (C) on Safety Gymnasium after one million environment steps. The cost limit is 25.0 mean cost return in all environments~\citep{ji2023safetygymnasium}. Violating entries are marked {\color{red}red}. The algorithm with the highest inter quartile mean return (25\%) among the feasible ones in an environment is marked in bold. The larger side of the confidence interval is denoted by $\pm$, rounded to one decimal. In \texttt{CarButton} and \texttt{PointGoal}, no algorithm yields a feasible policy in the prescribed number of environment interactions.}
    	\label{table:safety-gymnasium-results}
    	\setlength\tabcolsep{2pt}
    	\centering
        \adjustbox{max width=\textwidth}{%
    	\begin{tabular}{llllllllll}
    		\toprule
    		&  & Ant & Half    & Humanoid & Hopper & Car    & Point & Racecar & Point \\
    		&  &     & Cheetah &          &        & Button & Goal  & Circle  & Push \\
    		\midrule
    		\multirow[t]{2}{*}{VT-MPO} & $R$ & {829 ± 99} & {2021 ± 71} & {2683 ± 531} & {1293 ± 186} & {\color{red}15.5 ± 2.4} & {\color{red}15.6 ± 2.0} & {1.0 ± -0.0} & {\bfseries 1.8 ± 0.7} \\
    		(ours) & $C$ & 1.3 ± 0.1 & 1.6 ± 0.6 & 0.3 ± 0.7 & 0.2 ± 0.2 & {\color{red}303 ± 27} & {\color{red}59.6 ± 7.9} & 7.3 ± 6.8 & 13.4 ± 5.0 \\
    		\cline{1-10}
    		\multirow[t]{2}{*}{AS-SAC} & $R$ & {\bfseries 2779 ± 64} & {\bfseries 2608 ± 37} & {\bfseries 5549 ± 235} & {681 ± 289} & {\color{red}0.1 ± 3.4} & {\color{red}25.6 ± 0.6} & {\bfseries 26.6 ± 1.9} & {\color{red}-0.3 ± 0.4} \\
    		(ours) & $C$ & 0.4 ± 0.2 & 0.2 ± 0.1 & 0.0 ± 0.0 & 3.5 ± 3.1 & {\color{red}311 ± 170} & {\color{red} 57.6 ± 8.1} & 22.6 ± 11.6 & {\color{red}80.7 ± 44.7} \\
    		\cline{1-10}
    		\multirow[t]{2}{*}{C-TRPO} & $R$ & {1634 ± 198} & {1363 ± 182} & {857 ± 198} & {\bfseries 1364 ± 65} & {\color{red}-0.5 ± 0.4} & {\color{red}8.8 ± 4.9} & {1.4 ± 2.4} & {\color{red}0.2 ± 0.2} \\
    		& $C$ & 22.3 ± 9.7 & 17.4 ± 4.5 & 6.3 ± 2.7 & 8.4 ± 11.1 & {\color{red}32.1 ± 35.7} & {\color{red}28.8 ± 10.3} & 24.5 ± 15.9 & {\color{red}25.7 ± 19.1} \\
    		\cline{1-10}
    		\multirow[t]{2}{*}{CPO} & $R$ & {1522 ± 90} & {1163 ± 97} & {881 ± 79} & {1115 ± 28} & {\color{red}-0.8 ± 0.4} & {\color{red}8.2 ± 3.7} & {1.6 ± 2.1} & {0.4 ± 0.5} \\
    		& $C$ & 20.0 ± 5.8 & 18.2 ± 9.5 & 7.2 ± 1.8 & 18.8 ± 32.2 & {\color{red} 41.0 ± 6.8} & {\color{red} 32.8 ± 5.3} & 21.5 ± 32.4 & 18.3 ± 6.9 \\
    		\cline{1-10}
    		\multirow[t]{2}{*}{PPO} & $R$ & {44.6 ± 6.9} & {\color{red}1671 ± 758} & {506 ± 7} & {\color{red}491 ± 174} & {\color{red}16.8 ± 1.2} & {\color{red}25.2 ± 0.7} & {\color{red}35.0 ± 2.7} & {\color{red}0.7 ± 0.3} \\
    		& $C$ & 1.4 ± 0.3 & {\color{red}151 ± 266} & 1.3 ± 1.4 & {\color{red}135 ± 30} & {\color{red}333 ± 25} & {\color{red}50.6 ± 6.1} & {\color{red}194 ± 46} & {\color{red}52.8 ± 36.4} \\
    		\cline{1-10}
    		\multirow[t]{2}{*}{MPO} & $R$ & {\color{red}1136 ± 185} & {\color{red}8654 ± 126} & {\color{red}2395 ± 1088} & {\color{red}1643 ± 548} & {\color{red}26.0 ± 0.9} & {\color{red}28.0 ± 0.4} & {1.1 ± -0.0} & {\color{red}6.3 ± 2.6} \\
    		& $C$ & {\color{red}254 ± 74} & {\color{red}979 ± 0} & 1.2 ± 3.7 & {\color{red}410 ± 172} & {\color{red}250 ± 46} & {\color{red}56.5 ± 9.1} & {10.8 ± 13.2} & {\color{red}36.3 ± 13.3} \\
    		\cline{1-10}
    		\multirow[t]{2}{*}{MPO-PID} & $R$ & {\color{red}543 ± 64} & {\color{red}2498 ± 14} & {2500 ± 74} & {\color{red}1269 ± 222} & {\color{red}0.5 ± 0.4} & {\color{red}12.9 ± 1.9} & {1.0 ± 0.0} & {\color{red}2.4 ± 0.5} \\
    		& $C$ & {\color{red}33.7 ± 5.1} & {\color{red}47.9 ± 2.7} & 1.1 ± 0.2 & {\color{red}76.7 ± 14.2} & {\color{red}61.8 ± 17.2} & {\color{red}29.5 ± 13.6} & 9.8 ± 13.3 & {\color{red}30.0 ± 21.9} \\
    		\cline{1-10}
    		\multirow[t]{2}{*}{WCSAC} & $R$ & {-2301.4 ± 843.1} & {1824 ± 677} & {1153 ± 1609} & {885 ± 529} & {\color{red}-1.3 ± 0.1} & {\bfseries -0.1 ± 0.1} & {\color{red}25.8 ± 16.0} & {\color{red}-1.1 ± 0.9} \\
    		& $C$ & 0.1 ± 0.2 & 1.3 ± 0.6 & 0.4 ± 0.7 & 0.0 ± 0.0 & {\color{red}47.1 ± 20.0} & 0.0 ± 28.1 & {\color{red}140 ± 92} & {\color{red}46.8 ± 60.8} \\
    		\cline{1-10}
    		\multirow[t]{2}{*}{SafePO Best} & $R$ & {3297.3} & {3336.8} & {6620.7} & {1716.4} & { 0.1} & { 19.0} & {64.8} & {4.4} \\
    		Alg. @10M.& $C$ & 23.6 & 1.1 & 0.0 & 5.4 & 11.9 & 22.9 & 20.2 & 23.5 \\
    		\bottomrule
    	\end{tabular}}
    \end{table}
    
    \subsection{Fixed-$\lambda$ ablations}
    We demonstrate additional fixed-$\lambda$ ablations (no annealing) for exponential continuation with AS-SAC on a subset of environments in Table~\ref{tab:sgym-lambda-sweep}. The results demonstrate that SDH undergoes a ``phase transition'' in the environments \texttt{CarButton} and \texttt{PointGoal}, where the converged policy either satisfies the constraints with close-to-zero reward or achieves high reward by ignoring the constraint
    \begin{table}[h]
    	\centering
        \small
    	\caption{Expected returns (R) and costs (C) on Safety Gymnasium for \textbf{fixed-$\lambda$ exponential continuation} with AS-SAC after one million environment steps. The cost limit is 25.0 mean cost return in all environments~\citep{ji2023safetygymnasium}. Violating entries are marked {\color{red}red}. The algorithm with the highest inter quartile mean return (25\%) among the feasible ones in an environment is marked in bold. The larger side of the confidence interval is denoted by $\pm$, rounded to one decimal.}
    	\label{tab:sgym-lambda-sweep}
    	\begin{tabular}{lllll}
    		\toprule
    		&  & HalfCheetahVelocity & CarButton & PointGoal \\
    		\midrule
    		\multirow[t]{2}{*}{AS-SAC (lambda=0.005)} & $R$ & {\color{red}8797 ± 2271} & {\color{red}11.3 ± 1.9} & {\color{red}26.9 ± 0.4} \\
    		& $C$ & {\color{red}975 ± 4} & {\color{red}232 ± 75} & {\color{red}45.7 ± 3.1} \\
    		\cline{1-5}
    		\multirow[t]{2}{*}{AS-SAC (lambda=0.05)} & $R$ & {\bfseries 2918 ± 36} & {\color{red}-0.1 ± 0.3} & {\bfseries -0.3 ± 0.3} \\
    		& $C$ & 2.2 ± 0.5 & {\color{red}77.4 ± 84.4} & 22.9 ± 6.8 \\
    		\cline{1-5}
    		\multirow[t]{2}{*}{AS-SAC (lambda=0.1)} & $R$ & {2868 ± 64} & {\bfseries 0.0 ± 0.0} & {\color{red}-0.3 ± 0.1} \\
    		& $C$ & 1.2 ± 0.6 & 0.0 ± 0.0 & {\color{red}34.7 ± 20.5} \\
    		\cline{1-5}
    		\multirow[t]{2}{*}{AS-SAC (lambda=0.2)} & $R$ & {2817 ± 36} & {\color{red}0.5 ± 0.4} & {\color{red}-0.1 ± 0.2} \\
    		& $C$ & 0.6 ± 0.2 & {\color{red}143 ± 35} & {\color{red}55.9 ± 80.6} \\
    		\cline{1-5}
    		\multirow[t]{2}{*}{AS-SAC (lambda=0.5)} & $R$ & {2766 ± 57} & {\color{red}0.1 ± 0.3} & {\color{red}-0.1 ± 0.2} \\
    		& $C$ & 0.3 ± 0.1 & {\color{red}137 ± 108} & {\color{red}69.4 ± 4.1} \\
    		\cline{1-5}
    		\bottomrule
    	\end{tabular}
    \end{table}
    
    \subsection{Increased step budget}
    \Cref{tab:sgym-increased-budget} shows performance on the most challenging Safety Gymanasium tasks from our benchmark for an increased step budget of 3 million steps. Both algorithms have substantially improved performance after 3 million steps on \texttt{PointGoal}, while reward remains close to zero on \textit{CarButton} for our default setting.
    \begin{table}[h]
    	\centering
        \small
    	\caption{Expected returns (R) and costs (C) on the challenging \textit{CarButton} and \textit{PointGoal} environments in Safety Gymnasium with an \textbf{increased step budget of 3 million steps}. The cost limit is 25.0 mean cost return in all environments~\citep{ji2023safetygymnasium}. Violating entries are marked {\color{red}red}. The algorithm with the highest inter quartile mean return (25\%) among the feasible ones in an environment is marked in bold. The larger side of the confidence interval is denoted by $\pm$, rounded to one decimal.}
    	\label{tab:sgym-increased-budget}
    	\begin{tabular}{llll}
    		\toprule
    		&  & CarButton & PointGoal \\
    		\midrule
    		\multirow[t]{2}{*}{VT-MPO (ours)} & $R$ & {\color{red}0.8 ± 1.1} & {\color{red}17.7 ± 1.9} \\
    		& $C$ & {\color{red}58.8 ± 9.4} & {\color{red}30.3 ± 17.9} \\
    		\cline{1-4}
    		\multirow[t]{2}{*}{AS-SAC (ours)} & $R$ & {\bfseries -1.6 ± 0.6} & {\bfseries 18.1 ± 3.3} \\
    		& $C$ & 4.0 ± 9.6 & 23.3 ± 11.4 \\
    		\cline{1-4}
    		\bottomrule
    	\end{tabular}
    \end{table}
    
    \subsection{Evaluated AS baselines and ablations}\label{app:sac-ablations}
    We distinguish 4 AS-SAC implementation variants:
    (i) \textbf{AS-SAC (full)}, which optimizes the exact objective including the penalty term implied by the normalized uniform prior,
    (ii) \textbf{AS-SAC (naive critic)}, which \emph{incorrectly} removes the induced horizon regularization,
    (iii) \textbf{AS-SAC ($\kappa$ const.)} which implements the full critic but with a constant $\kappa$, and
    (iv) \textbf{AS-SAC (naive tuning)}, which implements the full critic but with the standard SAC-temperature tuning, which does not require additional critic training.
    The ``naive critic'' ablation directly tests the theoretical prediction that, under AS, the uniform-prior term is not a removable constant, and the temperature tuning ablations test the merit of the living-cost-weighted tuning procedure. Table \ref{table:safety-gymnasium-sac-ablation} demonstrates the superior performance of the naive-tuning baseline on 3 out of 8 environments. We hypothesize that this variant strikes a favorable balance between theoretical justification and gradient stability due to the constant entropy target. This is also the version we use in the main benchmark. The full version of AS-SAC is presented in Algorithm~\ref{alg:as-sac-full}, where deviations from Algorithm~\ref{alg:as-sac} are marked in red.
    
    \begin{table}[ht]
    	\caption{Expected returns (R) and costs (C) on Safety Gymnasium for the AS-SAC ablations after one million environment steps with cost limit 25.0 mean cost return~\citep{ji2023safetygymnasium}. The algorithm with the highest inter quartile mean return (25\%) among the feasible ones in an environment is marked in bold.}
    	\label{table:safety-gymnasium-sac-ablation}
    	\adjustbox{max width=\textwidth}{
    	\setlength\tabcolsep{2pt}
    	\centering
    	\begin{tabular}{llllllllll}
    		\toprule
    		&  & Ant & HalfCheetah & Humanoid & Hopper & CarButton & PointGoal & RacecarCircle & PointPush \\
    		\midrule
    		\multirow[t]{2}{*}{AS-SAC (full)} & $R$ & {2521 ± 179} & {2411 ± 117} & {4833 ± 257} & {733 ± 507} & {-0.4 ± 0.2} & {25.9 ± 0.7} & {0.0 ± 0.2} & {-0.3 ± 0.3} \\
    		& $C$ & 0.3 ± 0.3 & 0.3 ± 0.3 & 0.0 ± 0.6 & 0.0 ± 9.6 & 336 ± 182 & 50.2 ± 59.8 & 31.1 ± 52.4 & 74.0 ± 38.3 \\
    		\cline{1-10}
    		\multirow[t]{2}{*}{AS-SAC (naive critic)} & $R$ & {1895 ± 838} & {\bfseries 2722 ± 15} & {5314 ± 430} & {794 ± 196} & {3.3 ± 7.2} & {25.6 ± 0.3} & {26.2 ± 1.1} & {-0.6 ± 0.2} \\
    		& $C$ & 0.3 ± 0.1 & 0.4 ± 0.4 & 0.0 ± 0.4 & 3.1 ± 22.9 & 305 ± 67 & 64.0 ± 6.2 & 22.4 ± 36.9 & 86.9 ± 31.1 \\
    		\cline{1-10}
    		\multirow[t]{2}{*}{AS-SAC ($\kappa$ const.)} & $R$ & {2553 ± 18} & {2699 ± 17} & {5384 ± 343} & {\bfseries 1009 ± 50} & {-3.4 ± 1.5} & {-0.7 ± 0.3} & {0.1 ± 0.1} & {-0.6 ± 0.3} \\
    		& $C$ & 0.5 ± 0.2 & 0.3 ± 0.0 & 0.0 ± 0.2 & 1.1 ± 9.2 & 270 ± 160 & 115 ± 66 & 0.3 ± 2.3 & 27.9 ± 22.0 \\
    		\cline{1-10}
    		\multirow[t]{2}{*}{AS-SAC (naive tuning)} & $R$ & {\bfseries 2779 ± 64} & {2608 ± 37} & {\bfseries 5549 ± 235} & {681 ± 283} & {0.1 ± 3.4} & {25.6 ± 0.6} & {\bfseries 26.6 ± 1.9} & {-0.3 ± 0.5} \\
    		& $C$ & 0.4 ± 0.2 & 0.2 ± 0.1 & 0.0 ± 0.0 & 3.5 ± 3.1 & 311 ± 170 & 57.6 ± 8.1 & 22.6 ± 11.6 & 80.7 ± 42.4 \\
    		\cline{1-10}
    		\bottomrule
    	\end{tabular}}
    \end{table}
    
    \begin{figure}
    	\centering
    	\includegraphics[width=0.49\linewidth]{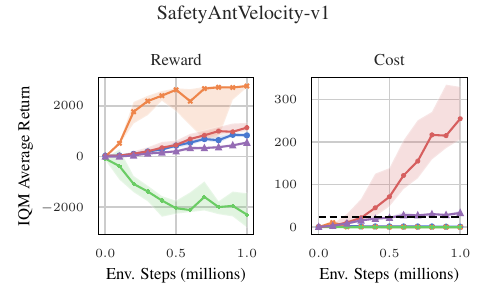}
    	\includegraphics[width=0.49\linewidth]{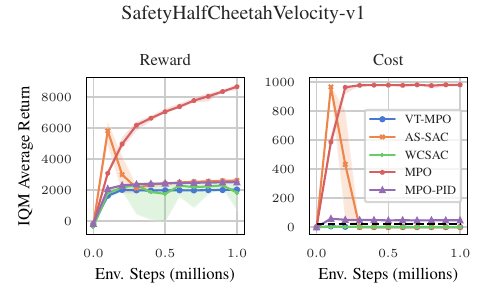}
    	\includegraphics[width=0.49\linewidth]{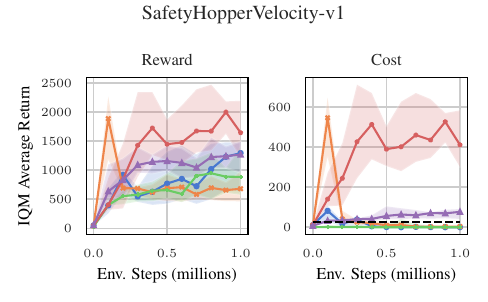}
    	\includegraphics[width=0.49\linewidth]{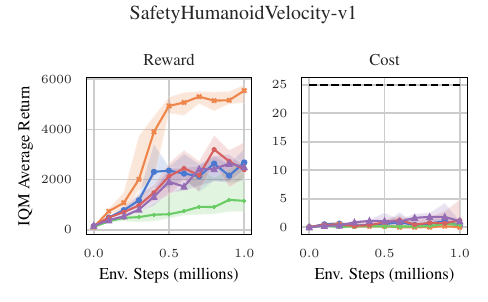}
    	\includegraphics[width=0.49\linewidth]{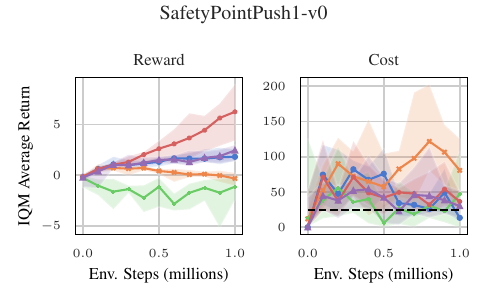}
    	\includegraphics[width=0.49\linewidth]{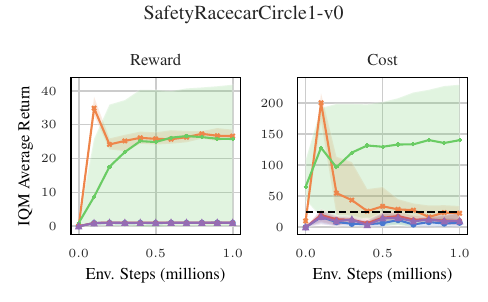}
    	\includegraphics[width=0.49\linewidth]{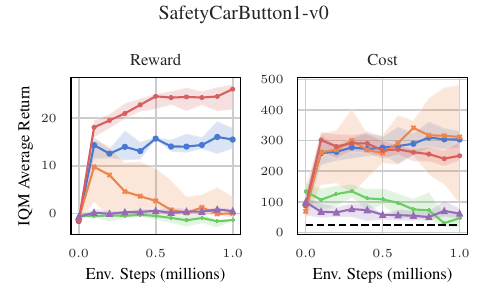}
    	\includegraphics[width=0.49\linewidth]{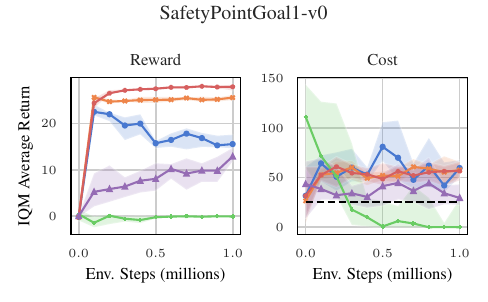}
    	\caption{Results on the Safety Gymnasium benchmark for individual environments.}
    	\label{fig:sample_efficiency_envs}
    \end{figure}

    \begin{algorithm}[H]
    	\caption{AS-SAC (full) (termination=Absorbing State; actor=SAC, critic=regularized-survival-TD(0) )}
    	\label{alg:as-sac-full}
    	\begin{algorithmic}[1]
    		\STATE Init actor $\pi_\theta$ (tanh-Gaussian), critics $Q_{\phi_1},Q_{\phi_2},\color{red} Q_\nu$ and targets $\phi^-_i,\nu^-\!\leftarrow\phi_i,\nu$; replay $\mathcal B$ stores $(s,a,\tilde r,c,s',\tilde\gamma,\texttt{done})$; {\color{red} living cost $\ell_c$}
    		\STATE Init entropy temperature $\alpha_{\mathrm{ent}}$ (autotune with $\mathcal H_{\text{tgt}}=-\dim(\mathcal A)$ or fixed)
    		\WHILE{training}
    		\STATE \textbf{Collect:} sample $a_t$ (random \textbf{if} $t<t_{\text{learn}}$, \textbf{else} $a_t\sim\pi_\theta(\cdot|s_t)$); step env $\to (r_t,c_t,s_{t+1},\texttt{term},\texttt{trunc})$
    		\STATE \textbf{if} truncated, set $s_{t+1}\leftarrow s^{\text{final}}_{t+1}$; set $\texttt{done}_t\leftarrow\texttt{term}$
    		\STATE Compute $(\tilde r_t,\tilde\gamma_t)$; store $(s_t,a_t,\tilde r_t,c_t,s_{t+1},\tilde\gamma_t,\texttt{done}_t)$ in $\mathcal B$
    		\IF{$t\ge t_{\text{learn}}$}
    		\STATE \textbf{Critic:} sample batch from $\mathcal B$; sample $(a',\log\pi_\theta(a'|s'))$;\\
    		{\color{red}
    			\STATE set $y=\tilde r+(1-\texttt{done})\,\tilde\gamma\,\min_i Q_{\phi^-_i}(s',a')$
    			\STATE set $z=\log\pi_\theta(a'|s')-\ell_c+(1-\texttt{done})\,\tilde\gamma\, Q_{\nu^-}(s',a')$
    		}
    		\STATE Update $\phi_i$ by MSE to target $y$ (with twin critics) {\color{red} and $\nu$ to target $z$}
    		\STATE \textbf{Periodically update}
    		\STATE \quad\textbf{Actor:} $a_\pi\sim\pi_\theta(\cdot|s)$;  $\max_\theta\E[\min_i Q_{\phi_i}(s,a_\pi) - \kappa\{\log\pi_\theta(a_\pi|s)+{\color{red}Q_\nu(s,a_\pi)}\}]$
    		\STATE \quad\textbf{if} \texttt{autotune} update $\kappa$ using loss $-\kappa\ Q_{\nu^-}(s',a')$
    		\STATE \quad update target critics: $\phi^-_i \leftarrow \tau\phi_i+(1-\tau)\phi^-_i$ and $\color{red}\nu^- \leftarrow \tau\nu+(1-\tau)\nu^-$
    		\STATE \quad\textbf{if} \texttt{CaT-scaler} update violation scale: $c_{\max}\leftarrow \rho\, c_{\max} + (1-\rho)[c-b]_+$
    		\ENDIF
    		\ENDWHILE
    	\end{algorithmic}
    \end{algorithm}

    \subsection{Compute and runtime}
    \label{app:safety-gym-compute}
    
    SDH does not introduce additional environment interaction or policy rollouts relative to the underlying actor-critic method. The only algorithmic change is the critic target through $(\tilde r,\tilde\gamma)$ and the computation of $\alpha(s,a)$ from constraint signals. However, ablations and additional experiments also used additional critics (2-critic AS-SAC decomposition; primal dual scheme). For transparency, we report the compute used in Table~\ref{tab:safety-gym-compute-app}; we do not claim a systematic wall-clock advantage.
    
    \begin{table}[h]
    	\centering
    	\caption{Safety Gymnasium compute summary. All runs use NVIDIA A100 on our cluster. SDH modifies only the critic target and incurs no additional runtime overhead.}
    	\label{tab:safety-gym-compute-app}
    	\begin{tabular}{lc}
    		\toprule
    		Resource / setting & Value \\
    		\midrule
    		Hardware & NVIDIA A100 \\
    		Parallel environments & $10$ ($1$ for MPO) \\
    		Total environment steps & $10^6$ \\
    		Throughput (steps/s) & ${\sim}90$ \\
            Wallclock time (h) & ${\sim}3$ \\
    		\bottomrule
    	\end{tabular}
    \end{table}

    \clearpage
    \newpage

    \part{Theoretical Foundations}
    \parttoc 

    \section{Survival Occupancy and Stochastic Survival Chance Constraints}
	\label{app:survival-chance-proofs}
	
	This appendix formalizes the chance-constrained interpretation of stochastic decision horizons. 
    The purpose is not to reinterpret SDH as an ordinary additive-budget CMDP. 
    Rather, we identify the precise constrained-control problem for which SDH is exact: a survival-gated chance problem evaluated at the same random time that represents discounting.
    
    Figure~\ref{fig:budget-vs-viability-app} gives the conceptual map for the theory.
    The central distinction is that SDH treats violations as instantaneous-(in)feasibility signals rather than entries in a residual-budget ledger.
    This changes both the objective and the constraint: reward is counted only on prefixes that survive to the geometrically sampled evaluation time, and feasibility is summarized by the same survival event. 
    The resulting formulation is therefore exact for a stochastic survival-chance problem, but not for arbitrary additive-budget CMDPs.
    The violation-depth profile makes this boundary precise: additive cost depends on the full area under the profile, whereas exponential SDH observes one Laplace-weighted slice. Thus SDH is predictive when violation geometry is effectively single-scale, and structurally mismatched when budget use is broad, history-dependent, or multi-scale.
    
    \begin{figure*}[ht]
        \centering
        \includegraphics[width=\linewidth]{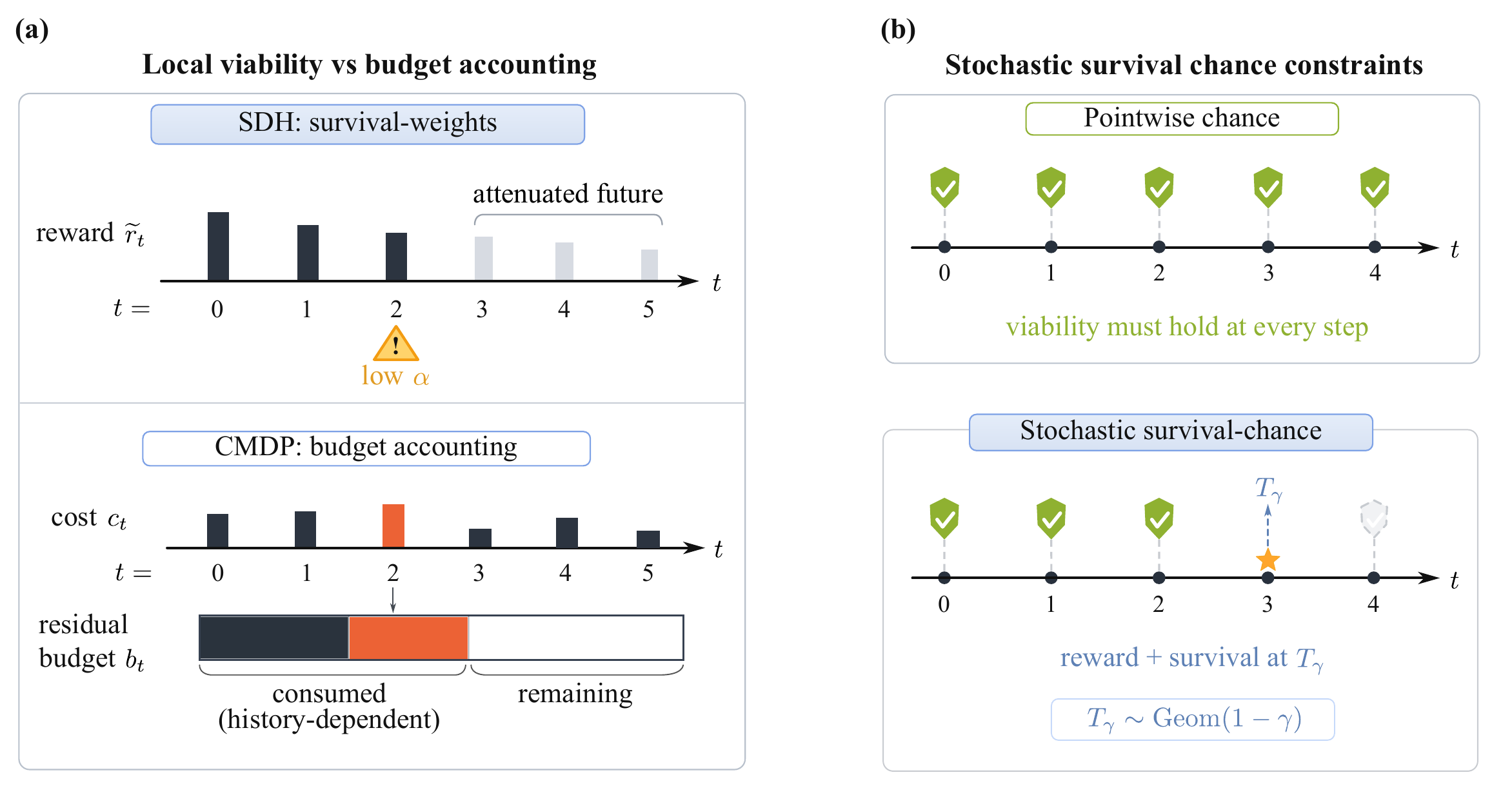}\\
        \includegraphics[width=\linewidth]{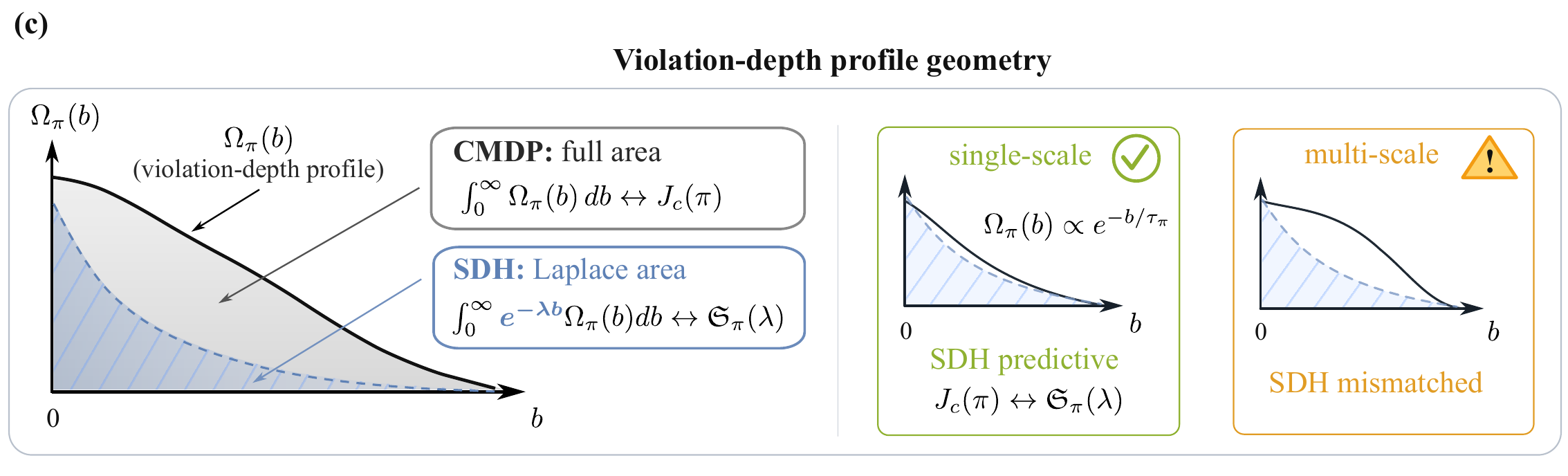}
        \caption{
            \textbf{Stochastic decision horizons as instantaneous-feasibility signals.}
            \textbf{(a)} CMDPs treat violations as additive cost in a history-dependent residual-budget ledger; SDH treats them as local continuation signals that attenuate reward and bootstrapping through $(\tilde r=\alpha r, \tilde\gamma=\gamma\alpha)$.
            \textbf{(b)} Unlike pointwise chance constraints, SDH evaluates reward and feasibility at the same geometrically sampled time $T_\gamma$, yielding a stochastic survival-chance objective.
            \textbf{(c)} The violation-depth profile explains the scope: additive cost is the full area under $\Omega_\pi(b)$, whereas exponential SDH observes one Laplace-weighted slice.
            Thus SDH is predictive in single-scale local-viability regimes, but not a general replacement for budget accounting.
        }
        \label{fig:budget-vs-viability-app}
    \end{figure*}
    
    The section has four steps. First, we relate SDH to previously studied survival chance constraints. Second, we show that SDH changes not only the
    constraint but also the objective: reward is counted only on surviving prefixes.
    Third, we prove the exact Lagrangian closure property of SDH. 
    Finally, we separate this formulation from additive-budget CMDPs and identify regimes where SDH is exact or informative.
	
	\subsection{Survival processes and chance constraints}
	
	We start with a formal survival process.
	
	\begin{definition}[Continuation model and survival process]
		\label{def:survival-process-app}
		Consider an infinite-horizon MDP
		$(\mathcal S,\mathcal A,P,r,\gamma)$ with $\gamma\in(0,1)$ and bounded reward.
		A continuation model is a measurable function
		$\alpha:\mathcal S\times\mathcal A\to[0,1]$.
		For a trajectory $\tau=(s_0,a_0,s_1,a_1,\ldots)$, define independent
		Bernoulli variables
		\begin{equation}
			E_t\mid\tau\sim \mathrm{Bernoulli}(\alpha(s_t,a_t)),
			\qquad
			B_t:=\prod_{k=0}^{t}E_k .
			\label{eq:survival-process-def-app}
		\end{equation}
		The event $B_t=1$ indicates the stochastic survival process has survived through the reward at time $t$.
	\end{definition}
	
	For a policy $\pi$, write $s_t(\pi):=\Pr_\pi(B_t=1)$.  A direct survival
	chance-constrained control problem requires these probabilities to remain high
	at every horizon.
	
	\begin{definition}[Pointwise survival chance problem]
		\label{def:pointwise-survival-chance-app}
		Given tolerances $(\delta_t)_{t\ge0}$, the pointwise survival chance problem is
		\begin{equation}
			\begin{aligned}
				\max_\pi\quad
				&J_\gamma(\pi)
				:=
				\mathbb E_\pi\!\left[\sum_{t\ge0}\gamma^t r(s_t,a_t)\right]
				\\
				\mathrm{s.t.}\quad
				&\Pr_\pi(B_t=1)\ge 1-\delta_t,
				\qquad t=0,1,2,\ldots .
			\end{aligned}
			\label{eq:pointwise-survival-chance-app}
		\end{equation}
	\end{definition}
	
	This formulation is multi-scale: it attaches a distinct feasibility requirement
	to every time horizon.  Discounted control, however, has a natural random-time
	representation.
	
	\begin{definition}[Discounted random time]
		\label{def:discounted-random-time-app}
		Let $T_\gamma$ be independent of the trajectory and survival process, with
		\begin{equation}
			\Pr(T_\gamma=t)=(1-\gamma)\gamma^t,
			\qquad t=0,1,2,\ldots .
			\label{eq:discounted-random-time-app}
		\end{equation}
	\end{definition}
	
	The random-time survival probability is exactly the geometrically weighted
	average of the pointwise survival probabilities.
	
	\begin{proposition}[Pointwise survival implies random-time survival]
		\label{prop:pointwise-implies-random-time-app}
		Let $w_t=(1-\gamma)\gamma^t$.  If
		$\Pr_\pi(B_t=1)\ge1-\delta_t$ for every $t\ge0$, then
		\begin{equation}
			\Pr_\pi(B_{T_\gamma}=1)
			\ge
			1-\sum_{t\ge0}w_t\delta_t .
			\label{eq:pointwise-random-time-bound-app}
		\end{equation}
		Thus the pointwise constraints imply the random-time survival constraint
		$\Pr_\pi(B_{T_\gamma}=1)\ge\rho$ whenever
		$\rho\le1-\sum_{t\ge0}(1-\gamma)\gamma^t\delta_t$.
		The converse does not hold in general.
	\end{proposition}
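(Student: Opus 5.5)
The plan is to condition on the independent random time $T_\gamma$ and use the law of total probability. Since $T_\gamma$ is independent of the trajectory and of the survival variables $(E_t)$, we have
\[
\Pr_\pi(B_{T_\gamma}=1)=\sum_{t\ge0}\Pr(T_\gamma=t)\,\Pr_\pi(B_t=1\mid T_\gamma=t)=\sum_{t\ge0}w_t\,\Pr_\pi(B_t=1).
\]
This is the identity the paper calls ``geometrically weighted average''. Substituting the pointwise bounds $\Pr_\pi(B_t=1)\ge1-\delta_t$ termwise, which is legitimate because all $w_t\ge0$, and then using $\sum_t w_t=1$ gives $\Pr_\pi(B_{T_\gamma}=1)\ge 1-\sum_t w_t\delta_t$. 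Exchanging sum and bound needs no care, since all terms are bounded and $\sum_t w_t<\infty$. The implication for $\rho\le 1-\sum_t w_t\delta_t$ then follows immediately by transitivity. I would also note that if $\sum_t w_t\delta_t$ diverges or exceeds $1$, the bound is vacuous but still true.

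For the failure of the converse, I will give an explicit counterexample, and this is the only step requiring a choice. I would take a single absorbing state, one action, and constant continuation $\alpha\equiv a\in(0,1)$. Then $\Pr(B_t=1)=a^{t+1}$, and hence $\Pr(B_{T_\gamma}=1)=(1-\gamma)a/(1-\gamma a)$. Choose $\rho$ equal to this value, so the random-time constraint holds with equality. Next choose tolerances that demand more than is achievable at some horizon, for example $\delta_t=\delta$ constant with $1-\delta>a^{t+1}$ for large $t$. The pointwise constraint then fails, because $a^{t+1}\to0$, while the random-time one is satisfied.

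One must also check that this $\rho$ is compatible with the threshold $1-\sum_t w_t\delta_t=1-\delta$. To do so, take $\delta$ small enough that $1-\delta$ exceeds the achieved value. This shows that random-time survival at a level $\rho$ below that threshold does not force the pointwise constraints. A cleaner statement of non-implication is simply: there exist $\pi$, $(\delta_t)$ and $\rho$ with $\Pr_\pi(B_{T_\gamma}=1)\ge\rho$ but $\Pr_\pi(B_t=1)<1-\delta_t$ for some $t$. The constant-$\alpha$ example gives exactly this, with any $\rho\le(1-\gamma)a/(1-\gamma a)$ and any constant $\delta<1$.

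I expect no real obstacle. The only subtle point is stating precisely what ``converse'' means, since the averaged constraint can never control tail horizons, where the weights $w_t$ vanish. The counterexample exploits precisely that.
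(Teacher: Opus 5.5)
Your first part is the paper's proof: independence of $T_\gamma$ gives $\Pr_\pi(B_{T_\gamma}=1)=\sum_t w_t\Pr_\pi(B_t=1)$, and the bound then follows termwise.

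The gap is in the non-converse. The converse worth refuting is the converse of \eqref{eq:pointwise-random-time-bound-app}: $\Pr_\pi(B_{T_\gamma}=1)\ge 1-\sum_t w_t\delta_t$ implies $\Pr_\pi(B_t=1)\ge 1-\delta_t$ for all $t$. A counterexample must therefore \emph{meet} the random-time bound at this matched level while violating some pointwise constraint. You do the opposite: you take $\delta$ small enough that $1-\delta$ exceeds the achieved value $(1-\gamma)a/(1-\gamma a)$. Your policy then \emph{fails} the random-time bound at the matched level, so it is not a counterexample to that converse. What you do refute is that random-time survival at some $\rho$ strictly below the threshold forces the pointwise constraints. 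That statement is trivially false (let $\rho\downarrow0$). Your ``cleaner statement'' decouples $\rho$ from $(\delta_t)$ and likewise carries no content.

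The repair is to reverse your parameter choice. Fix $\delta\in(0,1)$ and take $a\in[(1-\delta)/(1-\gamma\delta),1)$. Then $(1-\gamma)a/(1-\gamma a)\ge 1-\delta=1-\sum_t w_t\delta_t$, while $a^{t+1}<1-\delta$ for large $t$.

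The paper uses a deterministic chain with $\alpha=1$ before time $m$ and $\alpha=0$ at time $m$. There $\Pr_\pi(B_{T_\gamma}=1)=1-\gamma^m\to1$, while every pointwise constraint with $\delta_t<1$ fails for all $t\ge m$. So random-time survival arbitrarily close to one gives no control over tail horizons. This is the intuition in your last paragraph, but your concrete parameter choice did not use it.

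Some positive tolerance is genuinely needed: if $\delta_t\equiv0$, the converse does hold, since every $w_t>0$.
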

	
	\begin{proof}
		Independence of $T_\gamma$ gives
		$\Pr_\pi(B_{T_\gamma}=1)=\sum_{t\ge0}w_t\Pr_\pi(B_t=1)$.
		The lower bound follows immediately from the pointwise constraints.
		
		For non-converseness, fix $m$ and consider a deterministic one-action
		trajectory with $\alpha(s_t,a_t)=1$ for $t<m$ and
		$\alpha(s_m,a_m)=0$.  Then survival is certain before $m$ and impossible from
		$m$ onward.  Hence the pointwise constraints can fail for all $t\ge m$, while
		$\Pr_\pi(B_{T_\gamma}=1)=1-\gamma^m$, which can be arbitrarily close to one.
	\end{proof}
	
	This proposition identifies the constraint-side simplification made by SDH:
	it replaces infinitely many horizon-wise survival requirements by one discounted
	survival requirement.  This is not yet the full SDH formulation, because the
	objective is also changed.
	
	\subsection{The objective mismatch}
	
	The pointwise chance problem in Def.~\ref{def:pointwise-survival-chance-app}
	maximizes ordinary discounted reward.  In random-time form, ordinary discounted
	reward evaluates $r(s_{T_\gamma},a_{T_\gamma})$.  SDH instead evaluates reward
	only if the trajectory has survived until $T_\gamma$.
	
	\begin{definition}[Survival-gated random-time reward]
		\label{def:survival-gated-reward-app}
		The survival-gated random-time reward of a policy $\pi$ is
		\begin{equation}
			J_{\mathrm{surv}}(\pi)
			:=
			\mathbb E_\pi[
			B_{T_\gamma}r(s_{T_\gamma},a_{T_\gamma})
			].
			\label{eq:survival-gated-reward-app}
		\end{equation}
	\end{definition}
	
	\begin{proposition}[Objective mismatch]
		\label{prop:sdh-objective-mismatch-app}
		For every policy $\pi$,
		\begin{equation}
			(1-\gamma)J_\gamma(\pi)
			-
			J_{\mathrm{surv}}(\pi)
			=
			\mathbb E_\pi[
			(1-B_{T_\gamma})r(s_{T_\gamma},a_{T_\gamma})
			].
			\label{eq:objective-mismatch-app}
		\end{equation}
		If $0\le r(s,a)\le R$ for every state-action pair $(s,a)$, then
		\begin{equation}
			0
			\le
			(1-\gamma)J_\gamma(\pi)
			-
			J_{\mathrm{surv}}(\pi)
			\le
			R\,\Pr_\pi(B_{T_\gamma}=0).
			\label{eq:objective-mismatch-nonnegative-app}
		\end{equation}
	\end{proposition}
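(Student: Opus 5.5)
The plan is to write the ordinary discounted return in the same random-time form as $J_{\mathrm{surv}}$, and then subtract the two. The identity follows from linearity; the bound follows from $0\le 1-B_{T_\gamma}\le 1$ together with $0\le r\le R$.

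First, we establish the random-time representation
\[
(1-\gamma)J_\gamma(\pi)=\mathbb E_\pi[r(s_{T_\gamma},a_{T_\gamma})].
\]
Because $T_\gamma$ is independent of the trajectory and the survival process (Def.~\ref{def:discounted-random-time-app}), we can condition on $T_\gamma=t$. This gives
\[
\mathbb E_\pi[r(s_{T_\gamma},a_{T_\gamma})]=\sum_{t\ge0}(1-\gamma)\gamma^t\,\mathbb E_\pi[r(s_t,a_t)].
\]
Exchanging the sum and the expectation is justified by Fubini, since $r$ is bounded and the weights are summable. The right-hand side is exactly $(1-\gamma)J_\gamma(\pi)$. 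The same conditioning shows that $J_{\mathrm{surv}}(\pi)=\sum_t(1-\gamma)\gamma^t\,\mathbb E_\pi[B_t r(s_t,a_t)]$ is a well-defined finite quantity.

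Second, we subtract $J_{\mathrm{surv}}(\pi)=\mathbb E_\pi[B_{T_\gamma}r(s_{T_\gamma},a_{T_\gamma})]$ from this expression. Both expectations are finite, so linearity immediately yields \eqref{eq:objective-mismatch-app}.

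Third, for the bounds we use that $B_{T_\gamma}\in\{0,1\}$, being a product of Bernoulli variables. Under $0\le r\le R$ this gives the pointwise sandwich
\[
0\le(1-B_{T_\gamma})\,r(s_{T_\gamma},a_{T_\gamma})\le R\,\mathbf 1\{B_{T_\gamma}=0\}.
\]
Taking expectations gives $0\le\cdot\le R\Pr_\pi(B_{T_\gamma}=0)$, which is \eqref{eq:objective-mismatch-nonnegative-app}.

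There is no real obstacle in this argument. The only point needing care is the first step: the interchange of sum and expectation, and the role of independence of $T_\gamma$ in reducing the random-time expectation to the discounted sum. Both are routine under bounded rewards.
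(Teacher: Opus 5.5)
Your proposal is correct and follows the paper's proof: the random-time identity $(1-\gamma)J_\gamma(\pi)=\mathbb E_\pi[r(s_{T_\gamma},a_{T_\gamma})]$, subtraction of $J_{\mathrm{surv}}(\pi)$, and the pointwise sandwich $0\le(1-B_{T_\gamma})r(s_{T_\gamma},a_{T_\gamma})\le R(1-B_{T_\gamma})$. The only difference is that you spell out the independence-plus-Fubini justification of the random-time identity, which the paper simply invokes.
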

	
	\begin{proof}
		The random-time identity for discounted reward gives
		$(1-\gamma)J_\gamma(\pi)=\mathbb E_\pi[r(s_{T_\gamma},a_{T_\gamma})]$.
		Subtracting Def.~\ref{def:survival-gated-reward-app} proves
		\eqref{eq:objective-mismatch-app}.  If $0\le r\le R$, then
		$(1-B_{T_\gamma})r(s_{T_\gamma},a_{T_\gamma})$ is bounded between zero and
		$R(1-B_{T_\gamma})$, giving \eqref{eq:objective-mismatch-nonnegative-app}.
	\end{proof}
	
	Thus SDH is not merely a relaxation of pointwise chance constraints.  It makes a
	matched pair of changes: survival is evaluated at the discounted random time,
	and reward is gated by the same survival event.  This matching is exactly what
	produces the Lagrangian closure property below.
	
	\subsection{Survival occupancy and the exact SDH chance problem}\label{app:survival_chance_problem}
	
	We now define the invariant optimized by SDH.
	
	\begin{definition}[Survival-weighted occupancy]
		\label{def:survival-occupancy-app}
		For a bounded measurable test function $f$, define
		\begin{equation}
			\mu_\pi^\alpha(f)
			:=
			\mathbb E_\pi
			\left[
			\sum_{t\ge0}
			\gamma^t
			\left(
			\prod_{k=0}^{t}\alpha(s_k,a_k)
			\right)
			f(s_t,a_t)
			\right].
			\label{eq:survival-occupancy-app}
		\end{equation}
		The SDH return is $\mu_\pi^\alpha(r)$.
	\end{definition}
	
	The corresponding Bellman representation uses shaped reward
	$\tilde r(s,a)=\alpha(s,a)r(s,a)$ and variable discount
	$\tilde\gamma(s,a)=\gamma\alpha(s,a)$.  Since
	$\tilde\gamma(s,a)\le\gamma$, the induced policy-evaluation operator remains a
	$\gamma$-contraction for bounded rewards.
	
	\begin{theorem}[Survival occupancy identity]
		\label{thm:survival-occupancy-identity-app}
		For every bounded measurable $f$,
		\begin{equation}
			(1-\gamma)\mu_\pi^\alpha(f)
			=
			\mathbb E_\pi[
			B_{T_\gamma}f(s_{T_\gamma},a_{T_\gamma})
			].
			\label{eq:survival-occupancy-identity-app}
		\end{equation}
		In particular,
		$(1-\gamma)\mu_\pi^\alpha(1)=\Pr_\pi(B_{T_\gamma}=1)$.
	\end{theorem}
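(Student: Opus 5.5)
We condition on the trajectory and carry out three averagings in turn: over the independent survival Bernoullis, over the independent geometric time $T_\gamma$, and finally over the trajectory, using Fubini/dominated convergence to justify each interchange.

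\textbf{Step 1 (survival gate).} Fix a trajectory $\tau$. By Def.~\ref{def:survival-process-app}, the $E_k$ are conditionally independent given $\tau$, with $E_k\sim\mathrm{Bernoulli}(\alpha(s_k,a_k))$. Hence for each fixed $t$,
\[
\mathbb E[B_t\mid\tau]=\prod_{k=0}^{t}\alpha(s_k,a_k).
\]

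\textbf{Step 2 (random time).} By Def.~\ref{def:discounted-random-time-app}, $T_\gamma$ is independent of both the trajectory and the survival process. Conditioning on $T_\gamma=t$ therefore gives
\[
\mathbb E_\pi\bigl[B_{T_\gamma}f(s_{T_\gamma},a_{T_\gamma})\bigr]
=\sum_{t\ge0}(1-\gamma)\gamma^t\,\mathbb E_\pi\bigl[B_t f(s_t,a_t)\bigr].
\]
This is the same decomposition used in the proof of Prop.~\ref{prop:pointwise-implies-random-time-app}.

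\textbf{Step 3 (tower property).} For each term, apply the tower property conditioning on $\tau$. Since $f(s_t,a_t)$ is $\tau$-measurable, Step 1 yields
\[
\mathbb E_\pi\bigl[B_t f(s_t,a_t)\bigr]
=\mathbb E_\pi\Bigl[\Bigl(\prod_{k=0}^t\alpha(s_k,a_k)\Bigr)f(s_t,a_t)\Bigr].
\]

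\textbf{Step 4 (sum–expectation swap).} It remains to exchange the sum over $t$ with the trajectory expectation, which recovers $(1-\gamma)\mu^\alpha_\pi(f)$ as in Def.~\ref{def:survival-occupancy-app}. This exchange is the only step needing care. Because $|f|\le\|f\|_\infty$ and $0\le\alpha\le1$, the summands are dominated by $\|f\|_\infty\gamma^t$, which is summable for $\gamma<1$. Fubini–Tonelli, applied to $f^+$ and $f^-$ separately, therefore justifies the swap, and it also shows that all expectations are finite. Measurability of $\alpha$ and $f$ ensures that every integrand is measurable on the joint space of $(\tau,E,T_\gamma)$.

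\textbf{Special case.} Taking $f\equiv1$ gives $(1-\gamma)\mu^\alpha_\pi(1)=\mathbb E_\pi[B_{T_\gamma}]=\Pr_\pi(B_{T_\gamma}=1)$, since $B_{T_\gamma}\in\{0,1\}$.

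The main obstacle is mild. One must construct the joint probability space properly, with the $E_k$ conditionally independent given $\tau$ and $T_\gamma$ independent of everything else, so that the two conditionings in Steps 1–3 are legitimate. Once that space is in place, the rest is routine bookkeeping.
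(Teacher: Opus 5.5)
Your proof is correct and takes the same approach as the paper: condition on the trajectory to get $\mathbb E[B_t\mid\tau]=\prod_{k=0}^{t}\alpha(s_k,a_k)$, then average over the independent geometric time $T_\gamma$. Your Step 4 domination argument (summands bounded by $\|f\|_\infty\gamma^t$) spells out the sum--expectation interchange that the paper leaves implicit.
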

	
	\begin{proof}
		Conditioned on the trajectory, the survival variables are independent and
		$\mathbb E[B_t\mid\tau]=\prod_{k=0}^{t}\alpha(s_k,a_k)$.  Averaging over the
		geometric time $T_\gamma$ gives
		\[
		\mathbb E_\pi[
		B_{T_\gamma}f(s_{T_\gamma},a_{T_\gamma})
		]
		=
		(1-\gamma)
		\mathbb E_\pi
		\left[
		\sum_{t\ge0}
		\gamma^t
		\left(
		\prod_{k=0}^{t}\alpha(s_k,a_k)
		\right)
		f(s_t,a_t)
		\right],
		\]
		which is exactly \eqref{eq:survival-occupancy-identity-app}.  Taking $f\equiv1$ gives the survival
		probability identity.
	\end{proof}
	
	The exact constrained problem associated with SDH can now be stated.
	
	\begin{definition}[Survival-gated random-time chance problem]
		\label{def:survival-gated-random-time-problem-app}
		Given a survival threshold $\rho\in[0,1]$, the SDH chance problem is
		\begin{equation}
			\begin{aligned}
				\max_\pi\quad
				&
				\mathbb E_\pi[
				B_{T_\gamma}r(s_{T_\gamma},a_{T_\gamma})
				]
				\\
				\mathrm{s.t.}\quad
				&
				\Pr_\pi(B_{T_\gamma}=1)\ge\rho .
			\end{aligned}
			\label{eq:survival-gated-random-time-problem-app}
		\end{equation}
	\end{definition}
	
	This formulation evaluates feasibility and reward at the same random decision
	time.  That shared timing is the structural reason the Lagrangian remains inside
	the SDH objective class.
	
	\begin{theorem}[Exact SDH Lagrangian]
		\label{thm:survival-lagrangian-app}
		The Lagrangian of Def.~\ref{def:survival-gated-random-time-problem-app}
		is
		\begin{equation}
			\mathcal L(\pi,\eta)
			=
			(1-\gamma)\mu_\pi^\alpha(r+\eta)-\eta\rho,
			\qquad \eta\ge0 .
			\label{eq:survival-lagrangian-app}
		\end{equation}
		Consequently, for fixed $\eta$, maximizing the Lagrangian over policies is
		equivalent to maximizing an SDH return with shifted reward $r+\eta$.
	\end{theorem}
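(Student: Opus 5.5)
The plan is to write down the standard Lagrangian for the inequality-constrained problem in Def.~\ref{def:survival-gated-random-time-problem-app} and then rewrite both terms via the survival occupancy identity, Thm.~\ref{thm:survival-occupancy-identity-app}. Since the constraint reads $\Pr_\pi(B_{T_\gamma}=1)-\rho\ge0$ and we maximize, the Lagrangian with multiplier $\eta\ge0$ is
\[
\mathcal L(\pi,\eta)=\mathbb E_\pi[B_{T_\gamma}r(s_{T_\gamma},a_{T_\gamma})]+\eta\bigl(\Pr_\pi(B_{T_\gamma}=1)-\rho\bigr).
\]
With this sign convention $\min_{\eta\ge0}\mathcal L$ recovers the objective on feasible policies and $-\infty$ on infeasible ones, as required.

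The key step is to apply Thm.~\ref{thm:survival-occupancy-identity-app} twice. With $f=r$, which is bounded by assumption, the first term equals $(1-\gamma)\mu_\pi^\alpha(r)$. With $f\equiv1$ we get $\Pr_\pi(B_{T_\gamma}=1)=(1-\gamma)\mu_\pi^\alpha(1)$. The functional $f\mapsto\mu_\pi^\alpha(f)$ is linear in $f$ on bounded measurable functions. Indeed, the series defining it converges absolutely, since the weights satisfy $\gamma^t\prod_k\alpha\le\gamma^t$, so expectation and summation can be exchanged by dominated or Fubini convergence. Linearity lets us combine the two terms:
\[
(1-\gamma)\mu_\pi^\alpha(r)+\eta(1-\gamma)\mu_\pi^\alpha(1)=(1-\gamma)\mu_\pi^\alpha(r+\eta).
\]
This yields \eqref{eq:survival-lagrangian-app}.

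For the consequence, fix $\eta$. The term $-\eta\rho$ does not depend on $\pi$, and $(1-\gamma)>0$ is a positive constant. Hence $\arg\max_\pi\mathcal L(\pi,\eta)=\arg\max_\pi\mu_\pi^\alpha(r+\eta)$, and by Def.~\ref{def:survival-occupancy-app} the latter is exactly the SDH return with reward $r+\eta$. Equivalently, it is the survival critic of \eqref{eq:survival-bellman-main} with shaped reward $\alpha(r+\eta)$ and discount $\gamma\alpha$. Because $r+\eta$ remains bounded, the contraction remark preceding Thm.~\ref{thm:survival-occupancy-identity-app} still applies, so this is a well-posed SDH Bellman problem.

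The only step needing care is the sign convention and the justification of linearity and interchange of limits. Neither is a genuine obstacle: boundedness of $r$ and $\eta$ together with $\gamma<1$ makes everything absolutely summable. I would state the sign convention explicitly, so that $\eta$ is read as a survival bonus rather than a penalty.
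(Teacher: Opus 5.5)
Your proposal is correct and follows the paper's proof. You write the Lagrangian as $\mathbb E_\pi[B_{T_\gamma}r(s_{T_\gamma},a_{T_\gamma})]+\eta(\Pr_\pi(B_{T_\gamma}=1)-\rho)$, apply the survival occupancy identity with $f=r$ and $f\equiv1$, and combine the terms by linearity of $\mu_\pi^\alpha$; your remarks on the sign convention, absolute summability, and dropping $-\eta\rho$ and the positive factor $1-\gamma$ make explicit what the paper leaves implicit.
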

	
	\begin{proof}
		The Lagrangian is
		$\mathbb E_\pi[B_{T_\gamma}r(s_{T_\gamma},a_{T_\gamma})]
		+\eta(\Pr_\pi(B_{T_\gamma}=1)-\rho)$.
		Applying Thm.~\ref{thm:survival-occupancy-identity-app} with $f=r$ and
		with $f\equiv1$ gives
		$\mathbb E_\pi[B_{T_\gamma}r_{T_\gamma}]=(1-\gamma)\mu_\pi^\alpha(r)$ and
		$\Pr_\pi(B_{T_\gamma}=1)=(1-\gamma)\mu_\pi^\alpha(1)$.  Linearity of
		$\mu_\pi^\alpha$ then yields \eqref{eq:survival-lagrangian-app}.
	\end{proof}
	
	The dual variable $\eta$ is therefore a survival bonus, not an additive cost
	penalty.  At the Bellman level, the Lagrangian critic uses
	$\tilde r_\eta(s,a)=\alpha(s,a)(r(s,a)+\eta)$ and
	$\tilde\gamma(s,a)=\gamma\alpha(s,a)$.  This is the SDH Lagrange closure
	property.
	
	\subsection{Two interpretable continuation models}
	
	The previous results hold for any continuation model.  We now record two
	important special cases.
	
	\begin{definition}[Hard continuation]
		\label{def:hard-continuation-app}
		For a binary violation signal $c:\mathcal S\times\mathcal A\to\{0,1\}$, hard
		continuation is the model
		\begin{equation}
			\alpha_\infty(s,a)=\mathbf 1\{c(s,a)=0\}.
			\label{eq:hard-continuation-app}
		\end{equation}
	\end{definition}
	
	\begin{proposition}[Hard SDH is exact for no-violation chance constraints]
		\label{prop:hard-no-violation-app}
		Let $C_t:=\sum_{k=0}^{t}c(s_k,a_k)$.  Under hard continuation,
		$B_t=\mathbf 1\{C_t=0\}$.  Hence, for every fixed $t$,
		$\Pr_\pi(B_t=1)\ge1-\delta_t$ is equivalent to
		$\Pr_\pi(\exists k\le t:c(s_k,a_k)=1)\le\delta_t$.  At the discounted random
		time, $\Pr_\pi(B_{T_\gamma}=1)=\Pr_\pi(C_{T_\gamma}=0)$.
	\end{proposition}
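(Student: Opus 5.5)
The argument works pathwise, conditioned on the trajectory $\tau$, and then averages. Under hard continuation $\alpha_\infty(s,a)=\mathbf 1\{c(s,a)=0\}$, each Bernoulli variable $E_k$ from Def.~\ref{def:survival-process-app} is degenerate given $\tau$. If $c(s_k,a_k)=0$, then $\alpha_\infty=1$ and $E_k=1$ almost surely. If $c(s_k,a_k)=1$, then $\alpha_\infty=0$ and $E_k=0$ almost surely. So almost surely $E_k=\mathbf 1\{c(s_k,a_k)=0\}=1-c(s_k,a_k)$, where the last equality uses that $c$ is binary.

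Substituting into the product gives $B_t=\prod_{k=0}^t(1-c(s_k,a_k))$. This product equals $1$ exactly when every $c(s_k,a_k)=0$ for $k\le t$. Because $c\ge0$, that happens exactly when $C_t=\sum_{k\le t}c(s_k,a_k)=0$. Hence $B_t=\mathbf 1\{C_t=0\}$ almost surely. This is the only step with any content, namely that a sum of nonnegative terms vanishes iff each term vanishes. I do not expect a real obstacle beyond stating carefully that the equality holds almost surely under the joint law of $(\tau,E)$.

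For fixed $t$, the event $\{B_t=1\}$ is then, up to a null set, the complement of $\{\exists k\le t: c(s_k,a_k)=1\}$. Taking $\Pr_\pi$ of both sides gives $\Pr_\pi(B_t=1)=1-\Pr_\pi(\exists k\le t:c(s_k,a_k)=1)$. Rearranging shows that $\Pr_\pi(B_t=1)\ge1-\delta_t$ is equivalent to $\Pr_\pi(\exists k\le t:c(s_k,a_k)=1)\le\delta_t$.

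For the random-time statement, $T_\gamma$ is independent of the trajectory and the survival process (Def.~\ref{def:discounted-random-time-app}). So the pathwise identity $B_t=\mathbf 1\{C_t=0\}$, which holds simultaneously for all $t$ on a common almost-sure event, can be evaluated at $t=T_\gamma$. This yields $B_{T_\gamma}=\mathbf 1\{C_{T_\gamma}=0\}$ almost surely, and hence $\Pr_\pi(B_{T_\gamma}=1)=\Pr_\pi(C_{T_\gamma}=0)$. Alternatively, one can condition on $T_\gamma=t$, apply the fixed-$t$ identity, and sum against the weights $(1-\gamma)\gamma^t$, exactly as in the proof of Prop.~\ref{prop:pointwise-implies-random-time-app}.
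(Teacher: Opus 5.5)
Your proof is correct and follows the same route as the paper: under $\alpha_\infty$ each survival variable is degenerate given the trajectory, so $B_t=1$ exactly when no violation has occurred by time $t$, and the fixed-$t$ equivalence and the random-time identity follow directly. One small remark: $B_t=\mathbf 1\{C_t=0\}$ holds for all $t$ simultaneously on a single almost-sure event, so evaluating it at $T_\gamma$ does not require $T_\gamma$ to be independent; independence is needed only for your alternative argument that conditions on $T_\gamma=t$.
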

	
	\begin{proof}
		Under \eqref{eq:hard-continuation-app}, survival at time $k$ is deterministic given the trajectory
		and occurs exactly when $c(s_k,a_k)=0$.  Therefore $B_t=1$ exactly when no
		violation occurs through time $t$.
	\end{proof}
	
	\begin{definition}[Exponential continuation]
		\label{def:exp-continuation-app}
		For a nonnegative violation magnitude $c:\mathcal S\times\mathcal A\to
		\mathbb R_{\ge0}$ and scale $\lambda>0$, exponential continuation is
		\begin{equation}
			\alpha_\lambda(s,a)=\exp(-\lambda c(s,a)).
			\label{eq:exp-continuation-app}
		\end{equation}
	\end{definition}
	
	\begin{proposition}[Exponential continuation is cumulative-depth survival]
		\label{prop:exp-survival-app}
		Let $C_t:=\sum_{k=0}^{t}c(s_k,a_k)$.  Under exponential continuation,
		\begin{equation}
			\Pr_\pi(B_{T_\gamma}=1)
			=
			\mathbb E_\pi[e^{-\lambda C_{T_\gamma}}].
			\label{eq:exp-survival-random-time-app}
		\end{equation}
	\end{proposition}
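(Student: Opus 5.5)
The plan is to reduce the random-time statement to a per-trajectory, per-time identity and then average, exactly as in the proof of Thm.~\ref{thm:survival-occupancy-identity-app}. First condition on the trajectory $\tau$. By Def.~\ref{def:survival-process-app}, the variables $E_k$ are conditionally independent Bernoulli with means $\alpha_\lambda(s_k,a_k)$. Hence
\[
\mathbb E[B_t\mid\tau]=\prod_{k=0}^{t}\alpha_\lambda(s_k,a_k)=\prod_{k=0}^{t}e^{-\lambda c(s_k,a_k)}=e^{-\lambda C_t},
\]
where the last equality is just the product of exponentials turning into the exponential of the sum $C_t=\sum_{k\le t}c(s_k,a_k)$.

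Next, bring in the independent geometric time $T_\gamma$ of Def.~\ref{def:discounted-random-time-app}. Conditioning on $(\tau,T_\gamma=t)$ and using independence of $T_\gamma$ from both the trajectory and the survival variables gives $\Pr(B_{T_\gamma}=1\mid\tau,T_\gamma=t)=e^{-\lambda C_t}$. Taking expectations over $T_\gamma$ and $\tau$ yields $\Pr_\pi(B_{T_\gamma}=1)=\mathbb E_\pi[e^{-\lambda C_{T_\gamma}}]$.

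Equivalently, one can invoke Thm.~\ref{thm:survival-occupancy-identity-app} with $f\equiv1$. That gives $\Pr_\pi(B_{T_\gamma}=1)=(1-\gamma)\mathbb E_\pi[\sum_t\gamma^t e^{-\lambda C_t}]$. Re-expressing the geometric sum as an expectation over $T_\gamma$ then recovers \eqref{eq:exp-survival-random-time-app}.

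The only point needing care is the interchange of expectation and the infinite sum over $t$, i.e.\ Fubini/Tonelli. This is harmless here because $0\le e^{-\lambda C_t}\le 1$ and the weights $(1-\gamma)\gamma^t$ are summable, so all terms are nonnegative and bounded. I expect no real obstacle beyond stating this measurability and boundedness remark explicitly.
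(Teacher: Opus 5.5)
Your proposal is correct and follows the paper's proof essentially verbatim. Condition on $\tau$ to get $\mathbb E[B_t\mid\tau]=\prod_{k\le t}e^{-\lambda c(s_k,a_k)}=e^{-\lambda C_t}$, then evaluate at the independent time $T_\gamma$ and average; your Tonelli remark and the alternative route through Thm.~\ref{thm:survival-occupancy-identity-app} with $f\equiv1$ are valid extras that the paper leaves implicit.
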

	
	\begin{proof}
		Conditioned on the trajectory,
		$\mathbb E[B_t\mid\tau]=\prod_{k=0}^{t}\exp(-\lambda c(s_k,a_k))
		=\exp(-\lambda C_t)$.  Evaluating at $T_\gamma$ and taking expectation proves
		the claim.
	\end{proof}
	
	Thus exponential SDH observes a Laplace transform of cumulative violation depth
	at the discounted random time.  This statistic gives a one-sided chance
	certificate.
	
	\begin{proposition}[Cumulative-depth chance diagnostic]
		\label{prop:chance-bound-exp-app}
		Let $\bar S_\pi(\lambda):=\mathbb E_\pi[e^{-\lambda C_{T_\gamma}}]$.  For
		every $b>0$,
		\begin{equation}
			\Pr_\pi(C_{T_\gamma}\ge b)
			\le
			\frac{1-\bar S_\pi(\lambda)}{1-e^{-\lambda b}}.
			\label{eq:cumulative-depth-diagnostic-app}
		\end{equation}
		Consequently, $\bar S_\pi(\lambda)\ge1-\delta(1-e^{-\lambda b})$ implies
		$\Pr_\pi(C_{T_\gamma}\ge b)\le\delta$.
	\end{proposition}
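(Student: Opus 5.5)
This is a Markov-type inequality applied to the nonnegative random variable $1-e^{-\lambda C_{T_\gamma}}$. The key property is that $x\mapsto 1-e^{-\lambda x}$ is nondecreasing and nonnegative on $[0,\infty)$. Since violation magnitudes satisfy $c\ge0$, the cumulative depth $C_{T_\gamma}$ is nonnegative, so $Z:=1-e^{-\lambda C_{T_\gamma}}\in[0,1)$ almost surely. Its expectation is $1-\bar S_\pi(\lambda)$ by definition of $\bar S_\pi(\lambda)$. By Prop.~\ref{prop:exp-survival-app}, this also equals $\Pr_\pi(B_{T_\gamma}=0)$, which links the statement back to the survival process but is not needed for the bound.

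Next, fix $b>0$. Monotonicity gives the event inclusion $\{C_{T_\gamma}\ge b\}\subseteq\{Z\ge 1-e^{-\lambda b}\}$, and the threshold $1-e^{-\lambda b}$ is strictly positive because $\lambda>0$ and $b>0$. Equivalently, we can use the pointwise bound
\[
(1-e^{-\lambda b})\,\mathbf 1\{C_{T_\gamma}\ge b\}\le 1-e^{-\lambda C_{T_\gamma}}.
\]
This holds on the event $\{C_{T_\gamma}\ge b\}$ by monotonicity, and off it because the right-hand side is nonnegative. Taking expectations under $\pi$ and the independent geometric time $T_\gamma$ yields $(1-e^{-\lambda b})\Pr_\pi(C_{T_\gamma}\ge b)\le 1-\bar S_\pi(\lambda)$. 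Dividing by the positive constant $1-e^{-\lambda b}$ gives \eqref{eq:cumulative-depth-diagnostic-app}.

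For the consequence, substitute the hypothesis $\bar S_\pi(\lambda)\ge1-\delta(1-e^{-\lambda b})$, i.e.\ $1-\bar S_\pi(\lambda)\le\delta(1-e^{-\lambda b})$, into the bound. The factor $1-e^{-\lambda b}$ cancels, leaving $\Pr_\pi(C_{T_\gamma}\ge b)\le\delta$.

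There is no real obstacle. The only points needing care are measurability and integrability of $C_{T_\gamma}$, which may be $+\infty$ on some trajectories. We handle this by the convention $e^{-\infty}=0$, under which the pointwise inequality still holds and $Z$ remains bounded in $[0,1]$, so all expectations are finite. We must also check that the denominator is nonzero, which is exactly where the assumptions $\lambda>0$ and $b>0$ are used.
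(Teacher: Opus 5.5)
Your proposal is correct and follows essentially the same route as the paper: Markov's inequality applied to $1-e^{-\lambda C_{T_\gamma}}$, using the event inclusion $\{C_{T_\gamma}\ge b\}\subseteq\{1-e^{-\lambda C_{T_\gamma}}\ge 1-e^{-\lambda b}\}$ and $\mathbb E[1-e^{-\lambda C_{T_\gamma}}]=1-\bar S_\pi(\lambda)$. Your pointwise form of the inequality, and your remarks on the positive denominator (which requires $\lambda>0$ and $b>0$) and on the convention $e^{-\infty}=0$, are harmless refinements of the paper's one-line argument.
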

	
	\begin{proof}
		Let $Y=1-e^{-\lambda C_{T_\gamma}}$.  If $C_{T_\gamma}\ge b$, then
		$Y\ge1-e^{-\lambda b}$.  Markov's inequality gives the result since
		$\mathbb E[Y]=1-\bar S_\pi(\lambda)$.
	\end{proof}
	
	The certificate is useful but limited: one exponential scale does not determine
	an arbitrary cumulative-cost profile.  This is the first place where SDH
	separates from additive-budget CMDPs.
	
	\subsection{Why local SDH is not an additive-budget CMDP solver}\label{app:sdh-vs-cmdp}
	
    The first is history dependence.
    While CMDPs constrain expected cumulative cost, exact budget accounting is Markov only after augmenting the state with the spent cost or remaining budget. 
    Thus any local continuation rule that tries to represent pathwise budget feasibility must depend on residual-budget information, whereas $\alpha(s,a)$ is history-blind.
	
	\begin{theorem}[History-aliasing obstruction]
		\label{thm:history-aliasing-app}
		There exists a finite deterministic constrained-control problem with nonnegative costs and budget $d>0$ such that no history-blind continuation model $\alpha:\mathcal S\times\mathcal A\to[0,1]$ can agree with the exact residual-budget continuation rule
		\begin{equation}
			\alpha_d^\star(h_t,a)
			:=
			\mathbf 1\{C_{t-1}(h_t)+c(s_t,a)\le d\}
			\label{eq:budget-aware-continuation-app}
		\end{equation}
		on all reachable histories.
	\end{theorem}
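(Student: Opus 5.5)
We prove the claim by an explicit counterexample: a small deterministic MDP in which the same state-action pair $(s^\star,a^\star)$ is reached along two histories with different accumulated cost. The exact residual-budget rule $\alpha_d^\star$ then takes value $1$ on one history and $0$ on the other. Any history-blind $\alpha$ must assign a single number $\alpha(s^\star,a^\star)$ to both, so it cannot agree with $\alpha_d^\star$ on both histories.

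\textbf{Construction.} Fix a budget $d>0$ and use the states $s_0$ (start), $s^\star$, and an absorbing sink $s_\bot$. At $s_0$ there are two actions: $a_{\mathrm{lo}}$ with cost $c(s_0,a_{\mathrm{lo}})=0$ and $a_{\mathrm{hi}}$ with cost $c(s_0,a_{\mathrm{hi}})=d$. Both transition deterministically to $s^\star$. At $s^\star$ there is a single action $a^\star$ with cost $c(s^\star,a^\star)=d/2$, which moves to $s_\bot$. At $s_\bot$ every action has zero cost and self-loops. All costs are nonnegative, and the reward can be taken to be arbitrary, for instance zero.

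\textbf{Verification.} We check the two relevant histories and the first step.
\begin{itemize}
\item Take the history $h^{\mathrm{lo}}_1=(s_0,a_{\mathrm{lo}},s^\star)$. Here $C_0=0$, and $0+d/2\le d$, so $\alpha_d^\star(h^{\mathrm{lo}}_1,a^\star)=1$.
\item Take the history $h^{\mathrm{hi}}_1=(s_0,a_{\mathrm{hi}},s^\star)$. Here $C_0=d$, and $d+d/2>d$, so $\alpha_d^\star(h^{\mathrm{hi}}_1,a^\star)=0$.
\item At time $0$, both first actions satisfy the rule, since $0\le d$ and $d\le d$. So both histories are reachable without the rule having terminated them.
\end{itemize}
Both histories are reachable under deterministic dynamics, for example under the two deterministic policies that pick $a_{\mathrm{lo}}$ or $a_{\mathrm{hi}}$ at $s_0$. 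Each history ends in the same state $s^\star$ with the same action $a^\star$ available. Agreement on both would require $\alpha(s^\star,a^\star)=1$ and $\alpha(s^\star,a^\star)=0$ simultaneously, which is a contradiction.

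\textbf{Main obstacle.} The mathematics is easy; the difficulty is reading ``reachable histories'' and the time index in \eqref{eq:budget-aware-continuation-app} consistently. Concretely, $C_{t-1}(h_t)$ must mean the cost accumulated before the current action, with the convention $C_{-1}=0$. The tie case $C_0=d$ must also not already kill the $a_{\mathrm{hi}}$ branch, which is why the rule uses $\le$. To avoid relying on that tie, we can set $c(s_0,a_{\mathrm{hi}})=3d/4$ instead; the second history still exceeds the budget at $s^\star$ because $3d/4+d/2>d$. If reachability is meant on the augmented space rather than under a fixed policy, the argument is unchanged: both histories occur with positive probability under the uniform policy.
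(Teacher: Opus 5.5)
Your proposal is correct and is essentially the paper's proof. The paper uses the same two-history construction with $d=1$: costs $0$ and $1$ from $s_0$ into a common state $x$, then a cost-one action $a_{\mathrm{risk}}$ at $x$, so the residual-budget rule allows $a_{\mathrm{risk}}$ after one history, forbids it after the other, and no single value $\alpha(x,a_{\mathrm{risk}})$ can match both. Your general-$d$ version, the sink state, and the tie-avoiding variant with cost $3d/4$ are harmless refinements of that same argument.
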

	
	\begin{proof}
		Consider a deterministic two-stage problem.  From $s_0$, action $a_L$ has cost zero and action $a_H$ has cost one; both lead to the same state $x$.  
        At $x$, action $a_{\mathrm{risk}}$ has cost one.  Set the budget to $d=1$.
		
		There are two histories reaching $x$: one with spent cost zero and one with spent cost one.
        The residual-budget rule allows $a_{\mathrm{risk}}$ after the first history and forbids it after the second.
        Both decisions correspond to the same state-action pair $(x,a_{\mathrm{risk}})$, so no single history-blind value $\alpha(x,a_{\mathrm{risk}})$ can match both.
	\end{proof}
	
	The second obstruction is functional.  Additive cost and exponential survival
	probe different summaries of the same cumulative-depth profile.
	
	\begin{definition}[Violation-depth profile]
		\label{def:violation-depth-profile-app}
		For a fixed policy $\pi$, let $C_t:=\sum_{k=0}^{t}c(s_k,a_k)$ and define
		\begin{equation}
			\Omega_\pi(b)
			:=
			\mathbb E_\pi
			\left[
			\sum_{t\ge0}\gamma^t\mathbf 1\{C_t\ge b\}
			\right],
			\qquad b\ge0 .
			\label{eq:violation-depth-profile-app}
		\end{equation}
	\end{definition}
	
	\begin{proposition}[Profile identities]
		\label{prop:profile-identities-app}
		Let $J_c(\pi):=\mathbb E_\pi[\sum_{t\ge0}\gamma^t c(s_t,a_t)]$ and
		$\mathfrak S_\pi(\lambda):=
		\mathbb E_\pi[\sum_{t\ge0}\gamma^t e^{-\lambda C_t}]$.  Then
		\begin{equation}
			\frac{J_c(\pi)}{1-\gamma}
			=
			\int_0^\infty\Omega_\pi(b)\,db,
			\qquad
			\frac{1}{\lambda}
			\left(
			\frac{1}{1-\gamma}
			-
			\mathfrak S_\pi(\lambda)
			\right)
			=
			\int_0^\infty e^{-\lambda b}\Omega_\pi(b)\,db .
			\label{eq:profile-identities-app}
		\end{equation}
	\end{proposition}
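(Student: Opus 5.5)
Both identities follow from the layer-cake representation of a nonnegative quantity, applied pathwise to the cumulative depth $C_t$ and then exchanged with the discounted expectation by Tonelli's theorem. Since $c\ge0$, each $C_t$ is a nonnegative random variable. For every trajectory we have $C_t=\int_0^\infty \mathbf 1\{C_t\ge b\}\,db$. For any nonnegative $g$ that is absolutely continuous with $g(0)=0$, we also have $g(C_t)=\int_0^\infty g'(b)\mathbf 1\{C_t\ge b\}\,db$. Multiplying by $\gamma^t$, summing over $t$, and taking $\mathbb E_\pi$, the integrand $\gamma^t g'(b)\mathbf 1\{C_t\ge b\}$ is nonnegative when $g'\ge0$. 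Tonelli then lets us swap the sum, the expectation and the $db$-integral. This gives
\[
\mathbb E_\pi\Big[\sum_{t\ge0}\gamma^t g(C_t)\Big]=\int_0^\infty g'(b)\,\Omega_\pi(b)\,db .
\]

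For the second identity, take $g(x)=1-e^{-\lambda x}$, which satisfies $g(0)=0$ and $g'(b)=\lambda e^{-\lambda b}\ge0$. The left-hand side becomes $\frac{1}{1-\gamma}-\mathfrak S_\pi(\lambda)$. Both terms are finite because $e^{-\lambda C_t}\le1$, so the subtraction is legitimate. Dividing by $\lambda$ gives the claim.

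For the first identity, take $g(x)=x$, so the right-hand side is $\int_0^\infty\Omega_\pi(b)\,db$. What remains is to identify $\mathbb E_\pi[\sum_t\gamma^t C_t]$ with $J_c(\pi)/(1-\gamma)$. Write $C_t=\sum_{k\le t}c(s_k,a_k)$ and swap the double sum, which is again justified by nonnegativity. This gives $\sum_k c(s_k,a_k)\sum_{t\ge k}\gamma^t=\sum_k \gamma^k c(s_k,a_k)/(1-\gamma)$. Taking expectations yields $J_c(\pi)/(1-\gamma)$.

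The only delicate point is the justification of all the interchanges, including the case where $J_c(\pi)=\infty$. Since every integrand is nonnegative, Tonelli applies without integrability assumptions, so the first identity holds in $[0,\infty]$. The second identity involves only bounded quantities. I expect no real obstacle beyond checking that the layer-cake formula uses $\mathbf 1\{C_t\ge b\}$, matching the definition of $\Omega_\pi$. The choice of $\ge$ versus $>$ changes the integrand only on a Lebesgue-null set of $b$ for each path, so it does not affect either identity.
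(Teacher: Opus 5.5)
Your proof is correct and follows the paper's argument: the paper applies the same tail-integral representations $x=\int_0^\infty\mathbf 1\{x\ge b\}\,db$ and $(1-e^{-\lambda x})/\lambda=\int_0^\infty e^{-\lambda b}\mathbf 1\{x\ge b\}\,db$ pathwise to $C_t$, together with $\sum_{t\ge0}\gamma^tC_t=(1-\gamma)^{-1}\sum_{t\ge0}\gamma^t c(s_t,a_t)$. Your version additionally spells out the Tonelli interchanges and the case $J_c(\pi)=\infty$, which the paper leaves implicit.
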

	
	\begin{proof}
		The first identity follows from the tail-integral representation
		$x=\int_0^\infty\mathbf 1\{x\ge b\}\,db$ applied to $C_t$, together with
		$\sum_{t\ge0}\gamma^t C_t=(1-\gamma)^{-1}\sum_{t\ge0}\gamma^t c(s_t,a_t)$.
		The second follows similarly from
		$(1-e^{-\lambda x})/\lambda
		=\int_0^\infty e^{-\lambda b}\mathbf 1\{x\ge b\}\,db$.
	\end{proof}
	
	Thus additive CMDP cost is the total area under the violation-depth profile,
	whereas exponential SDH observes a Laplace-weighted slice of the same profile.
	
	\begin{proposition}[One exponential slice cannot control arbitrary additive cost]
		\label{prop:one-slice-obstruction-app}
		Fix $\lambda_0>0$.  For every $M>0$, there exists a nonnegative decreasing
		function $f_M$ on $[0,\infty)$ such that
		\begin{equation}
			\int_0^\infty e^{-\lambda_0 b}f_M(b)\,db=1,
			\qquad
			\int_0^\infty f_M(b)\,db>M .
			\label{eq:one-slice-obstruction-app}
		\end{equation}
	\end{proposition}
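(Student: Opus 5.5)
Since the statement is existential, I will give an explicit one-parameter family of nonnegative decreasing functions and then normalize it. Indicators of intervals are the natural choice: take $g_L(b):=\mathbf 1\{0\le b\le L\}$ for $L>0$. Each $g_L$ is nonnegative and nonincreasing on $[0,\infty)$. Its two integrals are elementary:
\[
\int_0^\infty e^{-\lambda_0 b}g_L(b)\,db=\frac{1-e^{-\lambda_0 L}}{\lambda_0},
\qquad
\int_0^\infty g_L(b)\,db=L .
\]

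Next I normalize the Laplace slice by setting
\[
f_M:=\frac{\lambda_0}{1-e^{-\lambda_0 L}}\,g_L .
\]
The prefactor is positive, so $f_M$ is still nonnegative and decreasing, and by construction its $\lambda_0$-weighted integral equals $1$. Its unweighted area is
\[
\int_0^\infty f_M(b)\,db=\frac{\lambda_0 L}{1-e^{-\lambda_0 L}}\ \ge\ \lambda_0 L .
\]
The lower bound holds because $1-e^{-\lambda_0 L}\in(0,1)$. Choosing any $L>M/\lambda_0$ therefore gives area strictly larger than $M$, which proves \eqref{eq:one-slice-obstruction-app}.

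There is no real obstacle in this argument. The only points to check are that normalization preserves monotonicity and nonnegativity, and that the Laplace weight caps the weighted integral at $1/\lambda_0$ while the plain area grows without bound. This is the mechanism the proposition is meant to capture: a long shallow tail is heavily discounted by $e^{-\lambda_0 b}$ but counts fully in $J_c$.

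If one also wants the functions to look like genuine profiles $\Omega_\pi$, a smooth alternative works the same way: take $f(b)=\kappa e^{-\epsilon b}$ with $\kappa=\lambda_0+\epsilon$. Its weighted integral is $1$ and its area is $(\lambda_0+\epsilon)/\epsilon\to\infty$ as $\epsilon\to0$. Through Prop.~\ref{prop:profile-identities-app}, this links the obstruction to the single-scale discussion.
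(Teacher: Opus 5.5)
Your proof is correct. Your closing ``smooth alternative'' $f(b)=(\lambda_0+\epsilon)e^{-\epsilon b}$ is exactly the paper's proof. The paper writes it as $f_a(b)=(\lambda_0+a)e^{-ab}$, with area $1+\lambda_0/a\to\infty$ as $a\downarrow0$.

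Your main construction is a different family: the normalized plateau $\frac{\lambda_0}{1-e^{-\lambda_0 L}}\mathbf 1\{0\le b\le L\}$. It gives a fully explicit witness for each $M$ (any $L>M/\lambda_0$, with no limit). It also makes the mechanism transparent: a long shallow tail that the weight $e^{-\lambda_0 b}$ suppresses but the area counts in full. However, it is only nonincreasing, so it proves the statement only under the weak reading of ``decreasing''. That reading is natural here, because violation-depth profiles are themselves only nonincreasing (step functions for discrete costs). The exponential family, by contrast, settles the statement under either reading.

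One correction to your last remark: the exponential family is not closer to genuine profiles. Every $\Omega_\pi$ satisfies $\Omega_\pi(0)=(1-\gamma)^{-1}$, whereas $f_a(0)=\lambda_0+a$ moves with the parameter. An exponential profile with amplitude pinned at $(1-\gamma)^{-1}$ is precisely the single-scale family of Thm.~\ref{thm:single-scale-app}, where one slice \emph{does} determine the area. So the exponential counterexample works only because amplitude and scale are two unknowns constrained by a single equation. If you want a counterexample that respects the pinned value at $b=0$ (for a fixed attainable slice value), you need a non-exponential shape. Your long-shallow-tail idea does the job: a short segment at full height followed by a long low plateau.
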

	
	\begin{proof}
		Let $f_a(b)=(\lambda_0+a)e^{-ab}$ for $a>0$.  Then the Laplace-weighted
		integral at $\lambda_0$ equals one, while the total area is
		$1+\lambda_0/a$, which diverges as $a\downarrow0$.
	\end{proof}
	
	These results explain the scope of SDH.  Local continuation cannot reproduce
	residual-budget semantics without augmenting the state, and one exponential
	scale cannot recover an arbitrary additive-cost profile.
	
    \subsection{A positive regime: single-scale survival}
    \label{subsec:single-scale-viability-app}
    
    The preceding obstruction is a worst-case statement: one exponential SDH scale
    cannot identify an arbitrary violation-depth profile.  The obstruction disappears
    when the profile has only one effective depth parameter.  In that case, the
    additive CMDP cost, the random-time chance tail, and the exponential SDH statistic
    are all different views of the same scalar.
    
    \begin{theorem}[Single-scale recoverability]
    	\label{thm:single-scale-app}
    	Fix a policy $\pi$.  Suppose that, for some $\tau_\pi>0$,
    	\begin{equation}
    		\Omega_\pi(b)=\frac{1}{1-\gamma}e^{-b/\tau_\pi},
    		\qquad b\ge0 .
    		\label{eq:single-scale-profile-app}
    	\end{equation}
    	Equivalently,
    	$\Pr_\pi(C_{T_\gamma}\ge b)=e^{-b/\tau_\pi}$.  Then, for every
    	$\lambda>0$,
    	\begin{equation}
    		J_c(\pi)=\tau_\pi,
    		\qquad
    		\mathfrak S_\pi(\lambda)
    		=
    		\frac{1}{(1-\gamma)(1+\lambda\tau_\pi)},
    		\qquad
    		\bar S_\pi(\lambda)
    		=
    		\frac{1}{1+\lambda\tau_\pi}.
    		\label{eq:single-scale-recovery-app}
    	\end{equation}
    	Consequently, for any fixed $\lambda_0>0$,
    	\begin{equation}
    		J_c(\pi)
    		=
    		\frac{1}{\lambda_0}
    		\left(
    		\frac{1}{(1-\gamma)\mathfrak S_\pi(\lambda_0)}-1
    		\right)
    		=
    		\frac{1}{\lambda_0}
    		\left(
    		\frac{1}{\bar S_\pi(\lambda_0)}-1
    		\right).
    		\label{eq:single-slice-recovery-app}
    	\end{equation}
    	Moreover, for any $b>0$ and $\delta\in(0,1)$,
    	$\Pr_\pi(C_{T_\gamma}\ge b)\le\delta$ is equivalent to
    	\begin{equation}
    		\bar S_\pi(\lambda)
    		\ge
    		\frac{1}{1+\lambda b/\log(1/\delta)} .
    		\label{eq:single-scale-threshold-app}
    	\end{equation}
    \end{theorem}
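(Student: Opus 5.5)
The plan is to reduce every claim to the profile identities of Prop.~\ref{prop:profile-identities-app} and then evaluate elementary integrals of the exponential profile \eqref{eq:single-scale-profile-app}.

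\emph{Step 1: the equivalent chance-tail form.} Since $T_\gamma$ is independent of the trajectory with $\Pr(T_\gamma=t)=(1-\gamma)\gamma^t$, we have
\[
\Pr_\pi(C_{T_\gamma}\ge b)=(1-\gamma)\,\mathbb E_\pi\Big[\sum_{t\ge0}\gamma^t\mathbf 1\{C_t\ge b\}\Big]=(1-\gamma)\Omega_\pi(b).
\]
This uses the same conditioning argument as Thm.~\ref{thm:survival-occupancy-identity-app}, applied with a deterministic indicator. It gives the stated equivalence with $\Pr_\pi(C_{T_\gamma}\ge b)=e^{-b/\tau_\pi}$.

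\emph{Step 2: the three formulas in \eqref{eq:single-scale-recovery-app}.} We plug the profile into Prop.~\ref{prop:profile-identities-app}.
\begin{itemize}
\item First identity: $J_c(\pi)/(1-\gamma)=\int_0^\infty(1-\gamma)^{-1}e^{-b/\tau_\pi}\,db=\tau_\pi/(1-\gamma)$, so $J_c(\pi)=\tau_\pi$.
\item Second identity: $\int_0^\infty e^{-\lambda b}\Omega_\pi(b)\,db=\frac{1}{1-\gamma}\cdot\frac{\tau_\pi}{1+\lambda\tau_\pi}$. Solving gives $\mathfrak S_\pi(\lambda)=\frac{1}{1-\gamma}\big(1-\frac{\lambda\tau_\pi}{1+\lambda\tau_\pi}\big)=\frac{1}{(1-\gamma)(1+\lambda\tau_\pi)}$.
\item For $\bar S_\pi$, we note $\bar S_\pi(\lambda)=\mathbb E_\pi[e^{-\lambda C_{T_\gamma}}]=(1-\gamma)\mathfrak S_\pi(\lambda)$, again by the random-time identity. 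Alternatively, $C_{T_\gamma}$ is exponential with mean $\tau_\pi$ by Step 1, and its Laplace transform is $1/(1+\lambda\tau_\pi)$.
\end{itemize}

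\emph{Step 3: inversion.} Inverting $\bar S_\pi(\lambda_0)=1/(1+\lambda_0\tau_\pi)$ gives $\tau_\pi=\lambda_0^{-1}(1/\bar S_\pi(\lambda_0)-1)$. Combining with $J_c=\tau_\pi$ and $\bar S=(1-\gamma)\mathfrak S$ yields \eqref{eq:single-slice-recovery-app}.

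\emph{Step 4: the threshold.} We have $\Pr_\pi(C_{T_\gamma}\ge b)=e^{-b/\tau_\pi}\le\delta$ iff $\tau_\pi\le b/\log(1/\delta)$. Since $\tau\mapsto 1/(1+\lambda\tau)$ is strictly decreasing on $(0,\infty)$ for $\lambda>0$, this holds iff $\bar S_\pi(\lambda)\ge 1/(1+\lambda b/\log(1/\delta))$. Here $\log(1/\delta)>0$ because $\delta\in(0,1)$.

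\emph{Main obstacle.} The only delicate point is the random-time/profile correspondence in Step 1, namely justifying the exchange of expectation and sum. Tonelli handles this, since all terms are nonnegative.

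A mild consistency check is also worth recording. At $b=0$ the profile gives $\Omega_\pi(0)=1/(1-\gamma)$, which matches $\mathbf 1\{C_t\ge0\}\equiv1$, so the hypothesis is internally consistent. The rest is routine calculus.
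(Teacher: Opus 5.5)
Your proposal is correct and follows the paper's proof essentially step for step. The shared steps are: the random-time identity $\Pr_\pi(C_{T_\gamma}\ge b)=(1-\gamma)\Omega_\pi(b)$; the two profile identities of Prop.~\ref{prop:profile-identities-app} evaluated on the exponential profile; the relation $\bar S_\pi=(1-\gamma)\mathfrak S_\pi$ for the inversion; and the equivalence $e^{-b/\tau_\pi}\le\delta \iff \tau_\pi\le b/\log(1/\delta)$ substituted into $\bar S_\pi(\lambda)=(1+\lambda\tau_\pi)^{-1}$. Your additions (Tonelli for exchanging sum and expectation, the explicit monotonicity of $\tau\mapsto(1+\lambda\tau)^{-1}$, and the alternative Laplace-transform derivation via $C_{T_\gamma}$ being exponential with mean $\tau_\pi$) only make explicit what the paper leaves implicit.
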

    
    \begin{proof}
    	The equivalence between \eqref{eq:single-scale-profile-app} and the random-time
    	tail follows from
    	\[
    		\Pr_\pi(C_{T_\gamma}\ge b)
    		=
    		(1-\gamma)\Omega_\pi(b).
    	\]
    	Using Prop.~\ref{prop:profile-identities-app},
    	\[
    		\frac{J_c(\pi)}{1-\gamma}
    		=
    		\int_0^\infty \Omega_\pi(b)\,db
    		=
    		\frac{\tau_\pi}{1-\gamma},
    	\]
    	so $J_c(\pi)=\tau_\pi$.  The same profile identity gives
    	\[
    		\frac{1}{\lambda}
    		\left(
    		\frac{1}{1-\gamma}-\mathfrak S_\pi(\lambda)
    		\right)
    		=
    		\int_0^\infty e^{-\lambda b}\Omega_\pi(b)\,db
    		=
    		\frac{\tau_\pi}{(1-\gamma)(1+\lambda\tau_\pi)} ,
    	\]
    	which rearranges to the expression for $\mathfrak S_\pi(\lambda)$.  Since
    	$\bar S_\pi(\lambda)=(1-\gamma)\mathfrak S_\pi(\lambda)$, this also gives the
    	expression for $\bar S_\pi(\lambda)$ and its inversion in
    	\eqref{eq:single-slice-recovery-app}.  Finally,
    	$e^{-b/\tau_\pi}\le\delta$ is equivalent to
    	$\tau_\pi\le b/\log(1/\delta)$; substituting this into
    	$\bar S_\pi(\lambda)=(1+\lambda\tau_\pi)^{-1}$ yields
    	\eqref{eq:single-scale-threshold-app}.
    \end{proof}
    
    This theorem is the positive counterpart to
    Prop.~\ref{prop:one-slice-obstruction-app}.  For arbitrary profiles, one
    exponential slice is only one Laplace-weighted summary.  For the single-scale
    family \eqref{eq:single-scale-profile-app}, the total area under the profile, the
    random-time chance tail, and the exponential SDH statistic are all controlled by
    the same depth parameter $\tau_\pi$.  Exponential SDH is therefore exact in this
    regime not because it solves additive-budget CMDPs in general, but because the
    task's violation geometry has collapsed to one effective scale.
	
	\subsection{Interpretation and scope}
	
	The appendix establishes the following formal picture.  Standard survival chance
	constraints are horizon-wise.  SDH replaces them by survival at the discounted
	random time $T_\gamma$.  This is a weaker, geometrically averaged feasibility
	condition.  The objective is also changed: reward is survival-gated at the same
	random time.
	
	Because both reward and feasibility are represented by the same survival
	occupancy measure, the Lagrangian of the survival-gated random-time chance
	problem closes exactly as an SDH objective with shifted reward.  This closure is
	the precise constrained-control interpretation of SDH.
	
	The same analysis also clarifies scope.  SDH is exact for hard no-violation
	chance constraints and for the survival-gated random-time formulation.  It is
	not an exact additive-budget CMDP solver on the original state space, because
	budget feasibility is history-dependent and additive costs depend on the full
	violation-depth profile.  Exponential SDH is therefore best understood as a
	single-scale survival model: limited for arbitrary profiles, but
	exactly informative when violation depth is governed by one effective scale.

    \newpage

    \section{Control as Inference with Stochastic Decision Horizons}
	\label{sec:cai-stoch-horizons}
	
	The previous section identified the constrained-control problem for which SDH is exact.  
    Its Lagrangian satisfies
    \[
    \mathcal L(\pi,\eta)
    =
    (1-\gamma)\mu_\pi^\alpha(r+\eta)-\eta\rho .
    \]
    Equivalently, after dividing by the positive constant \(1-\gamma\) and dropping
    the policy-independent term \(-\eta\rho/(1-\gamma)\), fixed-\(\eta\) policy
    optimization is simply SDH with the shifted reward \(r+\eta\).  The dual variable
    \(\eta\) therefore acts as a \emph{survival living bonus}: every surviving
    rewarded state-action pair receives the same additional payoff.  This bonus
    encourages policies whose stochastic decision horizon remains alive, but it does
    not change the survival occupancy or the timing of decisions.
    
    For this reason, the present section fixes an arbitrary bounded reward and writes
    it as \(r\).  All statements below hold pointwise in the dual variable
    \(\eta\) after replacing \(r_t\) by \(r_t+\eta\) in the reward-gated terms.  This
    notation keeps the new issue visible: the previous section determined which
    rewards count, whereas Control as Inference must also determine which policy
    decisions pay information cost.
    
    This distinction is invisible in unregularized control, but it becomes essential
    in a CaI derivation.  The policy penalty is not an arbitrary weighted log-ratio.
    It must be the trajectory KL appearing in an ELBO.  Thus a term of the form
    \[
    \mathbb E\left[
    \sum_{t\ge0}
    A_t
    \log\frac{\pi(a_t\mid s_t)}{\pi_0(a_t\mid s_t)}
    \right]
    \]
    is legitimate only when the gate \(A_t\) indexes actual policy likelihood factors
    in a probabilistic trajectory law.
    
    This is the conceptual point of the section.  SDH fixes the reward gate; CaI
    forces us to ask which post-violation actions remain semantic decisions.  There
    are two natural answers.  Under absorbing-state semantics, a violation ends the
    controlled decision process.  Under virtual-termination semantics, reward
    terminates virtually, but the agent continues to act on the underlying discounted
    MDP.  These two semantics share the same survival-gated reward, including the
    survival living bonus \(+\eta\) when the Lagrangian dual is active, but induce
    different trajectory KLs.
    
    The main result below formalizes this distinction.  AS and VT are not two
    different reward relaxations.  They are two different ELBO-valid trajectory
    models with the same optimality potential and different active decision times.
    This is the bridge between the survival-occupancy theory above and the
    algorithmic derivations below: AS yields the SAC-style variable-discount soft
    Bellman recursion used in Sec.~\ref{app:as-sac-unified}, while VT separates the
    survival-return critic from an ordinary discounted policy-divergence term.
	
	\subsection{The ELBO constraint}
	
	We first recall the minimal CaI identity needed below.  Let \(X\) denote the
	random variables in a trajectory model, let \(p\) be a trajectory prior, and let
	\(q\ll p\) be a variational trajectory law.  For an unnormalized optimality
	potential \(\psi(X)=\exp(R(X)/\kappa)\), \(\kappa>0\), define the evidence
	\(Z:=\mathbb E_p[\psi(X)]\).  Whenever \(Z<\infty\), Jensen's inequality gives
	\[
	\begin{aligned}
		\kappa\log Z
		&=
		\kappa\log
		\mathbb E_q
		\left[
		\frac{p(X)}{q(X)}
		\exp\left(\frac{1}{\kappa}R(X)\right)
		\right]
		\\
		&\ge
		\mathbb E_q[R(X)]
		-
		\kappa D_{\mathrm{KL}}(q\|p).
	\end{aligned}
	\]
	Thus the regularizer in a CaI objective is the KL between two probability laws.
	For SDH, the question is therefore not whether one can write down a useful
	weighted log-ratio penalty, but which stochastic trajectory law makes that
	penalty a true trajectory KL.  The answer depends on the semantics of decisions
	after feasibility failure.
	
	\subsection{Stochastic horizons and survival gates}
	
	We now define the stochastic gates.  These are the same survival objects as in
	the previous section, but here we keep both the reward gate and the decision gate
	explicit because they play different roles in the ELBO.
	
	Consider an infinite-horizon discounted MDP
	\((\mathcal S,\mathcal A,P,r,\gamma)\), with \(\gamma\in(0,1)\), bounded reward,
	reference policy \(\pi_0(a\mid s)\), and candidate policy \(\pi(a\mid s)\).  For
	\(\tau=(s_0,a_0,s_1,a_1,\ldots)\), write
	\(r_t:=r(s_t,a_t)\) and \(\alpha_t:=\alpha(s_t,a_t)\).
	
	\begin{definition}[Geometric horizon]
		\label{def:sdh-geom-horizon}
		Let \(D_t\sim\mathrm{Bernoulli}(1-\gamma)\) i.i.d. and define
		\[
		H_t:=\prod_{k=0}^{t-1}(1-D_k).
		\]
		Thus \(H_t=1\) iff the geometric horizon has not stopped before time \(t\), and
		\(\mathbb E[H_t]=\gamma^t\).
	\end{definition}
	
	\begin{definition}[Feasibility continuation process]
		\label{def:sdh-feasibility-process}
		Given a continuation model
		\(\alpha:\mathcal S\times\mathcal A\to[0,1]\), let
		\(C_t\sim\mathrm{Bernoulli}(\alpha(s_t,a_t))\).  The variable \(C_t\) records
		whether the current state-action pair continues to satisfy the SDH feasibility
		relaxation.
	\end{definition}
	
	The reward convention is survival-to-reward: reward at time \(t\) contributes
	only if feasibility survives through the current state-action pair.
	
	\begin{definition}[Reward gate]
		\label{def:sdh-reward-gate}
		Define
		\[
		\Gamma_t:=
		H_t\prod_{k=0}^{t}C_k. 
		\]
		Hence \(\Gamma_t=1\) iff both the geometric horizon and the feasibility process
		survive through the reward at time \(t\).
	\end{definition}
	
	The reward gate is common to AS and VT.  The semantics differ only in which
	decisions remain active.
	
	\begin{definition}[Active decision gates]
		\label{def:sdh-decision-gates}
		The active decision indicators are
		\[
		A_t^{\mathrm{AS}}
		:=
		H_t\prod_{k=0}^{t-1}C_k,
		\qquad
		A_t^{\mathrm{VT}}
		:=
		H_t.
		\]
		Under absorbing-state semantics, no policy decision is made after the first
		feasibility failure.  Under virtual-termination semantics, the agent continues to
		make policy decisions until the geometric horizon stops, even though rewards
		after feasibility failure no longer contribute to the optimality likelihood.
	\end{definition}
	
	The one-step offset between \(\Gamma_t\) and \(A_t^{\mathrm{AS}}\) is deliberate.
	At time \(t\), the agent exists to choose \(a_t\).  The feasibility variable
	\(C_t\) is then realized and determines whether the current reward and future
	value survive.  Thus \(C_t\) gates \(r_t\) and future value, but not the
	information cost of choosing the current action.  This is the same timing
	convention used in standard soft Bellman equations: the reward appears at the
	\(Q\)-level, while the policy regularizer appears at the \(V\)-level.
	
    \subsection{Semantic trajectory laws}
    
    We now construct probability laws whose KLs exactly equal the gated policy-ratio
    terms.  The only history dependence needed for the action likelihood is the
    active-decision flag \(A_t^\sigma\): it is the sufficient statistic of the past
    for deciding whether the current action is a semantic policy decision.
    
    For \(\sigma\in\{\mathrm{AS},\mathrm{VT}\}\), initialize \(A_0^\sigma=1\) and
    update the flag by
    \[
    A_{t+1}^{\mathrm{AS}}
    =
    A_t^{\mathrm{AS}}(1-D_t)C_t,
    \qquad
    A_{t+1}^{\mathrm{VT}}
    =
    A_t^{\mathrm{VT}}(1-D_t).
    \]
    Equivalently,
    \[
    A_t^{\mathrm{AS}}=H_t\prod_{k=0}^{t-1}C_k,
    \qquad
    A_t^{\mathrm{VT}}=H_t.
    \]
    Thus \(A_t^\sigma\) is known before \(a_t\) is sampled.  This timing is
    important: \(C_t\) is realized only after choosing \(a_t\), so it can gate the
    current reward and future decisions, but not the likelihood factor for the
    current action.
    
    Choose a fixed inactive dummy distribution \(\lambda_\bot\) on \(\mathcal A\).
    For example, if the action space contains a distinguished dummy action
    \(a_\bot\), one may take \(\lambda_\bot=\delta_{a_\bot}\).  The choice of
    \(\lambda_\bot\) is irrelevant as long as it is the same under the variational
    law and the prior law.
    
    Define the semantic variational action kernel
    \[
    K_{q,\sigma,t}(\cdot\mid s_t,A_t^\sigma)
    :=
    \begin{cases}
        \pi(\cdot\mid s_t), & A_t^\sigma=1,\\
        \lambda_\bot(\cdot), & A_t^\sigma=0,
    \end{cases}
    \]
    and the semantic prior action kernel
    \[
    K_{p,\sigma,t}(\cdot\mid s_t,A_t^\sigma)
    :=
    \begin{cases}
        \pi_0(\cdot\mid s_t), & A_t^\sigma=1,\\
        \lambda_\bot(\cdot), & A_t^\sigma=0.
    \end{cases}
    \]
    
    The semantic variational law \(q_\sigma\) and semantic prior law \(p_\sigma\)
    are generated as follows.  First sample \(s_0\sim\mu\) and set
    \(A_0^\sigma=1\).  At each time \(t\ge0\), sample
    \[
    a_t\sim K_{q,\sigma,t}(\cdot\mid s_t,A_t^\sigma)
    \quad\text{under }q_\sigma,
    \qquad
    a_t\sim K_{p,\sigma,t}(\cdot\mid s_t,A_t^\sigma)
    \quad\text{under }p_\sigma.
    \]
    Both laws then use the same environment, feasibility, and horizon kernels:
    \[
    s_{t+1}\sim P(\cdot\mid s_t,a_t),
    \qquad
    C_t\sim\mathrm{Bernoulli}(\alpha(s_t,a_t)),
    \qquad
    D_t\sim\mathrm{Bernoulli}(1-\gamma),
    \]
    and update \(A_{t+1}^\sigma\) by the corresponding recurrence above.  The two
    laws therefore agree on the initial distribution, environment dynamics,
    feasibility process, geometric horizon process, active-flag update rule, and
    inactive dummy distribution.  They differ only at active decision times.
    
    Throughout, assume the active policy kernels satisfy the usual absolute
    continuity condition \(\pi(\cdot\mid s)\ll\pi_0(\cdot\mid s)\) on the states of
    interest, and that the resulting log-ratio terms are integrable.
    
    \begin{lemma}[Semantic trajectory KL]
        \label{lem:sdh-semantic-kl}
        For either semantics \(\sigma\in\{\mathrm{AS},\mathrm{VT}\}\),
        \[
        D_{\mathrm{KL}}(q_\sigma\|p_\sigma)
        =
        \mathbb E_{q_\sigma}
        \left[
        \sum_{t\ge0}
        A_t^\sigma
        \log\frac{\pi(a_t\mid s_t)}{\pi_0(a_t\mid s_t)}
        \right].
        \]
    \end{lemma}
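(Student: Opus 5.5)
The plan is to use the chain rule for KL divergence on the sequentially generated laws $q_\sigma$ and $p_\sigma$. I will first argue for a finite horizon $N$ and then pass to the limit. Write $X_{0:N}$ for the tuple $(s_0,A_0^\sigma,a_0,C_0,D_0,s_1,A_1^\sigma,a_1,\ldots,s_N,A_N^\sigma,a_N)$.

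By construction, both laws factor into the same sequence of conditional kernels:
\begin{itemize}
\item the initial law $\mu$;
\item at each step, the action kernel $K_{\cdot,\sigma,t}(\cdot\mid s_t,A_t^\sigma)$;
\item then $P(\cdot\mid s_t,a_t)$, $\mathrm{Bernoulli}(\alpha(s_t,a_t))$ for $C_t$ and $\mathrm{Bernoulli}(1-\gamma)$ for $D_t$;
\item finally, the deterministic flag update producing $A_{t+1}^\sigma$.
\end{itemize}
Only the action kernels differ between $q_\sigma$ and $p_\sigma$. Since $A_t^\sigma$ is a measurable function of the past $(C_{0:t-1},D_{0:t-1})$, it is known before $a_t$ is sampled. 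The Radon--Nikodym derivative of the finite-horizon marginals is therefore the product of action-kernel ratios:
\[
\frac{dq_\sigma^{(N)}}{dp_\sigma^{(N)}}(X_{0:N})=\prod_{t=0}^{N}\frac{dK_{q,\sigma,t}(\cdot\mid s_t,A_t^\sigma)}{dK_{p,\sigma,t}(\cdot\mid s_t,A_t^\sigma)}(a_t).
\]
All shared factors cancel, and absolute continuity follows from $\pi\ll\pi_0$ together with the identical dummy law $\lambda_\bot$. Each factor equals $\pi(a_t\mid s_t)/\pi_0(a_t\mid s_t)$ when $A_t^\sigma=1$ and equals $1$ when $A_t^\sigma=0$. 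Hence its logarithm is exactly $A_t^\sigma\log\frac{\pi(a_t\mid s_t)}{\pi_0(a_t\mid s_t)}$. Taking $\mathbb E_{q_\sigma}$ of the log-density gives the finite-horizon identity
\[
D_{\mathrm{KL}}(q_\sigma^{(N)}\|p_\sigma^{(N)})=\mathbb E_{q_\sigma}\Big[\sum_{t=0}^N A_t^\sigma\log\tfrac{\pi(a_t\mid s_t)}{\pi_0(a_t\mid s_t)}\Big].
\]
Equivalently, this is the chain rule: it is the sum of expected conditional KLs, each equal to $\mathbb E[A_t^\sigma D_{\mathrm{KL}}(\pi(\cdot\mid s_t)\|\pi_0(\cdot\mid s_t))]$.

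The step I expect to be the main obstacle is the infinite-horizon limit. The laws live on the path space and are generated by Ionescu--Tulcea. The finite-dimensional marginals generate the product $\sigma$-algebra, so KL is the monotone supremum of the marginal KLs:
\[
D_{\mathrm{KL}}(q_\sigma\|p_\sigma)=\lim_{N\to\infty}D_{\mathrm{KL}}(q_\sigma^{(N)}\|p_\sigma^{(N)}).
\]
This follows from the data-processing inequality plus lower semicontinuity under increasing filtrations (the standard martingale argument).

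On the right-hand side I will use the chain-rule form, in which each summand $\mathbb E_{q_\sigma}[A_t^\sigma D_{\mathrm{KL}}(\pi\|\pi_0)(s_t)]$ is nonnegative. Monotone convergence then lets the expectation pass through the infinite sum. Alternatively, one can invoke the stated integrability assumption together with $\mathbb E[H_t]=\gamma^t$ to justify dominated convergence for the pathwise log-ratio series. Combining the two limits yields the lemma for both $\sigma=\mathrm{AS}$ and $\sigma=\mathrm{VT}$, since the argument never used the particular flag recurrence beyond the fact that $A_t^\sigma$ is predictable.
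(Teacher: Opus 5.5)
Your proposal is correct and follows the same route as the paper. You factor the finite-prefix likelihood ratio and cancel every shared kernel (initial law, $P$, the $C_t$ and $D_t$ Bernoullis, the flag update, and $\lambda_\bot$), so that only the action factor contributing $A_t^\sigma\log\frac{\pi(a_t\mid s_t)}{\pi_0(a_t\mid s_t)}$ survives, and then you pass to the infinite horizon.

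Your limit step is more careful than the paper's, which only appeals to a.s.\ finiteness of the geometric horizon and the integrability assumption. Note, however, that monotone convergence on the conditional-KL form only yields $\sum_{t}\mathbb E_{q_\sigma}[A_t^\sigma D_{\mathrm{KL}}(\pi(\cdot\mid s_t)\|\pi_0(\cdot\mid s_t))]$. It is your dominated-convergence argument for the pathwise series that identifies this with the lemma's right-hand side $\mathbb E_{q_\sigma}[\sum_t A_t^\sigma\log\frac{\pi(a_t\mid s_t)}{\pi_0(a_t\mid s_t)}]$, so that step is required rather than a mere alternative.
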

    
    \begin{proof}
        Consider a finite prefix \(0,\ldots,T\).  By construction, \(q_\sigma\) and
        \(p_\sigma\) agree on \(s_0\), the transition kernels \(P\), the feasibility
        kernels, the geometric-horizon kernels, the active-flag update rule, and the
        inactive dummy distribution \(\lambda_\bot\).  These factors cancel in the
        likelihood ratio.
    
        At time \(t\), if \(A_t^\sigma=0\), both laws use the same inactive kernel
        \(\lambda_\bot\), so the action factor also cancels.  If \(A_t^\sigma=1\),
        the variational law uses \(\pi(\cdot\mid s_t)\) and the prior law uses
        \(\pi_0(\cdot\mid s_t)\).  Hence
        \[
        \log\frac{q_{\sigma,T}}{p_{\sigma,T}}
        =
        \sum_{t=0}^{T}
        A_t^\sigma
        \log\frac{\pi(a_t\mid s_t)}{\pi_0(a_t\mid s_t)}.
        \]
        Taking expectation under \(q_{\sigma,T}\) gives the finite-prefix identity.
    
        Since \(D_t\sim\mathrm{Bernoulli}(1-\gamma)\) with \(\gamma<1\), the
        geometric horizon is finite almost surely.  Thus the number of active
        decision times is almost surely finite under VT and no larger under AS.  The
        integrability assumption allows the finite-prefix identity to pass to the
        infinite-horizon limit.
    \end{proof}
	
	The lemma is the central validity statement.  The gated log-ratio is not inserted
	by hand; it is the trajectory KL between two semantic laws that differ exactly at
	the active decision times.
	
	\subsection{The SDH ELBO}\label{app:sdh-elbo}
	
	Under the fixed-reward convention above, both semantics use the same unnormalized optimality potential,
	\[
	\psi_\sigma(\tau,C,D)
	:=
	\exp\left(
	\frac{1}{\kappa}
	\sum_{t\ge0}\Gamma_t r_t
	\right),
	\qquad
	Z_\sigma:=\mathbb E_{p_\sigma}[\psi_\sigma(\tau,C,D)].
	\]
	
	\begin{theorem}[Semantic SDH ELBO]
		\label{thm:sdh-semantic-elbo}
		For either semantics \(\sigma\in\{\mathrm{AS},\mathrm{VT}\}\),
		\[
		\kappa\log Z_\sigma
		\ge
		\mathcal J_\sigma(\pi,\pi_0),
		\]
		where
		\[
		\mathcal J_\sigma(\pi,\pi_0)
		:=
		\mathbb E_{q_\sigma}
		\left[
		\sum_{t\ge0}\Gamma_t r_t
		-
		\kappa
		\sum_{t\ge0}
		A_t^\sigma
		\log\frac{\pi(a_t\mid s_t)}{\pi_0(a_t\mid s_t)}
		\right].
		\]
	\end{theorem}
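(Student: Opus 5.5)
The proof combines the generic CaI variational inequality from the ELBO-constraint subsection with Lemma~\ref{lem:sdh-semantic-kl}. Set $X=(\tau,C,D)$ and $R(X):=\sum_{t\ge0}\Gamma_t r_t$, so that $\psi_\sigma=\exp(R/\kappa)$ and $Z_\sigma=\mathbb E_{p_\sigma}[\psi_\sigma]$.

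\textbf{Well-definedness.} First I will check that all quantities are finite. Since $0\le\Gamma_t\le H_t$ and $r$ is bounded, $|R(X)|\le R_{\max}\sum_t H_t$. Under both laws, $\sum_t H_t$ equals $1+G$, where $G$ is geometric with success probability $1-\gamma$. Hence $\mathbb E_{p_\sigma}[\psi_\sigma]\le \mathbb E[\exp(R_{\max}(1+G)/\kappa)]$. This bound is finite when $\gamma e^{R_{\max}/\kappa}<1$.

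In general the moment generating function may blow up. I will handle this by noting that the inequality is trivial if $Z_\sigma=\infty$. This only requires $R\ge0$, which holds for the nonnegative rewards of Sec.~\ref{sec:sdh-main}. Alternatively, one can work with finite prefixes and pass to the limit.

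Similarly, $\mathbb E_{q_\sigma}[R]$ is finite because $\mathbb E[\sum_t H_t]=1/(1-\gamma)$.

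\textbf{Jensen step.} Assume $q_\sigma\ll p_\sigma$. This follows from $\pi\ll\pi_0$ on active times and identical kernels elsewhere. Write
\[
Z_\sigma=\mathbb E_{q_\sigma}\!\Big[\tfrac{dp_\sigma}{dq_\sigma}\,e^{R/\kappa}\Big].
\]
Applying concavity of $\log$ gives
\[
\kappa\log Z_\sigma\ge \mathbb E_{q_\sigma}[R]-\kappa D_{\mathrm{KL}}(q_\sigma\|p_\sigma).
\]
Then Lemma~\ref{lem:sdh-semantic-kl} replaces the KL term by $\mathbb E_{q_\sigma}[\sum_t A_t^\sigma\log(\pi/\pi_0)]$. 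Linearity of expectation combines the two terms into $\mathcal J_\sigma$.

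\textbf{Main obstacle: the infinite horizon.} The delicate point is rigor in the infinite-horizon setting, namely applying Jensen to an infinite-dimensional law and interchanging expectation with infinite sums. To avoid measure-theoretic issues, I will first prove the inequality for the truncated potential $\psi^T=\exp(\kappa^{-1}\sum_{t\le T}\Gamma_t r_t)$ on the finite prefix $0,\dots,T$. On this prefix the likelihood ratio is the explicit finite product from the proof of Lemma~\ref{lem:sdh-semantic-kl}, so Jensen applies directly.

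I will then let $T\to\infty$, handling each side separately:
\begin{itemize}
\item On the left, $Z^T_\sigma\uparrow Z_\sigma$ by monotone convergence, since $r\ge0$.
\item On the right, the reward term converges by dominated convergence, with dominating function $R_{\max}\sum_t H_t$.
\item The KL term converges by the integrability assumption, exactly as in the proof of Lemma~\ref{lem:sdh-semantic-kl}.
\end{itemize}

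For the shifted reward $r+\eta$ the same argument applies verbatim, since the shifted reward remains bounded.
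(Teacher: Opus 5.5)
Your proposal is correct and follows the paper's argument: the variational/Jensen bound $\kappa\log Z_\sigma\ge\mathbb E_{q_\sigma}[\sum_{t}\Gamma_t r_t]-\kappa D_{\mathrm{KL}}(q_\sigma\|p_\sigma)$, followed by substituting Lemma~\ref{lem:sdh-semantic-kl} for the KL term. Your extra steps (the trivial case $Z_\sigma=\infty$, finite-prefix truncation with monotone convergence for $Z_\sigma^T$ and dominated convergence for the reward term) fill in infinite-horizon details the paper leaves implicit; the one nit, which the paper shares, is that $Z_\sigma=\mathbb E_{q_\sigma}[\tfrac{dp_\sigma}{dq_\sigma}e^{R/\kappa}]$ holds with equality only when $p_\sigma\ll q_\sigma$ and is in general ``$\ge$'', which is the harmless direction.
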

	
	\begin{proof}
		By the variational identity,
		\[
		\begin{aligned}
			\kappa\log Z_\sigma
			=
			\kappa\log
			\mathbb E_{q_\sigma}
			\left[
			\frac{p_\sigma(\tau,C,D)}{q_\sigma(\tau,C,D)}
			\exp\left(
			\frac{1}{\kappa}
			\sum_{t\ge0}\Gamma_t r_t
			\right)
			\right]
			\ge
			\mathbb E_{q_\sigma}
			\left[
			\sum_{t\ge0}\Gamma_t r_t
			\right]
			-
			\kappa D_{\mathrm{KL}}(q_\sigma\|p_\sigma).
		\end{aligned}
		\]
		Applying Lem.~\ref{lem:sdh-semantic-kl} gives the claimed objective.
	\end{proof}
	
	This theorem is the point at which the SDH objective becomes a legitimate
	Control-as-Inference objective.  The survival gate specifies the optimality
	potential.  The active-decision gate specifies the trajectory KL.
	
	\subsection{Reducing the gates}
	
	We next rewrite the semantic ELBOs as explicit objectives over ordinary
	\(\pi\)-controlled trajectories.  Let \(\mathbb P_\pi\) denote the law in which
	actions are drawn from \(\pi(\cdot\mid s)\) at every time step, while \(P\),
	\(C_t\), and \(D_t\) are generated as above.  Expectations over the marginal
	state-action sequence under this law are denoted by \(\mathbb E_{\tau\sim\pi}\).
	
	\begin{lemma}[Gated-prefix transfer]
		\label{lem:sdh-gated-prefix-transfer}
		Let \(f_t\) be any integrable function of the trajectory prefix up to
		\((s_t,a_t)\).  For either semantics \(\sigma\),
		\[
		\mathbb E_{q_\sigma}[\Gamma_t f_t]
		=
		\mathbb E_{\mathbb P_\pi}[\Gamma_t f_t],
		\qquad
		\mathbb E_{q_\sigma}[A_t^\sigma f_t]
		=
		\mathbb E_{\mathbb P_\pi}[A_t^\sigma f_t].
		\]
	\end{lemma}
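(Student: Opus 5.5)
The idea is to show that, on the event $\{A_t^\sigma=1\}$ (or $\{\Gamma_t=1\}$), the joint law of the prefix $(s_0,a_0,C_0,D_0,\ldots,s_t,a_t,C_t)$ is the same under $q_\sigma$ and under $\mathbb P_\pi$. Both sides of each identity multiply $f_t$ by an indicator that vanishes off this event, so the expectations agree.

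\textbf{Step 1 (active prefixes).} First I would show that $A_t^\sigma=1$ forces $A_k^\sigma=1$ for every $k\le t$. This follows from the recurrences $A_{k+1}^{\mathrm{AS}}=A_k^{\mathrm{AS}}(1-D_k)C_k$ and $A_{k+1}^{\mathrm{VT}}=A_k^{\mathrm{VT}}(1-D_k)$: the flag is nonincreasing and equals $0$ or $1$.

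\textbf{Step 2 (finite-prefix densities).} Next I would write the joint density of the prefix up to $(s_t,a_t)$ together with $C_{0:t}$ and $D_{0:t-1}$, for both laws, as
\[
\nu(s_0)\prod_{k=0}^{t}K_k(a_k\mid s_k,A_k^\sigma)\,\mathrm{Ber}(C_k;\alpha(s_k,a_k))\prod_{k=0}^{t-1}\mathrm{Ber}(D_k;1-\gamma)\,P(s_{k+1}\mid s_k,a_k).
\]
The kernels are $K=K_{q,\sigma}$ under $q_\sigma$ and $K\equiv\pi$ under $\mathbb P_\pi$. Multiplying by $A_t^\sigma$, Step 1 shows that every factor $K_{q,\sigma,k}$ with $k\le t$ equals $\pi(a_k\mid s_k)$, because the dummy branch $\lambda_\bot$ never enters. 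Hence the two densities multiplied by $A_t^\sigma$ coincide pointwise, and integrating against $f_t$ gives the second identity. For continuous spaces the same argument runs on the Radon–Nikodym level with respect to a common dominating measure, or equivalently by an induction on $k$ over conditional kernels.

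\textbf{Step 3 (reward gate).} For the first identity, I would note that $\Gamma_t=H_t\prod_{k\le t}C_k\le A_t^\sigma$ for both semantics. For AS, $\Gamma_t=A_t^{\mathrm{AS}}C_t$; for VT, $A_t^{\mathrm{VT}}=H_t$. So $\Gamma_t f_t=A_t^\sigma(\Gamma_t f_t)$, where $\Gamma_t f_t$ is itself an integrable function of the extended prefix including $C_{0:t}$ and $D_{0:t-1}$. Applying Step 2 to $g_t:=\Gamma_t f_t$ then yields the claim.

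The main obstacle is bookkeeping rather than depth. The prefix must include $C_t$: it is realized after $a_t$, and the densities still match because the feasibility kernel is common to both laws. One also needs $f_t$ (and the gates) to be allowed to depend on the auxiliary variables $C$ and $D$, not only on the state-action prefix. I would state explicitly that "prefix" means the full prefix $(s_{0:t},a_{0:t},C_{0:t},D_{0:t-1})$, which covers both uses.
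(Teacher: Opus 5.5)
Your proposal is correct and takes essentially the same route as the paper: on the gate event every decision up to $a_t$ is active, so $q_\sigma$ draws those actions from $\pi$ and the gated prefix law coincides with the one under $\mathbb P_\pi$, while off the event both integrands vanish. Your version makes the paper's terse argument explicit: the flags are monotone, the prefix densities agree pointwise on the gate event, and you reduce $\Gamma_t f_t = A_t^\sigma \Gamma_t f_t$ to the $A_t^\sigma$ case, where the paper instead argues directly that $\Gamma_t=1$ forces $H_t=1$ and $C_0=\cdots=C_t=1$.
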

	
	\begin{proof}
		If \(\Gamma_t=1\), then \(H_t=1\) and \(C_0=\cdots=C_t=1\).  In particular, all
		decisions needed to generate the prefix up to \(a_t\) are active under both AS
		and VT, so \(q_\sigma\) uses \(\pi\) at those decision times.  Therefore the
		gated prefix distribution agrees with the corresponding prefix distribution
		under \(\mathbb P_\pi\).  If \(\Gamma_t=0\), the contribution is zero.
		
		The same argument applies to \(A_t^\sigma f_t\).  On the event
		\(A_t^\sigma=1\), the prefix up to \(a_t\) is generated using active
		\(\pi\)-decisions under \(q_\sigma\); on the complement, the contribution is
		zero.
	\end{proof}
	
	\begin{lemma}[Conditional expectations of the gates]
		\label{lem:sdh-gate-expectations}
		Condition on a realized state-action sequence generated by \(\pi\).  Then
		\[
		\mathbb E[\Gamma_t\mid \tau]
		=
		\gamma^t
		\prod_{k=0}^{t}\alpha_k,
		\]
		and
		\[
		\mathbb E[A_t^{\mathrm{AS}}\mid \tau]
		=
		\gamma^t
		\prod_{k=0}^{t-1}\alpha_k,
		\qquad
		\mathbb E[A_t^{\mathrm{VT}}\mid \tau]
		=
		\gamma^t.
		\]
	\end{lemma}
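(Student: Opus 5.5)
The plan is to compute each conditional expectation by exploiting independence of the auxiliary gate variables given the state-action sequence. Under $\mathbb P_\pi$, conditioned on $\tau=(s_0,a_0,s_1,\ldots)$, the horizon variables $D_0,D_1,\ldots$ are i.i.d.\ Bernoulli$(1-\gamma)$ and independent of $\tau$. The feasibility variables $C_0,C_1,\ldots$ are conditionally independent Bernoulli$(\alpha_k)$ with $\alpha_k=\alpha(s_k,a_k)$. Moreover, the family $\{D_k\}$ is independent of the family $\{C_k\}$ given $\tau$.

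I will first check this independence structure against the generative description preceding Lemma~\ref{lem:sdh-gated-prefix-transfer}, since it is the one step that is not purely mechanical. Under $\mathbb P_\pi$, actions are always drawn from $\pi$ regardless of $C$ and $D$. Hence $C$ and $D$ never feed back into the trajectory, unlike under $q_\sigma$, where the active flag could switch to $\lambda_\bot$. Conditioning on $\tau$ is therefore legitimate and leaves $D$ untouched and each $C_k$ depending only on $(s_k,a_k)$. This is why the lemma is stated under $\mathbb P_\pi$, with Lemma~\ref{lem:sdh-gated-prefix-transfer} transferring it to $q_\sigma$ afterwards.

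With this in hand, each gate is a product of conditionally independent indicators, so the conditional expectation factorizes:
\[
\mathbb E[H_t\mid\tau]=\prod_{k=0}^{t-1}\mathbb E[1-D_k]=\gamma^t,
\qquad
\mathbb E\Big[\prod_{k=0}^{m}C_k\,\Big|\,\tau\Big]=\prod_{k=0}^{m}\alpha_k .
\]
Taking $m=t$ gives
\[
\mathbb E[\Gamma_t\mid\tau]=\mathbb E[H_t\mid\tau]\,\mathbb E\Big[\prod_{k\le t}C_k\,\Big|\,\tau\Big]=\gamma^t\prod_{k=0}^{t}\alpha_k .
\]
Taking $m=t-1$, with the empty product equal to one when $t=0$, gives the AS identity. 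The VT identity is just the formula for $\mathbb E[H_t\mid\tau]$.

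The only obstacle I anticipate is this measure-theoretic bookkeeping: making the conditioning precise and checking that nothing in the active-flag recursion affects the laws of $C$ and $D$ under $\mathbb P_\pi$. The rest is direct computation.
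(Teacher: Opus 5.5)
Your proof is correct and follows the paper's argument: given $\tau$, the $C_k$ are independent with $\mathbb E[C_k\mid\tau]=\alpha_k$, the $D_k$ are independent of $\tau$ and of the $C_k$ with $\mathbb E[1-D_k]=\gamma$, and each product gate then factorizes. Your extra check that under $\mathbb P_\pi$ the gates never feed back into action selection makes explicit what the paper uses implicitly, namely why conditioning on $\tau$ leaves these laws unchanged.
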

	
	\begin{proof}
		Conditioned on \(\tau\), the variables \(\{C_k\}_{k\ge0}\) are independent with
		\(\mathbb E[C_k\mid\tau]=\alpha_k\).  The variables \(\{D_k\}_{k\ge0}\) are
		independent of \(\tau\) and satisfy \(\mathbb E[1-D_k]=\gamma\).  Therefore
		\[
		\mathbb E[H_t]
		=
		\mathbb E\left[
		\prod_{k=0}^{t-1}(1-D_k)
		\right]
		=
		\gamma^t.
		\]
		For the reward gate,
		\[
		\mathbb E[\Gamma_t\mid\tau]
		=
		\mathbb E\left[
		H_t\prod_{k=0}^{t}C_k
		\,\middle|\,\tau
		\right]
		=
		\gamma^t\prod_{k=0}^{t}\alpha_k.
		\]
		For AS,
		\[
		\mathbb E[A_t^{\mathrm{AS}}\mid\tau]
		=
		\mathbb E\left[
		H_t\prod_{k=0}^{t-1}C_k
		\,\middle|\,\tau
		\right]
		=
		\gamma^t\prod_{k=0}^{t-1}\alpha_k.
		\]
		For VT, \(A_t^{\mathrm{VT}}=H_t\), so
		\[
		\mathbb E[A_t^{\mathrm{VT}}\mid\tau]=\gamma^t.
		\]
	\end{proof}
	
	\subsection{AS/VT classification theorem}
	
	Informally, AS and VT have the same survival-weighted reward because they use the
	same optimality potential.  They differ only in the KL term, because they make
	different commitments about which post-violation actions remain real decisions.
	
	\begin{theorem}[Exact AS and VT objectives]
		\label{thm:as-vt-elbo}
		The semantic SDH ELBOs reduce to
		\[
		\begin{aligned}
			\mathcal J_{\mathrm{AS}}(\pi,\pi_0)
			=
			\mathbb E_{\tau\sim\pi}
			\Bigg[
			\sum_{t\ge0}
			\gamma^t
			\left(
			\prod_{k=0}^{t}\alpha_k
			\right)
			r_t
			-
			\kappa
			\sum_{t\ge0}
			\gamma^t
			\left(
			\prod_{k=0}^{t-1}\alpha_k
			\right)
			\log\frac{\pi(a_t\mid s_t)}{\pi_0(a_t\mid s_t)}
			\Bigg],
		\end{aligned}
		\]
		and
		\[
		\begin{aligned}
			\mathcal J_{\mathrm{VT}}(\pi,\pi_0)
			=
			\mathbb E_{\tau\sim\pi}
			\Bigg[
			\sum_{t\ge0}
			\gamma^t
			\left(
			\prod_{k=0}^{t}\alpha_k
			\right)
			r_t
			-
			\kappa
			\sum_{t\ge0}
			\gamma^t
			\log\frac{\pi(a_t\mid s_t)}{\pi_0(a_t\mid s_t)}
			\Bigg].
		\end{aligned}
		\]
	\end{theorem}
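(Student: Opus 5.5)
We start from the semantic ELBO of Thm.~\ref{thm:sdh-semantic-elbo}, which already writes $\mathcal J_\sigma(\pi,\pi_0)$ as a $q_\sigma$-expectation of $\sum_t\Gamma_t r_t-\kappa\sum_t A_t^\sigma\log\frac{\pi(a_t\mid s_t)}{\pi_0(a_t\mid s_t)}$. The plan is to move the expectation term by term from $q_\sigma$ to $\mathbb P_\pi$, and then integrate out the auxiliary variables $C,D$ by conditioning on the state-action sequence.

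\textbf{Step 1: interchange sum and expectation.} Rewards are bounded and $\mathbb E[\Gamma_t]\le\gamma^t$, so $\sum_t\Gamma_t r_t$ is dominated by $R_{\max}\sum_t H_t$, which has expectation $(1-\gamma)^{-1}$. For the KL part we use the integrability assumption stated before Lemma~\ref{lem:sdh-semantic-kl}; specifically, we take it to mean that $\sum_t A_t^\sigma\lvert\log(\pi/\pi_0)\rvert$ is integrable. With these bounds, Fubini lets us write both sums as $\sum_t \mathbb E_{q_\sigma}[\cdot]$.

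\textbf{Step 2: transfer the law.} The summand $r_t$ and the log-ratio at time $t$ are functions of the prefix up to $(s_t,a_t)$. Lemma~\ref{lem:sdh-gated-prefix-transfer}, applied with $f_t=r_t$ and with $f_t=\log\frac{\pi(a_t\mid s_t)}{\pi_0(a_t\mid s_t)}$, replaces $\mathbb E_{q_\sigma}[\Gamma_t f_t]$ and $\mathbb E_{q_\sigma}[A^\sigma_t f_t]$ by the same expectations under $\mathbb P_\pi$.

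\textbf{Step 3: tower property.} Under $\mathbb P_\pi$ we condition on $\tau$. The term $f_t$ is $\tau$-measurable, so Lemma~\ref{lem:sdh-gate-expectations} yields the following:
\begin{itemize}
\item the reward gate $\Gamma_t$ becomes $\gamma^t\prod_{k\le t}\alpha_k$;
\item the AS gate becomes $\gamma^t\prod_{k<t}\alpha_k$;
\item the VT gate becomes $\gamma^t$.
\end{itemize}
These are exactly the displayed weights. The marginal law of $\tau$ under $\mathbb P_\pi$ is the ordinary $\pi$-controlled trajectory law, because $C$ and $D$ do not feed back into the dynamics or the actions. We then resum over $t$ using the same domination as in Step 1, which gives the two formulas.

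\textbf{Main obstacle.} The delicate point is Step 2, specifically why Lemma~\ref{lem:sdh-gated-prefix-transfer} holds. On the gated event, every action up to $a_t$ was drawn from $\pi$ under $q_\sigma$. Under $\mathbb P_\pi$ the same holds unconditionally.

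For the $\Gamma_t$ term under VT this is immediate, since $H_t=1$ forces all earlier decisions to be active. Under AS, $\Gamma_t=1$ implies $A_k^{\rm AS}=1$ for all $k\le t$.

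We therefore need to check that the joint law of the prefix together with the gate variables coincides under $q_\sigma$ and $\mathbb P_\pi$ on that event. To do this, we write both prefix densities as products of kernels. The two products agree factor by factor whenever all $A_k^\sigma=1$.

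This is also where the one-step offset matters: $A_t^{\rm AS}$ depends only on $C_0,\dots,C_{t-1}$. As a result, conditioning on $\tau$ leaves the product $\prod_{k<t}\alpha_k$ rather than $\prod_{k\le t}\alpha_k$ in the AS entropy term.
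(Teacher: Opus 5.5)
Your proposal is correct and follows the paper's proof essentially step for step. You start from the semantic ELBO of Thm.~\ref{thm:sdh-semantic-elbo}, move each gated term from $q_\sigma$ to $\mathbb P_\pi$ via Lem.~\ref{lem:sdh-gated-prefix-transfer}, and integrate out $C,D$ with the tower property and Lem.~\ref{lem:sdh-gate-expectations}, adding explicit Fubini/dominated-convergence justification for swapping and re-summing the time sums, which the paper leaves implicit.
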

	
	\begin{proof}
		By Thm.~\ref{thm:sdh-semantic-elbo},
		\[
		\mathcal J_\sigma(\pi,\pi_0)
		=
		\mathbb E_{q_\sigma}
		\left[
		\sum_{t\ge0}\Gamma_t r_t
		-
		\kappa
		\sum_{t\ge0}A_t^\sigma\ell_t^\pi
		\right],
		\qquad
		\ell_t^\pi
		:=
		\log\frac{\pi(a_t\mid s_t)}{\pi_0(a_t\mid s_t)}.
		\]
		
		For the reward term, Lem.~\ref{lem:sdh-gated-prefix-transfer} allows us to
		replace \(q_\sigma\) by the ordinary \(\pi\)-trajectory law on gated prefixes.
		Then the tower property and Lem.~\ref{lem:sdh-gate-expectations} give
		\[
		\begin{aligned}
			\mathbb E_{q_\sigma}
			\left[
			\sum_{t\ge0}\Gamma_t r_t
			\right]
			=
			\mathbb E_{\tau\sim\pi}
			\left[
			\sum_{t\ge0}
			\mathbb E[\Gamma_t\mid\tau]r_t
			\right]
			=
			\mathbb E_{\tau\sim\pi}
			\left[
			\sum_{t\ge0}
			\gamma^t
			\left(
			\prod_{k=0}^{t}\alpha_k
			\right)
			r_t
			\right].
		\end{aligned}
		\]
		This term is common to AS and VT.
		
		For the AS policy-ratio term,
		\[
		\begin{aligned}
			\mathbb E_{q_{\mathrm{AS}}}
			\left[
			\sum_{t\ge0}A_t^{\mathrm{AS}}\ell_t^\pi
			\right]
			=
			\mathbb E_{\tau\sim\pi}
			\left[
			\sum_{t\ge0}
			\mathbb E[A_t^{\mathrm{AS}}\mid\tau]\ell_t^\pi
			\right]
			=
			\mathbb E_{\tau\sim\pi}
			\left[
			\sum_{t\ge0}
			\gamma^t
			\left(
			\prod_{k=0}^{t-1}\alpha_k
			\right)
			\ell_t^\pi
			\right].
		\end{aligned}
		\]
		For the VT policy-ratio term,
		\[
		\begin{aligned}
			\mathbb E_{q_{\mathrm{VT}}}
			\left[
			\sum_{t\ge0}A_t^{\mathrm{VT}}\ell_t^\pi
			\right]
			=
			\mathbb E_{\tau\sim\pi}
			\left[
			\sum_{t\ge0}
			\mathbb E[A_t^{\mathrm{VT}}\mid\tau]\ell_t^\pi
			\right]
			=
			\mathbb E_{\tau\sim\pi}
			\left[
			\sum_{t\ge0}
			\gamma^t
			\ell_t^\pi
			\right].
		\end{aligned}
		\]
		Substituting the reward and policy-ratio terms into the semantic ELBO yields the
		two objectives.
	\end{proof}
	
	The theorem isolates the missing invariant.  An SDH regularizer is ELBO-valid
	only when it is the trajectory KL of an actual semantic law.  AS and VT satisfy
	this requirement in different ways.  They share the same survival-gated reward
	law, but disagree on which policy likelihood factors remain active after a
	violation.
	
	\subsection{Consequences for dynamic programming}
    \label{app:bellman-consequences}
	
	The classification has a direct algorithmic consequence.  AS preserves the
	standard soft Bellman structure under a variable discount.  VT separates the
	survival-return critic from an ordinary discounted policy-divergence term.
	
	For AS, define
	\[
	u_t:=\gamma^t\prod_{k=0}^{t-1}\alpha_k.
	\]
	Then Thm.~\ref{thm:as-vt-elbo} gives
	\[
	\mathcal J_{\mathrm{AS}}(\pi,\pi_0)
	=
	\mathbb E_{\tau\sim\pi}
	\left[
	\sum_{t\ge0}
	u_t
	\left(
	\alpha_t r_t
	-
	\kappa \ell_t^\pi
	\right)
	\right].
	\]
	Thus AS admits the soft Bellman recursion
	\[
	Q_{\mathrm{AS}}^\pi(s,a)
	=
	\alpha(s,a)r(s,a)
	+
	\gamma\alpha(s,a)
	\mathbb E_{s'\sim P(\cdot\mid s,a)}
	\left[
	V_{\mathrm{AS}}^\pi(s')
	\right],
	\]
	with
	\[
	V_{\mathrm{AS}}^\pi(s)
	=
	\mathbb E_{a\sim\pi(\cdot\mid s)}
	\left[
	Q_{\mathrm{AS}}^\pi(s,a)
	-
	\kappa
	\log\frac{\pi(a\mid s)}{\pi_0(a\mid s)}
	\right].
	\]
	This preserves the usual timing convention: the reward appears in the
	\(Q\)-backup, while the policy regularizer appears in the \(V\)-backup.  The
	current feasibility factor \(\alpha(s,a)\) gates the current reward and future
	value, but it does not gate the information cost of choosing the current action.
	
	VT has a different structure.  Its objective decomposes as
	\[
	\mathcal J_{\mathrm{VT}}(\pi,\pi_0)
	=
	R_{\mathrm{surv}}^\pi
	-
	\kappa K^\pi,
	\]
	where
	\[
	R_{\mathrm{surv}}^\pi
	:=
	\mathbb E_{\tau\sim\pi}
	\left[
	\sum_{t\ge0}
	\gamma^t
	\left(
	\prod_{k=0}^{t}\alpha_k
	\right)
	r_t
	\right],
	\]
	and
	\[
	K^\pi
	:=
	\mathbb E_{\tau\sim\pi}
	\left[
	\sum_{t\ge0}
	\gamma^t
	\log\frac{\pi(a_t\mid s_t)}{\pi_0(a_t\mid s_t)}
	\right].
	\]
	The survival reward component satisfies the variable-discount recursion
	\[
	Q_{\mathrm{surv}}^\pi(s,a)
	=
	\alpha(s,a)r(s,a)
	+
	\gamma\alpha(s,a)
	\mathbb E_{s'\sim P(\cdot\mid s,a)}
	\left[
	V_{\mathrm{surv}}^\pi(s')
	\right],
	\]
	with
	\[
	V_{\mathrm{surv}}^\pi(s)
	=
	\mathbb E_{a\sim\pi(\cdot\mid s)}
	\left[
	Q_{\mathrm{surv}}^\pi(s,a)
	\right].
	\]
	The KL component instead follows the ordinary discounted recursion
	\[
	K^\pi(s)
	=
	\mathbb E_{a\sim\pi(\cdot\mid s)}
	\left[
	\log\frac{\pi(a\mid s)}{\pi_0(a\mid s)}
	+
	\gamma
	\mathbb E_{s'\sim P(\cdot\mid s,a)}
	K^\pi(s')
	\right].
	\]
	Therefore VT is not a single standard soft Bellman recursion on the original MDP
	with one \(Q\)-function and one \(V\)-function.  Its reward and KL components live
	under different continuation structures.  This is precisely the algorithmic
	distinction used in the main text: AS yields a SAC-style variable-discount soft
	Bellman operator, whereas VT naturally separates a survival-return critic from an
	ordinary discounted policy-divergence constraint, matching the MPO-style
	variational picture.
	
	\subsection{Stability of the variable-discount critic}
    \label{app:contraction}
	
	Both semantics use the same survival-return critic.  The relevant evaluation
	operator has reward \(\bar r(s,a)=\alpha(s,a)r(s,a)\) and discount
	\(\bar\gamma(s,a)=\gamma\alpha(s,a)\).  Since \(\alpha(s,a)\in[0,1]\), the
	effective discount satisfies \(0\le \bar\gamma(s,a)\le \gamma<1\).
	
	\begin{lemma}[Variable-discount contraction]
		\label{lem:contraction-var-disc}
		\label{lem:sdh-variable-discount-contraction}
		Let \(\bar r\) be bounded and let
		\(\bar\gamma:\mathcal S\times\mathcal A\to[0,\gamma]\).  For a fixed policy
		\(\pi\), define
		\[
		(\mathcal T_{\bar r,\bar\gamma}^{\pi}V)(s)
		:=
		\mathbb E_{a\sim\pi(\cdot\mid s)}
		\left[
		\bar r(s,a)
		+
		\bar\gamma(s,a)
		\mathbb E_{s'\sim P(\cdot\mid s,a)}
		V(s')
		\right].
		\]
		Then \(\mathcal T_{\bar r,\bar\gamma}^{\pi}\) is a \(\gamma\)-contraction in
		the sup norm:
		\[
		\left\|
		\mathcal T_{\bar r,\bar\gamma}^{\pi}V
		-
		\mathcal T_{\bar r,\bar\gamma}^{\pi}W
		\right\|_\infty
		\le
		\gamma
		\|V-W\|_\infty.
		\]
		Consequently, it has a unique fixed point and value iteration converges to it
		from any bounded initialization.
	\end{lemma}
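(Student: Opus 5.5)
The argument is the standard Banach fixed-point one for policy evaluation; the only new feature is the state-action dependent discount. First I will check that $\mathcal T_{\bar r,\bar\gamma}^{\pi}$ maps the space $\mathcal B(\mathcal S)$ of bounded measurable functions, with the sup norm, into itself. For bounded $V$, we have $|(\mathcal T^\pi_{\bar r,\bar\gamma}V)(s)|\le\|\bar r\|_\infty+\gamma\|V\|_\infty$. Measurability follows from the measurability of $\bar r$, $\bar\gamma$, $P$ and $\pi$ under the standing assumptions; in the finite case it is automatic. Since $\mathcal B(\mathcal S)$ is complete under the sup norm, the Banach fixed-point theorem is available once the contraction estimate is shown.

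For the contraction estimate, take $V,W$ bounded and subtract. The reward terms $\bar r(s,a)$ cancel exactly because they do not depend on the value function, so
\[
(\mathcal T^\pi_{\bar r,\bar\gamma}V)(s)-(\mathcal T^\pi_{\bar r,\bar\gamma}W)(s)
=\mathbb E_{a\sim\pi(\cdot\mid s)}\Big[\bar\gamma(s,a)\,\mathbb E_{s'\sim P(\cdot\mid s,a)}\big[V(s')-W(s')\big]\Big].
\]
I then bound the absolute value by moving it inside both expectations (Jensen, or the triangle inequality for integrals). Next I use $0\le\bar\gamma(s,a)\le\gamma$ pointwise, which holds here because $\bar\gamma=\gamma\alpha$ with $\alpha\in[0,1]$. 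Finally I bound $|V(s')-W(s')|\le\|V-W\|_\infty$. This gives $\gamma\|V-W\|_\infty$ for every $s$, and taking the supremum over $s$ yields the claim.

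Existence and uniqueness of the fixed point, and geometric convergence $\|(\mathcal T^\pi)^nV_0-V^\star\|_\infty\le\gamma^n\|V_0-V^\star\|_\infty$ from any bounded $V_0$, then follow directly from Banach's theorem. I do not expect a real obstacle. The only point requiring care is the nonnegativity of $\bar\gamma$: it is what allows $|\bar\gamma\,x|\le\gamma|x|$ without any sign issues. If one prefers the $Q$-form in \eqref{eq:Qsurv_bellman_app_v2}, the same computation goes through verbatim, with the inner $\mathbb E_{a'\sim\pi}$ absorbed into the sup-norm bound.
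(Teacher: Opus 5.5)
Your proposal is correct and follows the paper's proof essentially step for step: the reward terms cancel, the absolute value moves inside both expectations, you use $0\le\bar\gamma\le\gamma$ and $|V-W|\le\|V-W\|_\infty$, and you conclude with Banach's fixed-point theorem. Your explicit check that $\mathcal T^{\pi}_{\bar r,\bar\gamma}$ maps bounded functions to bounded functions, so that Banach applies on a complete space, is a small detail the paper leaves implicit. Also, $\bar\gamma\in[0,\gamma]$ is already a hypothesis of the lemma, so you need not appeal to $\bar\gamma=\gamma\alpha$.
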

	
	\begin{proof}
		For any bounded \(V,W\) and any \(s\),
		\[
		\begin{aligned}
			&
			\left|
			(\mathcal T_{\bar r,\bar\gamma}^{\pi}V)(s)
			-
			(\mathcal T_{\bar r,\bar\gamma}^{\pi}W)(s)
			\right|
			\\
			&\qquad
			=
			\left|
			\mathbb E_{a\sim\pi(\cdot\mid s)}
			\left[
			\bar\gamma(s,a)
			\mathbb E_{s'\sim P(\cdot\mid s,a)}
			\big(V(s')-W(s')\big)
			\right]
			\right|
			\\
			&\qquad
			\le
			\mathbb E_{a\sim\pi(\cdot\mid s)}
			\left[
			\bar\gamma(s,a)
			\mathbb E_{s'\sim P(\cdot\mid s,a)}
			|V(s')-W(s')|
			\right]
			\\
			&\qquad
			\le
			\gamma\|V-W\|_\infty.
		\end{aligned}
		\]
		Taking the supremum over \(s\) gives the contraction bound.  Existence,
		uniqueness, and convergence follow from the Banach fixed point theorem.
	\end{proof}
	
	Thus SDH shortens credit assignment locally without breaking the contraction
	structure of discounted policy evaluation.
	
	\subsection{Summary: ELBO-valid SDH regularization}
	
	The survival-occupancy section showed why SDH is the exact Lagrangian form of a
	survival-gated random-time chance problem.  This section shows what additional
	structure is needed to place that objective inside Control as Inference.  The
	shared reward gate \(\Gamma_t=H_t\prod_{k=0}^{t}C_k\) defines the common
	optimality likelihood, while the active decision gates
	\(A_t^{\mathrm{AS}}=H_t\prod_{k=0}^{t-1}C_k\) and
	\(A_t^{\mathrm{VT}}=H_t\) define which policy likelihood factors appear in the
	semantic trajectory laws.  The resulting gated log-ratio terms are therefore
	real trajectory KLs, not pseudo-KL penalties.
	
	This is the conceptual separation behind the two algorithms.  AS and VT share
	the same survival-gated reward model.  They differ only in the semantics of
	post-violation decisions, and therefore only in the KL regularizer induced by the
	ELBO.  The following section specializes the AS objective to a normalized uniform
	prior, where the KL term becomes a feasibility-weighted entropy term plus an
	AS-specific living cost; this yields the AS-SAC recursion and temperature
	updates.

    \newpage

    \section{Absorbing State: Uniform Prior, AS-SAC, and Dual Temperature Updates}
    \label{app:as-sac-unified}
    
    This section derives the algorithmic consequences of the absorbing-state (AS) semantics when the CaI prior is a normalized uniform policy. The main point is simple but important: under AS, constants in the policy-ratio term are no longer globally constant, because they are accumulated only while the stochastic decision horizon survives. This yields a feasibility-weighted living cost, preserves a SAC-style Bellman recursion under survival shaping, and motivates a two-critic implementation for temperature learning.
    
    \subsection{AS objective under a normalized uniform prior}
    \label{app:as:uniform-obj}
    
    Recall the AS objective from Thm.~\ref{thm:as-vt-elbo}:
    \begin{equation}
    	\label{eq:as-obj-recall}
    	\mathcal J_{\mathrm{AS}}(\pi)
    	=
    	\mathbb E_{\tau\sim\pi}\Bigg[
    	\sum_{t\ge 0}\gamma^t
    	\Big(\prod_{k=0}^{t}\alpha_k\Big)r_t
    	-
    	\kappa
    	\sum_{t\ge 0}\gamma^t
    	\Big(\prod_{k=0}^{t-1}\alpha_k\Big)
    	\log\frac{\pi(a_t\mid s_t)}{\pi_0(a_t\mid s_t)}
    	\Bigg],
    \end{equation}
    where $\alpha_t=\alpha(s_t,a_t)$ and $r_t=r(s_t,a_t)$.
    Assume throughout this section that the reference policy is normalized uniform:
    \begin{equation}
    	\label{eq:uniform-prior}
    	\log \pi_0(a\mid s)=-\ell_c
    	\qquad \forall (s,a).
    \end{equation}
    For a finite action space, $\ell_c=\log|\mathcal A|$. In continuous control, $\ell_c$ should be interpreted relative to a chosen reference measure, and is implemented through a target-entropy constant.
    
    Define the AS survival-to-decision weight
    \begin{equation}
    	\label{eq:wt-def}
    	w_t(\tau)
    	:=
    	\gamma^t\prod_{k=0}^{t-1}\alpha(s_k,a_k),
    	\qquad
    	w_0=1,
    \end{equation}
    and the survival-shaped reward and discount
    \begin{equation}
    	\label{eq:tilde-r-gamma}
    	\tilde r(s,a):=\alpha(s,a)r(s,a),
    	\qquad
    	\tilde\gamma(s,a):=\gamma\alpha(s,a).
    \end{equation}
    
    \begin{corollary}[CaI-AS with a normalized uniform prior]
    	\label{cor:as-uniform-structure}
    	Under \eqref{eq:uniform-prior}, the AS objective can be written as
    	\begin{equation}
    		\label{eq:as-uniform-obj}
    		\mathcal J_{\mathrm{AS}}(\pi)
    		=
    		\mathbb E_{\tau\sim\pi}
    		\Bigg[
    		\sum_{t\ge0}
    		w_t(\tau)
    		\Big(
    		\tilde r(s_t,a_t)
    		+
    		\kappa\,\mathcal H(\pi(\cdot\mid s_t))
    		-
    		\kappa\,\ell_c
    		\Big)
    		\Bigg].
    	\end{equation}
    	Equivalently, with
    	\begin{equation}
    		\label{eq:Zpi-def}
    		Z(\pi):=
    		\mathbb E_{\tau\sim\pi}
    		\Big[
    		\sum_{t\ge0}w_t(\tau)
    		\Big],
    	\end{equation}
    	we have the decomposition
    	\begin{equation}
    		\label{eq:as-decomp}
    		\mathcal J_{\mathrm{AS}}(\pi)
    		=
    		\underbrace{
    		\mathbb E_{\tau\sim\pi}
    		\Big[
    		\sum_{t\ge0}
    		w_t(\tau)
    		\big(
    		\tilde r_t+\kappa\mathcal H(\pi(\cdot\mid s_t))
    		\big)
    		\Big]
    		}_{\text{survival-shaped maximum-entropy return}}
    		-
    		\underbrace{\kappa\ell_c Z(\pi)}_{\text{feasibility-weighted living cost}}.
    	\end{equation}
    \end{corollary}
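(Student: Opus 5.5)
Starting from the AS objective \eqref{eq:as-obj-recall} given by Thm.~\ref{thm:as-vt-elbo}, I treat the reward term and the policy-ratio term separately. For the reward term I simply regroup the product: since $\prod_{k=0}^{t}\alpha_k=\big(\prod_{k=0}^{t-1}\alpha_k\big)\alpha_t$, we get $\gamma^t\big(\prod_{k=0}^{t}\alpha_k\big)r_t=w_t(\tau)\,\alpha_t r_t=w_t(\tau)\,\tilde r(s_t,a_t)$ by \eqref{eq:wt-def} and \eqref{eq:tilde-r-gamma}. This is purely algebraic and holds pathwise.

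For the policy-ratio term I substitute the uniform prior \eqref{eq:uniform-prior}, which gives $\log\frac{\pi(a_t\mid s_t)}{\pi_0(a_t\mid s_t)}=\log\pi(a_t\mid s_t)+\ell_c$. The term therefore splits as $-\kappa\sum_t w_t\log\pi(a_t\mid s_t)-\kappa\ell_c\sum_t w_t$. The second piece is already the living cost, and after taking expectation it equals $-\kappa\ell_c Z(\pi)$ by \eqref{eq:Zpi-def}.

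The key step is converting $\mathbb E[w_t\log\pi(a_t\mid s_t)]$ into $-\mathbb E[w_t\,\mathcal H(\pi(\cdot\mid s_t))]$. I condition on the prefix $h_t=(s_0,a_0,\dots,s_{t-1},a_{t-1},s_t)$. The weight $w_t$ depends only on $\alpha_0,\dots,\alpha_{t-1}$, so it is $h_t$-measurable, and under the $\pi$-trajectory law $a_t\sim\pi(\cdot\mid s_t)$ given $h_t$. The tower property then yields $\mathbb E[w_t\log\pi(a_t\mid s_t)]=\mathbb E\big[w_t\,\mathbb E_{a\sim\pi(\cdot\mid s_t)}\log\pi(a\mid s_t)\big]=-\mathbb E[w_t\mathcal H(\pi(\cdot\mid s_t))]$. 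This is exactly where the one-step offset of the AS decision gate matters. If the weight were $\prod_{k\le t}\alpha_k$, it would depend on $a_t$ and the conversion would fail. I will flag this point explicitly.

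Finally, I exchange the sum and the expectation. Rewards are bounded, $0\le w_t\le\gamma^t$, and $\ell_c$ is constant, so those series are absolutely summable. For the entropy term I rely on the integrability assumption stated before Lem.~\ref{lem:sdh-semantic-kl}; it is trivial for finite $\mathcal A$, where $|\mathcal H|\le\log|\mathcal A|$. Fubini then lets me apply the per-$t$ identities termwise. Summing the three pieces gives \eqref{eq:as-uniform-obj}, and separating the constant $-\kappa\ell_c$ gives the decomposition \eqref{eq:as-decomp}. The only step I expect to need care is the measurability and integrability justification in the entropy conversion, especially in the continuous case, where $\ell_c$ is defined relative to a reference measure.
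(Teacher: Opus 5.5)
Your proposal is correct and takes the same route as the paper's proof: substitute $\log\pi_0=-\ell_c$, regroup $\gamma^t\prod_{k=0}^{t}\alpha_k r_t=w_t\tilde r_t$, turn the log-policy term into entropy by averaging over $a_t\sim\pi(\cdot\mid s_t)$, and collect the constant into $\kappa\ell_c Z(\pi)$. Your version conditions on the full prefix $h_t$ and uses that $w_t$ is $h_t$-measurable. That is the rigorous form of the paper's looser ``$\mathbb E[\cdot\mid s_t]$'' step, and together with your Fubini and integrability remarks it makes explicit what the paper leaves implicit.
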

    
    \begin{proof}
    	Substituting $\log\pi_0(a\mid s)=-\ell_c$ into \eqref{eq:as-obj-recall} gives
    	\[
    	-\kappa
    	\log\frac{\pi(a_t\mid s_t)}{\pi_0(a_t\mid s_t)}
    	=
    	-\kappa\log\pi(a_t\mid s_t)-\kappa\ell_c .
    	\]
    	Moreover,
    	\[
    	\gamma^t\prod_{k=0}^{t}\alpha_k r_t
    	=
    	w_t(\tau)\alpha_t r_t
    	=
    	w_t(\tau)\tilde r_t .
    	\]
    	Taking the conditional expectation over $a_t\sim\pi(\cdot\mid s_t)$ turns
    	$-\mathbb E[\log\pi(a_t\mid s_t)\mid s_t]$ into
    	$\mathcal H(\pi(\cdot\mid s_t))$, yielding \eqref{eq:as-uniform-obj}.
    	The decomposition \eqref{eq:as-decomp} follows by collecting the constant per-decision term into $Z(\pi)$.
    \end{proof}
    
    The corollary identifies the main difference from standard maximum-entropy RL. With ordinary geometric discounting, a constant per step contributes a fixed offset proportional to $(1-\gamma)^{-1}$. Under AS, the corresponding mass $Z(\pi)$ depends on the policy through $\alpha(s,a)$. Hence the living-cost term cannot generally be discarded without changing the optimization problem.
    
    \begin{remark}[Finite action spaces]
    	\label{rem:finite-A-horizon-regularization}
    	If $|\mathcal A|<\infty$ and $\pi_0(a\mid s)=1/|\mathcal A|$, then $\ell_c=\log|\mathcal A|$ and
    	\[
    	\kappa\ell_c Z(\pi)
    	=
    	\kappa\log(|\mathcal A|)\,\cdot\,
    	\mathbb E_{\tau\sim\pi}
    	\Big[
    	\sum_{t\ge0}
    	\gamma^t\prod_{k=0}^{t-1}\alpha(s_k,a_k)
    	\Big].
    	\]
    	Thus the normalized uniform prior induces a regularization on the expected feasibility-weighted decision horizon.
    \end{remark}
    
    \begin{counterexample}[Dropping the living cost can change the optimizer]
    	\label{cex:living-cost-changes-opt-stochastic}
    	Consider the one-state MDP with $\mathcal S=\{s\}$ and
    	$\mathcal A=\{a_{\mathrm{cont}},a_{\mathrm{stop}}\}$.
    	The transition is $P(s\mid s,a)=1$ for both actions,
    	\[
    	\alpha(s,a_{\mathrm{cont}})=1,\qquad
    	\alpha(s,a_{\mathrm{stop}})=0,
    	\]
    	and
    	\[
    	r(s,a_{\mathrm{cont}})=r>0,\qquad
    	r(s,a_{\mathrm{stop}})=0.
    	\]
    	Let $\pi_p(a_{\mathrm{cont}}\mid s)=p$ and
    	$\pi_p(a_{\mathrm{stop}}\mid s)=1-p$, with $p\in(0,1)$.
    	For a normalized uniform prior over the two actions, $\ell_c=\log2$.
    	Writing $h(p):=-p\log p-(1-p)\log(1-p)$, the exact AS objective is
    	\begin{equation}
    		\mathcal J_{\mathrm{AS}}(\pi_p)
    		=
    		\frac{pr+\kappa h(p)-\kappa\log2}{1-\gamma p}.
    	\end{equation}
    	If the living-cost term is removed, the resulting objective becomes
    	\begin{equation}
    		\mathcal J_{\mathrm{AS\text{-}N}}(\pi_p)
    		=
    		\frac{pr+\kappa h(p)}{1-\gamma p}.
    	\end{equation}
    	These objectives can have different strictly stochastic maximizers. For example, with
    	$(\gamma,\kappa,r)=(0.9,1,0.4)$,
    	\[
    	\arg\max_{p\in(0,1)}\mathcal J_{\mathrm{AS}}(\pi_p)\approx0.707,
    	\qquad
    	\arg\max_{p\in(0,1)}\mathcal J_{\mathrm{AS\text{-}N}}(\pi_p)\approx0.984.
    	\]
    \end{counterexample}
    
    \subsection{AS admits a SAC-style Bellman recursion}
    \label{app:as:sac-recursion}
    
    The preceding result explains why AS is not identical to standard SAC. Nevertheless, AS retains the essential algorithmic structure of SAC: policy evaluation is a soft Bellman recursion on the survival-shaped MDP.
    
    \begin{theorem}[AS soft policy evaluation]
    	\label{thm:as-soft-policy-evaluation}
    	Fix a policy $\pi$ and a normalized uniform prior. Under AS semantics, the corresponding soft action-value function satisfies
    	\begin{align}
    		\label{eq:as-soft-bellman}
    		Q^\pi(s,a)
    		&:=
    		\tilde r(s,a)
    		+
    		\tilde\gamma(s,a)
    		\mathbb E_{s'\sim P(\cdot\mid s,a)}
    		\big[V^\pi(s')\big],
    		\\
    		V^\pi(s)
    		&:=
    		\mathbb E_{a\sim\pi(\cdot\mid s)}
    		\big[
    		Q^\pi(s,a)-\kappa(\log\pi(a\mid s)+\ell_c)
    		\big].
    	\end{align}
    	The associated Bellman operator is a $\gamma$-contraction whenever rewards are bounded.
    \end{theorem}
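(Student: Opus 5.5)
We read Thm.~\ref{thm:as-soft-policy-evaluation} as two claims. First, the value $V^\pi(s)$ of the AS objective~\eqref{eq:as-uniform-obj}, started from $s_0=s$, solves the displayed pair of recursions. Second, the recursion has a unique bounded solution because its operator is a contraction. We would prove the identification by a one-step (tower-property) decomposition of~\eqref{eq:as-uniform-obj}, and uniqueness by reusing Lem.~\ref{lem:sdh-variable-discount-contraction}.

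\textbf{Step 1 (conditional objective).} Define
\[
V^\pi(s):=\mathbb E_{\tau\sim\pi}\Big[\sum_{t\ge0}w_t(\tau)\big(\tilde r_t-\kappa\log\pi(a_t\mid s_t)-\kappa\ell_c\big)\,\Big|\,s_0=s\Big],
\]
and let $Q^\pi(s,a)$ be the same expectation conditioned on $(s_0,a_0)=(s,a)$, but with the $t=0$ log-term removed. Corollary~\ref{cor:as-uniform-structure}, applied before the entropy is taken, shows that $\mathcal J_{\mathrm{AS}}(\pi)=\mathbb E_{s_0\sim\nu}[V^\pi(s_0)]$. Boundedness of $r$, together with an integrability or boundedness assumption on $\log\pi$, makes all sums absolutely convergent, since $w_t\le\gamma^t$. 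Fubini then justifies every rearrangement below.

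\textbf{Step 2 (one-step split).} Use the multiplicative structure $w_{t+1}(\tau)=\gamma\alpha(s_0,a_0)\,w_t(\tau_{1:})$, where $\tau_{1:}$ is the shifted trajectory. Separating the $t=0$ term, which has weight $w_0=1$, gives $V^\pi(s)=\mathbb E_{a\sim\pi}\big[\tilde r(s,a)-\kappa(\log\pi(a\mid s)+\ell_c)+\tilde\gamma(s,a)\,\mathbb E[\,\cdot\mid s_1]\big]$. By the Markov property, the inner conditional expectation of the remaining sum given $s_1$ equals $V^\pi(s_1)$. This yields $Q^\pi(s,a)=\tilde r(s,a)+\tilde\gamma(s,a)\mathbb E_{s'}V^\pi(s')$ and $V^\pi(s)=\mathbb E_a[Q^\pi-\kappa(\log\pi+\ell_c)]$.

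The only delicate point is the timing. The decision cost at time $t$ carries weight $w_t$, which contains $\prod_{k<t}\alpha_k$ but not $\alpha_t$. This is why the log-term sits in $V$ and is not multiplied by $\alpha(s,a)$, while $\tilde r$ carries the extra $\alpha$.

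\textbf{Step 3 (contraction).} Substitute $Q$ into $V$. The $V$-recursion becomes $\mathcal T^\pi_{\bar r,\bar\gamma}$ with bounded reward $\bar r(s,a)=\tilde r(s,a)-\kappa(\log\pi(a\mid s)+\ell_c)$, averaged under $\pi$, and discount $\bar\gamma=\tilde\gamma\in[0,\gamma]$. Lem.~\ref{lem:sdh-variable-discount-contraction} then gives a $\gamma$-contraction. For the $Q$-form, the same sup-norm estimate applies directly to $Q\mapsto\tilde r+\tilde\gamma\,\mathbb E_{s',a'}[Q-\kappa(\log\pi+\ell_c)]$.

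\textbf{Main obstacle.} We expect the main difficulty to be boundedness of the entropy term. In continuous spaces $\log\pi$ need not be bounded, so we would restrict to policies with bounded log-density, or bounded entropy with $\mathbb E_a$ taken first. Under that restriction, "bounded rewards" in the statement should be read as boundedness of $\bar r$.
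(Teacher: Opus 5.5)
Your proposal is correct and follows the paper's route: a one-step conditioning of the uniform-prior AS objective of Cor.~\ref{cor:as-uniform-structure} (the decision weight $w_t$ omits $\alpha_t$, so the information cost sits in $V$ while $\tilde r$ and $\tilde\gamma$ carry $\alpha(s,a)$), followed by Lem.~\ref{lem:sdh-variable-discount-contraction} via $\tilde\gamma\le\gamma$. Your write-up is more careful than the paper's sketch (explicit shift identity $w_{t+1}(\tau)=\gamma\alpha_0\,w_t(\tau_{1:})$, integrability, and the boundedness of the entropy term the contraction needs), and placing $-\kappa\ell_c$ in $V$ matches the displayed statement, provided your ``$t=0$ log-term'' removed from $Q^\pi$ is the whole $-\kappa(\log\pi+\ell_c)$, as Step~2 requires; the paper's proof instead books $-\kappa\ell_c$ as an immediate $Q$-contribution, which only shifts $Q$ by the constant $\kappa\ell_c$.
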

    
    \begin{proof}
    	Starting from \eqref{eq:as-uniform-obj}, condition on an initial state-action pair $(s,a)$.
    	The immediate reward contribution is $\tilde r(s,a)$, the immediate prior-normalization contribution is
    	$-\kappa\ell_c$, and future terms are discounted by
    	$\tilde\gamma(s,a)=\gamma\alpha(s,a)$.
    	The entropy term appears in the next state value through the standard soft value definition.
    	Because $\tilde\gamma(s,a)\le\gamma<1$, contraction follows from Lem.~\ref{lem:contraction-var-disc}.
    \end{proof}
    
    Thus AS-SAC differs from SAC only through the shaped reward, the state-action-dependent discount, and the living-cost baseline.
    
    For a replay transition $(s,a,r,s',d)\sim\mathcal D$, define
    \[
    \alpha=\alpha(s,a),
    \qquad
    \tilde r=\alpha r,
    \qquad
    \tilde\gamma=(1-d)\gamma\alpha.
    \]
    Sampling $a'\sim\pi_\phi(\cdot\mid s')$, the single-critic AS-SAC target is
    \begin{equation}
    	\label{eq:as-sac-target-single}
    	\hat V(s')
    	:=
    	\min_{i\in\{1,2\}}\bar Q_{\bar\theta_i}(s',a')
    	-
    	\kappa\big(
    	\log\pi_\phi(a'\mid s')
    	+
    	\ell_c
    	\big),
    	\qquad
    	y
    	:=
    	\tilde r+\tilde\gamma\hat V(s').
    \end{equation}
    In continuous-action experiments, $\ell_c$ is implemented by a target-entropy baseline, so the soft value subtracts the current KL-to-prior surrogate rather than only the log-policy term. Setting this baseline to zero gives the ablation that drops the AS living cost.
    
    \subsection{What does the AS temperature control?}
    \label{app:as:kappa-meaning}
    
    If $\kappa$ is fixed, the preceding Bellman recursion is sufficient. If $\kappa$ is adapted, however, the dual update should respect the AS weighting. The natural AS analogue of a per-step entropy constraint is a per-surviving-decision entropy constraint:
    \begin{equation}
    	\label{eq:as-weighted-entropy-constraint}
    	\frac{1}{Z(\pi)}
    	\mathbb E_{\tau\sim\pi}
    	\Big[
    	\sum_{t\ge0}
    	w_t(\tau)\mathcal H(\pi(\cdot\mid s_t))
    	\Big]
    	\ge
    	\bar{\mathcal H}.
    \end{equation}
    Equivalently, under the normalized uniform prior,
    \begin{equation}
    	\label{eq:kl-to-uniform-identity}
    	D_{\mathrm{KL}}\big(\pi(\cdot\mid s)\,\|\,\pi_0(\cdot\mid s)\big)
    	=
    	-\mathcal H(\pi(\cdot\mid s))+\ell_c .
    \end{equation}
    Thus AS temperature learning can be interpreted as controlling the feasibility-weighted KL mass to the reference policy. The complication is that $Z(\pi)$ is itself policy dependent, so a naive SAC-style temperature update ignores a term that is not constant under AS.
    
    \subsection{Two-critic decomposition for AS temperature learning}
    \label{app:as:two-critic}
    
    The preceding subsections show that, for fixed \(\kappa\), AS admits a standard
    soft Bellman recursion after replacing reward and discount by their
    survival-shaped counterparts and adding the AS living-cost term. The remaining
    issue is temperature adaptation. In ordinary SAC, the temperature controls a
    one-step entropy statistic. Under AS, decisions only exist along the surviving
    horizon, so the corresponding statistic is not one-step entropy but
    \emph{AS-weighted information mass}.
    
    The useful implementation trick is to separate reward evaluation from information
    evaluation. This keeps both critic targets independent of \(\kappa\), while still
    placing the current log-policy term in the usual soft-value location.
    
    Define the shifted information cost
    \[
    c_{\mathrm I}^{\pi}(s,a)
    :=
    \log\pi(a\mid s)+\mathcal H_{\mathrm{tgt}} .
    \]
    For the exact normalized-uniform-prior objective,
    \[
    \mathcal H_{\mathrm{tgt}}=\ell_c .
    \]
    In continuous-action experiments, \(\mathcal H_{\mathrm{tgt}}\) is used as the
    usual target-entropy baseline; the same equations then define a practical
    shifted-information constraint rather than an exact KL to a normalized uniform
    prior.
    
    Fix a policy \(\pi\). The reward critic evaluates the survival-shaped reward
    alone:
    \begin{align}
    Q_R^\pi(s,a)
    &=
    \tilde r(s,a)
    +
    \tilde\gamma(s,a)
    \E_{s'\sim P(\cdot\mid s,a)}
    \big[
    V_R^\pi(s')
    \big],
    \label{eq:as-QR-new}
    \\
    V_R^\pi(s)
    &:=
    \E_{a\sim\pi(\cdot\mid s)}
    \big[
    Q_R^\pi(s,a)
    \big].
    \label{eq:as-VR-new}
    \end{align}
    The information critic evaluates only the \emph{future} AS-weighted information
    mass after the state action \((s,a)\):
    \begin{align}
    G_{\mathrm I}^\pi(s,a)
    &:=
    \tilde\gamma(s,a)
    \E_{s'\sim P(\cdot\mid s,a)}
    \big[
    V_{\mathrm I}^\pi(s')
    \big],
    \label{eq:as-GI-new}
    \\
    V_{\mathrm I}^\pi(s)
    &:=
    \E_{a\sim\pi(\cdot\mid s)}
    \big[
    c_{\mathrm I}^{\pi}(s,a)
    +
    G_{\mathrm I}^\pi(s,a)
    \big].
    \label{eq:as-VI-new}
    \end{align}
    Equivalently,
    \[
    G_{\mathrm I}^\pi(s,a)
    =
    \E_\pi
    \left[
    \sum_{t\ge1}
    \left(
    \prod_{k=0}^{t-1}
    \tilde\gamma(s_k,a_k)
    \right)
    c_{\mathrm I}^{\pi}(s_t,a_t)
    \,\middle|\,
    s_0=s,a_0=a
    \right].
    \]
    The index starts at \(t=1\): \(G_{\mathrm I}\) supplies future information only.
    The current term
    \(\log\pi(a\mid s)+\mathcal H_{\mathrm{tgt}}\) is added when forming the actor
    loss and the temperature statistic.

    \newpage
    
    \begin{theorem}[Exact AS decomposition into reward and future information]
    \label{thm:as-two-critic-corrected}
    Fix \(\pi\) and set \(\mathcal H_{\mathrm{tgt}}=\ell_c\). Define
    \[
    Q_\kappa^\pi(s,a)
    :=
    Q_R^\pi(s,a)
    -
    \kappa\ell_c
    -
    \kappa G_{\mathrm I}^\pi(s,a),
    \]
    and
    \[
    V_\kappa^\pi(s)
    :=
    \E_{a\sim\pi(\cdot\mid s)}
    \big[
    Q_\kappa^\pi(s,a)
    -
    \kappa\log\pi(a\mid s)
    \big].
    \]
    Then \(Q_\kappa^\pi\) satisfies the AS soft Bellman recursion
    \[
    Q_\kappa^\pi(s,a)
    =
    \tilde r(s,a)
    -
    \kappa\ell_c
    +
    \tilde\gamma(s,a)
    \E_{s'\sim P(\cdot\mid s,a)}
    \big[
    V_\kappa^\pi(s')
    \big],
    \]
    and therefore
    \[
    \mathcal J_{\mathrm{AS}}(\pi)
    =
    \E_{s_0\sim \rho_0}
    \big[
    V_\kappa^\pi(s_0)
    \big].
    \]
    \end{theorem}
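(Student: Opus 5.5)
Our approach is a direct algebraic verification. We substitute the definitions of $Q_R^\pi$ and $G_{\mathrm I}^\pi$ into the definition of $Q_\kappa^\pi$ and check that the result matches the claimed recursion. After that, we identify $V_\kappa^\pi$ with the AS value defined by \Cref{thm:as-soft-policy-evaluation}, and invoke uniqueness of its fixed point.

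First, we express $V_\kappa^\pi$ in terms of the two critics. From the definitions, and using $\mathcal H_{\mathrm{tgt}}=\ell_c$ so that $c_{\mathrm I}^\pi=\log\pi+\ell_c$,
\[
V_\kappa^\pi(s)=\E_{a\sim\pi}\big[Q_R^\pi(s,a)-\kappa(\log\pi(a\mid s)+\ell_c)-\kappa G_{\mathrm I}^\pi(s,a)\big]=V_R^\pi(s)-\kappa V_{\mathrm I}^\pi(s),
\]
where the last equality uses \eqref{eq:as-VR-new} and \eqref{eq:as-VI-new}. This is the key bookkeeping step: the $-\kappa\ell_c$ placed in $Q_\kappa$ and the $-\kappa\log\pi$ added in $V_\kappa$ together reconstitute exactly the current information cost $c_{\mathrm I}^\pi$ inside $V_{\mathrm I}$.

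Next, we expand $Q_\kappa^\pi$ using \eqref{eq:as-QR-new} and \eqref{eq:as-GI-new}:
\[
Q_\kappa^\pi(s,a)=\tilde r(s,a)+\tilde\gamma(s,a)\E_{s'}[V_R^\pi(s')]-\kappa\ell_c-\kappa\tilde\gamma(s,a)\E_{s'}[V_{\mathrm I}^\pi(s')].
\]
Grouping the $\tilde\gamma$ terms and applying the previous identity gives $\tilde r-\kappa\ell_c+\tilde\gamma\,\E_{s'}[V_\kappa^\pi(s')]$, which is the claimed recursion. This recursion coincides with that of \Cref{thm:as-soft-policy-evaluation}: there the $-\kappa\ell_c$ sits inside $V$, here it sits in $Q$, and the two placements agree after taking the expectation over $a$. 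Since the corresponding operator on $V$ is a $\gamma$-contraction by \Cref{lem:contraction-var-disc}, with bounded reward $\tilde r-\kappa\ell_c-\kappa\E_a\log\pi$ under integrability, $V_\kappa^\pi$ is its unique bounded fixed point.

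Finally, we show $\mathcal J_{\mathrm{AS}}(\pi)=\E_{s_0}[V_\kappa^\pi(s_0)]$. We unroll the fixed point as a series: the per-step term at time $t$ carries weight $\prod_{k<t}\tilde\gamma_k=w_t(\tau)$ and equals $\tilde r_t-\kappa\ell_c-\kappa\log\pi(a_t\mid s_t)$. Taking conditional expectations over $a_t$ turns $-\log\pi$ into entropy, which recovers \eqref{eq:as-uniform-obj} from \Cref{cor:as-uniform-structure}. The main obstacle is justifying that the unrolled series equals the fixed point, namely the exchange of the infinite sum and the expectation, together with the tail vanishing. This follows because $w_t\le\gamma^t$ and the integrands are bounded (or integrable, in the continuous case, via the integrability assumption stated before \Cref{lem:sdh-semantic-kl}). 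Alternatively, one can show that $V_R^\pi$ and $V_{\mathrm I}^\pi$ equal their series representations separately, by the same contraction argument, and combine them linearly.
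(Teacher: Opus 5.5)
Your proposal is correct and follows the paper's proof essentially step for step: the paper likewise uses $\mathcal H_{\mathrm{tgt}}=\ell_c$ to show $V_R^\pi-\kappa V_{\mathrm I}^\pi=V_\kappa^\pi$. It then substitutes this into the combined recursion $Q_R^\pi-\kappa G_{\mathrm I}^\pi=\tilde r+\tilde\gamma\,\E_{s'}[V_R^\pi-\kappa V_{\mathrm I}^\pi]$ and unrolls to obtain $\mathcal J_{\mathrm{AS}}$. Your extra care in the last step (boundedness, the tail vanishing because $w_t\le\gamma^t$, and identifying $V_\kappa^\pi$ as the unique fixed point of the AS soft evaluation operator) makes explicit what the paper compresses into the phrase ``unrolling the recursion.''
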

    
    \begin{proof}
    By the definitions of \(Q_R^\pi\) and \(G_{\mathrm I}^\pi\),
    \[
    Q_R^\pi(s,a)-\kappa G_{\mathrm I}^\pi(s,a)
    =
    \tilde r(s,a)
    +
    \tilde\gamma(s,a)
    \E_{s'\sim P(\cdot\mid s,a)}
    \big[
    V_R^\pi(s')-\kappa V_{\mathrm I}^\pi(s')
    \big].
    \]
    When \(\mathcal H_{\mathrm{tgt}}=\ell_c\),
    \[
    V_R^\pi(s)-\kappa V_{\mathrm I}^\pi(s)
    =
    \E_{a\sim\pi(\cdot\mid s)}
    \big[
    Q_R^\pi(s,a)
    -
    \kappa G_{\mathrm I}^\pi(s,a)
    -
    \kappa\log\pi(a\mid s)
    -
    \kappa\ell_c
    \big]
    =
    V_\kappa^\pi(s).
    \]
    Substituting this identity into the previous display gives the stated AS soft
    Bellman recursion. Unrolling the recursion and averaging over
    \(s_0\sim\rho_0\) gives the AS objective.
    \end{proof}
    
    The theorem isolates the AS-specific statistic:
    \[
    V_{\mathrm I}^\pi(s)
    =
    \E_{a\sim\pi(\cdot\mid s)}
    \big[
    \log\pi(a\mid s)
    +
    \ell_c
    +
    G_{\mathrm I}^\pi(s,a)
    \big].
    \]
    Thus temperature learning should not only react to the current action entropy; it
    should react to the information mass accumulated along the surviving decision
    horizon. This is the main difference from the usual SAC temperature rule.
    
    \paragraph{Model-free off-policy targets.}
    For a replay transition \((s,a,r,s',d)\sim\mathcal D\), set
    \[
    \alpha:=\alpha(s,a),
    \qquad
    \tilde r:=\alpha r,
    \qquad
    \tilde\gamma:=(1-d)\gamma\alpha .
    \]
    Sample \(a'\sim\pi_\phi(\cdot\mid s')\). The reward target is
    \[
    y_R
    :=
    \tilde r
    +
    \tilde\gamma\,
    \bar Q_R(s',a').
    \]
    The future-information target is
    \[
    y_{\mathrm I}
    :=
    \tilde\gamma
    \left(
    \mathrm{sg}\!\left[
    \log\pi_\phi(a'\mid s')
    +
    \mathcal H_{\mathrm{tgt}}
    \right]
    +
    \bar G_{\mathrm I}(s',a')
    \right).
    \]
    The log-probability is evaluated at the next sampled action \(a'\), not at the
    replay action \(a\), because \(G_{\mathrm I}\) estimates future information after
    the forced current action. The stop-gradient indicates that this target
    re-evaluates the current policy's information cost; actor gradients are taken
    through the actor loss below.
    
    The critic losses are
    \[
    \mathcal L_R(\theta)
    =
    \E_{\mathcal D}
    \big[
    (Q_{R,\theta}(s,a)-y_R)^2
    \big],
    \qquad
    \mathcal L_{\mathrm I}(\psi)
    =
    \E_{\mathcal D}
    \big[
    (G_{\mathrm I,\psi}(s,a)-y_{\mathrm I})^2
    \big].
    \]
    With double reward critics, \(y_R\) can use the usual clipped estimate. For the
    information critic, underestimation weakens the temperature constraint, so a mean
    or upper-biased ensemble estimate is preferable when conservative temperature
    adaptation is desired.
    
    \paragraph{Actor update.}
    For replay states \(s\sim\mathcal D\) and fresh actions
    \(a_\phi\sim\pi_\phi(\cdot\mid s)\), the AS-consistent actor loss is
    \[
    \mathcal L_\pi(\phi)
    =
    \E_{s\sim\mathcal D,\ a_\phi\sim\pi_\phi}
    \left[
    \kappa
    \left(
    \log\pi_\phi(a_\phi\mid s)
    +
    \mathcal H_{\mathrm{tgt}}
    +
    \hat G_{\mathrm I}(s,a_\phi)
    \right)
    -
    \hat Q_R(s,a_\phi)
    \right].
    \]
    The actor therefore sees the current information cost explicitly, while
    \(\hat G_{\mathrm I}\) supplies the future AS-weighted information mass.
    
    \paragraph{Temperature update.}
    For a chosen state distribution \(d\), define
    \[
    \widehat I_d(\pi_\phi)
    :=
    \E_{s\sim d,\ a_\phi\sim\pi_\phi}
    \left[
    \log\pi_\phi(a_\phi\mid s)
    +
    \mathcal H_{\mathrm{tgt}}
    +
    \hat G_{\mathrm I}(s,a_\phi)
    \right].
    \]
    Taking \(d=\rho_0\) gives an initial-state information-mass constraint; taking
    \(d\) to be an AS-weighted occupancy gives an occupancy-level constraint. In
    practice we use replay states, \(d=\mathcal D\), yielding an off-policy
    temperature rule for a replay-distribution information constraint.
    
    For the unnormalized constraint
    \[
    \widehat I_d(\pi_\phi)\le \varepsilon,
    \]
    parameterize \(\kappa=\exp(\eta)\) and use
    \[
    \mathcal L_\kappa(\eta)
    =
    -
    \exp(\eta)
    \left(
    \mathrm{sg}\!\left[
    \widehat I_d(\pi_\phi)
    \right]
    -
    \varepsilon
    \right).
    \]
    Minimizing this loss increases \(\kappa\) when the estimated AS-weighted
    information mass exceeds the allowed budget and decreases it when the policy is
    below the budget. Gradients from this dual loss are not propagated into the
    actor.
    
    \paragraph{Normalized information per surviving decision.}
    The update above controls total AS-weighted information mass. If the intended
    constraint is instead average information per surviving decision, one must also
    estimate the surviving decision mass. Define
    \[
    G_Z^\pi(s,a)
    :=
    \tilde\gamma(s,a)
    \E_{s'\sim P(\cdot\mid s,a)}
    \big[
    V_Z^\pi(s')
    \big],
    \qquad
    V_Z^\pi(s)
    :=
    \E_{a\sim\pi(\cdot\mid s)}
    \big[
    1+G_Z^\pi(s,a)
    \big].
    \]
    Then
    \[
    \widehat Z_d(\pi_\phi)
    :=
    \E_{s\sim d,\ a_\phi\sim\pi_\phi}
    \big[
    1+\hat G_Z(s,a_\phi)
    \big]
    \]
    estimates the AS-weighted surviving decision mass under \(d\). The normalized
    constraint is
    \[
    \frac{\widehat I_d(\pi_\phi)}
    {\widehat Z_d(\pi_\phi)}
    \le \varepsilon,
    \qquad\text{equivalently}\qquad
    \widehat I_d(\pi_\phi)
    \le
    \varepsilon\,\widehat Z_d(\pi_\phi).
    \]
    Thus \(G_{\mathrm I}\) alone gives a total-information constraint; adding \(G_Z\)
    turns it into an information-per-surviving-decision constraint.

    \clearpage
    \newpage

	\section*{NeurIPS Paper Checklist}

\begin{enumerate}

\item {\bf Claims}
    \item[] Question: Do the main claims made in the abstract and introduction accurately reflect the paper's contributions and scope?
    \item[] Answer: \answerYes{} 
    \item[] Justification: Yes, the empirical claims are supported by the experiments in Section 5 and the claim regarding the theoretical contributions are supported by Sections 2 and 3 as well as the respective appendices referenced in the main text.
    \item[] Guidelines:
    \begin{itemize}
        \item The answer \answerNA{} means that the abstract and introduction do not include the claims made in the paper.
        \item The abstract and/or introduction should clearly state the claims made, including the contributions made in the paper and important assumptions and limitations. A \answerNo{} or \answerNA{} answer to this question will not be perceived well by the reviewers. 
        \item The claims made should match theoretical and experimental results, and reflect how much the results can be expected to generalize to other settings. 
        \item It is fine to include aspirational goals as motivation as long as it is clear that these goals are not attained by the paper. 
    \end{itemize}

\item {\bf Limitations}
    \item[] Question: Does the paper discuss the limitations of the work performed by the authors?
    \item[] Answer: \answerYes{} 
    \item[] Justification: The main limitation of the method is that it doesn't apply to expected cost constraints which is analyzed theoretically in Sec. 2. The theory is validated in the experiments (Sec. 5) where the empirical limitations on the Safety Gymnasium benchmark are discussed and linked to the theory.
    \item[] Guidelines:
    \begin{itemize}
        \item The answer \answerNA{} means that the paper has no limitation while the answer \answerNo{} means that the paper has limitations, but those are not discussed in the paper. 
        \item The authors are encouraged to create a separate ``Limitations'' section in their paper.
        \item The paper should point out any strong assumptions and how robust the results are to violations of these assumptions (e.g., independence assumptions, noiseless settings, model well-specification, asymptotic approximations only holding locally). The authors should reflect on how these assumptions might be violated in practice and what the implications would be.
        \item The authors should reflect on the scope of the claims made, e.g., if the approach was only tested on a few datasets or with a few runs. In general, empirical results often depend on implicit assumptions, which should be articulated.
        \item The authors should reflect on the factors that influence the performance of the approach. For example, a facial recognition algorithm may perform poorly when image resolution is low or images are taken in low lighting. Or a speech-to-text system might not be used reliably to provide closed captions for online lectures because it fails to handle technical jargon.
        \item The authors should discuss the computational efficiency of the proposed algorithms and how they scale with dataset size.
        \item If applicable, the authors should discuss possible limitations of their approach to address problems of privacy and fairness.
        \item While the authors might fear that complete honesty about limitations might be used by reviewers as grounds for rejection, a worse outcome might be that reviewers discover limitations that aren't acknowledged in the paper. The authors should use their best judgment and recognize that individual actions in favor of transparency play an important role in developing norms that preserve the integrity of the community. Reviewers will be specifically instructed to not penalize honesty concerning limitations.
    \end{itemize}

\item {\bf Theory assumptions and proofs}
    \item[] Question: For each theoretical result, does the paper provide the full set of assumptions and a complete (and correct) proof?
    \item[] Answer: \answerYes{} 
    \item[] Justification: The appendix "Theoretical Foundations" sets up the full theory and includes missing assumptions that were not fully stated in the main text.
    \item[] Guidelines:
    \begin{itemize}
        \item The answer \answerNA{} means that the paper does not include theoretical results. 
        \item All the theorems, formulas, and proofs in the paper should be numbered and cross-referenced.
        \item All assumptions should be clearly stated or referenced in the statement of any theorems.
        \item The proofs can either appear in the main paper or the supplemental material, but if they appear in the supplemental material, the authors are encouraged to provide a short proof sketch to provide intuition. 
        \item Inversely, any informal proof provided in the core of the paper should be complemented by formal proofs provided in appendix or supplemental material.
        \item Theorems and Lemmas that the proof relies upon should be properly referenced. 
    \end{itemize}

    \item {\bf Experimental result reproducibility}
    \item[] Question: Does the paper fully disclose all the information needed to reproduce the main experimental results of the paper to the extent that it affects the main claims and/or conclusions of the paper (regardless of whether the code and data are provided or not)?
    \item[] Answer: \answerYes{} 
    \item[] Justification: The appendix gives pseudocode, hyperparameters, evaluation protocols, and compute details; the supplementary code ZIP contains the corresponding implementation and configs.
    \item[] Guidelines:
    \begin{itemize}
        \item The answer \answerNA{} means that the paper does not include experiments.
        \item If the paper includes experiments, a \answerNo{} answer to this question will not be perceived well by the reviewers: Making the paper reproducible is important, regardless of whether the code and data are provided or not.
        \item If the contribution is a dataset and\slash or model, the authors should describe the steps taken to make their results reproducible or verifiable. 
        \item Depending on the contribution, reproducibility can be accomplished in various ways. For example, if the contribution is a novel architecture, describing the architecture fully might suffice, or if the contribution is a specific model and empirical evaluation, it may be necessary to either make it possible for others to replicate the model with the same dataset, or provide access to the model. In general. releasing code and data is often one good way to accomplish this, but reproducibility can also be provided via detailed instructions for how to replicate the results, access to a hosted model (e.g., in the case of a large language model), releasing of a model checkpoint, or other means that are appropriate to the research performed.
        \item While NeurIPS does not require releasing code, the conference does require all submissions to provide some reasonable avenue for reproducibility, which may depend on the nature of the contribution. For example
        \begin{enumerate}
            \item If the contribution is primarily a new algorithm, the paper should make it clear how to reproduce that algorithm.
            \item If the contribution is primarily a new model architecture, the paper should describe the architecture clearly and fully.
            \item If the contribution is a new model (e.g., a large language model), then there should either be a way to access this model for reproducing the results or a way to reproduce the model (e.g., with an open-source dataset or instructions for how to construct the dataset).
            \item We recognize that reproducibility may be tricky in some cases, in which case authors are welcome to describe the particular way they provide for reproducibility. In the case of closed-source models, it may be that access to the model is limited in some way (e.g., to registered users), but it should be possible for other researchers to have some path to reproducing or verifying the results.
        \end{enumerate}
    \end{itemize}

\item {\bf Open access to data and code}
    \item[] Question: Does the paper provide open access to the data and code, with sufficient instructions to faithfully reproduce the main experimental results, as described in supplemental material?
    \item[] Answer: \answerYes{} 
    \item[] Justification: We provide an anonymized supplementary code ZIP with the implementation, configs, and instructions for reproducing the main experiments.
    \item[] Guidelines:
    \begin{itemize}
        \item The answer \answerNA{} means that paper does not include experiments requiring code.
        \item Please see the NeurIPS code and data submission guidelines (\url{https://neurips.cc/public/guides/CodeSubmissionPolicy}) for more details.
        \item While we encourage the release of code and data, we understand that this might not be possible, so \answerNo{} is an acceptable answer. Papers cannot be rejected simply for not including code, unless this is central to the contribution (e.g., for a new open-source benchmark).
        \item The instructions should contain the exact command and environment needed to run to reproduce the results. See the NeurIPS code and data submission guidelines (\url{https://neurips.cc/public/guides/CodeSubmissionPolicy}) for more details.
        \item The authors should provide instructions on data access and preparation, including how to access the raw data, preprocessed data, intermediate data, and generated data, etc.
        \item The authors should provide scripts to reproduce all experimental results for the new proposed method and baselines. If only a subset of experiments are reproducible, they should state which ones are omitted from the script and why.
        \item At submission time, to preserve anonymity, the authors should release anonymized versions (if applicable).
        \item Providing as much information as possible in supplemental material (appended to the paper) is recommended, but including URLs to data and code is permitted.
    \end{itemize}

\item {\bf Experimental setting/details}
    \item[] Question: Does the paper specify all the training and test details (e.g., data splits, hyperparameters, how they were chosen, type of optimizer) necessary to understand the results?
    \item[] Answer: \answerYes{} 
    \item[] Justification: the appendix contains experimental details including pseudocode for algorithm variants and hyperparameters.
    \item[] Guidelines:
    \begin{itemize}
        \item The answer \answerNA{} means that the paper does not include experiments.
        \item The experimental setting should be presented in the core of the paper to a level of detail that is necessary to appreciate the results and make sense of them.
        \item The full details can be provided either with the code, in appendix, or as supplemental material.
    \end{itemize}

\item {\bf Experiment statistical significance}
    \item[] Question: Does the paper report error bars suitably and correctly defined or other appropriate information about the statistical significance of the experiments?
    \item[] Answer: \answerYes{} 
    \item[] Justification: Metrics for the Safety Gymnasium benchmark are reported as inter quartile means (25\%) with 95\% bootstrap confidence intervals using 5 seeds per experiment. The Hyfydy experiments report mean and variance across 3 seeds for each experiment.
    \item[] Guidelines:
    \begin{itemize}
        \item The answer \answerNA{} means that the paper does not include experiments.
        \item The authors should answer \answerYes{} if the results are accompanied by error bars, confidence intervals, or statistical significance tests, at least for the experiments that support the main claims of the paper.
        \item The factors of variability that the error bars are capturing should be clearly stated (for example, train/test split, initialization, random drawing of some parameter, or overall run with given experimental conditions).
        \item The method for calculating the error bars should be explained (closed form formula, call to a library function, bootstrap, etc.)
        \item The assumptions made should be given (e.g., Normally distributed errors).
        \item It should be clear whether the error bar is the standard deviation or the standard error of the mean.
        \item It is OK to report 1-sigma error bars, but one should state it. The authors should preferably report a 2-sigma error bar than state that they have a 96\% CI, if the hypothesis of Normality of errors is not verified.
        \item For asymmetric distributions, the authors should be careful not to show in tables or figures symmetric error bars that would yield results that are out of range (e.g., negative error rates).
        \item If error bars are reported in tables or plots, the authors should explain in the text how they were calculated and reference the corresponding figures or tables in the text.
    \end{itemize}

\item {\bf Experiments compute resources}
    \item[] Question: For each experiment, does the paper provide sufficient information on the computer resources (type of compute workers, memory, time of execution) needed to reproduce the experiments?
    \item[] Answer: \answerYes{} 
    \item[] Justification: Tables 8 and 14 provide compute summaries for Hyfydy and Safety Gymnasium experiments.
    \item[] Guidelines:
    \begin{itemize}
        \item The answer \answerNA{} means that the paper does not include experiments.
        \item The paper should indicate the type of compute workers CPU or GPU, internal cluster, or cloud provider, including relevant memory and storage.
        \item The paper should provide the amount of compute required for each of the individual experimental runs as well as estimate the total compute. 
        \item The paper should disclose whether the full research project required more compute than the experiments reported in the paper (e.g., preliminary or failed experiments that didn't make it into the paper). 
    \end{itemize}
    
\item {\bf Code of ethics}
    \item[] Question: Does the research conducted in the paper conform, in every respect, with the NeurIPS Code of Ethics \url{https://neurips.cc/public/EthicsGuidelines}?
    \item[] Answer: \answerYes{} 
    \item[] Justification: 
    \item[] Guidelines:
    \begin{itemize}
        \item The answer \answerNA{} means that the authors have not reviewed the NeurIPS Code of Ethics.
        \item If the authors answer \answerNo, they should explain the special circumstances that require a deviation from the Code of Ethics.
        \item The authors should make sure to preserve anonymity (e.g., if there is a special consideration due to laws or regulations in their jurisdiction).
    \end{itemize}

\item {\bf Broader impacts}
    \item[] Question: Does the paper discuss both potential positive societal impacts and negative societal impacts of the work performed?
    \item[] Answer: \answerYes{} 
    \item[] Justification: We discuss potential positive and negative societal impacts in the Impact Statement in Appendix~\ref{sec:impact}, page~\pageref{sec:impact}.
    \item[] Guidelines:
    \begin{itemize}
        \item The answer \answerNA{} means that there is no societal impact of the work performed.
        \item If the authors answer \answerNA{} or \answerNo, they should explain why their work has no societal impact or why the paper does not address societal impact.
        \item Examples of negative societal impacts include potential malicious or unintended uses (e.g., disinformation, generating fake profiles, surveillance), fairness considerations (e.g., deployment of technologies that could make decisions that unfairly impact specific groups), privacy considerations, and security considerations.
        \item The conference expects that many papers will be foundational research and not tied to particular applications, let alone deployments. However, if there is a direct path to any negative applications, the authors should point it out. For example, it is legitimate to point out that an improvement in the quality of generative models could be used to generate Deepfakes for disinformation. On the other hand, it is not needed to point out that a generic algorithm for optimizing neural networks could enable people to train models that generate Deepfakes faster.
        \item The authors should consider possible harms that could arise when the technology is being used as intended and functioning correctly, harms that could arise when the technology is being used as intended but gives incorrect results, and harms following from (intentional or unintentional) misuse of the technology.
        \item If there are negative societal impacts, the authors could also discuss possible mitigation strategies (e.g., gated release of models, providing defenses in addition to attacks, mechanisms for monitoring misuse, mechanisms to monitor how a system learns from feedback over time, improving the efficiency and accessibility of ML).
    \end{itemize}
    
\item {\bf Safeguards}
    \item[] Question: Does the paper describe safeguards that have been put in place for responsible release of data or models that have a high risk for misuse (e.g., pre-trained language models, image generators, or scraped datasets)?
    \item[] Answer: \answerNA{} 
    \item[] Justification:
    \item[] Guidelines:
    \begin{itemize}
        \item The answer \answerNA{} means that the paper poses no such risks.
        \item Released models that have a high risk for misuse or dual-use should be released with necessary safeguards to allow for controlled use of the model, for example by requiring that users adhere to usage guidelines or restrictions to access the model or implementing safety filters. 
        \item Datasets that have been scraped from the Internet could pose safety risks. The authors should describe how they avoided releasing unsafe images.
        \item We recognize that providing effective safeguards is challenging, and many papers do not require this, but we encourage authors to take this into account and make a best faith effort.
    \end{itemize}

\item {\bf Licenses for existing assets}
    \item[] Question: Are the creators or original owners of assets (e.g., code, data, models), used in the paper, properly credited and are the license and terms of use explicitly mentioned and properly respected?
    \item[] Answer: \answerYes{} 
    \item[] Justification: This concerns Safety Gymnasium's Apache License and Hyfydy's proprietary licence, both of which have been respected.
    \item[] Guidelines:
    \begin{itemize}
        \item The answer \answerNA{} means that the paper does not use existing assets.
        \item The authors should cite the original paper that produced the code package or dataset.
        \item The authors should state which version of the asset is used and, if possible, include a URL.
        \item The name of the license (e.g., CC-BY 4.0) should be included for each asset.
        \item For scraped data from a particular source (e.g., website), the copyright and terms of service of that source should be provided.
        \item If assets are released, the license, copyright information, and terms of use in the package should be provided. For popular datasets, \url{paperswithcode.com/datasets} has curated licenses for some datasets. Their licensing guide can help determine the license of a dataset.
        \item For existing datasets that are re-packaged, both the original license and the license of the derived asset (if it has changed) should be provided.
        \item If this information is not available online, the authors are encouraged to reach out to the asset's creators.
    \end{itemize}

\item {\bf New assets}
    \item[] Question: Are new assets introduced in the paper well documented and is the documentation provided alongside the assets?
    \item[] Answer: \answerNA{} 
    \item[] Justification: The supplementary code ZIP is documented with setup and reproduction instructions. We do not introduce a new dataset or pretrained model.
    \item[] Guidelines:
    \begin{itemize}
        \item The answer \answerNA{} means that the paper does not release new assets.
        \item Researchers should communicate the details of the dataset\slash code\slash model as part of their submissions via structured templates. This includes details about training, license, limitations, etc. 
        \item The paper should discuss whether and how consent was obtained from people whose asset is used.
        \item At submission time, remember to anonymize your assets (if applicable). You can either create an anonymized URL or include an anonymized zip file.
    \end{itemize}

\item {\bf Crowdsourcing and research with human subjects}
    \item[] Question: For crowdsourcing experiments and research with human subjects, does the paper include the full text of instructions given to participants and screenshots, if applicable, as well as details about compensation (if any)? 
    \item[] Answer: \answerNA{} 
    \item[] Justification:
    \item[] Guidelines:
    \begin{itemize}
        \item The answer \answerNA{} means that the paper does not involve crowdsourcing nor research with human subjects.
        \item Including this information in the supplemental material is fine, but if the main contribution of the paper involves human subjects, then as much detail as possible should be included in the main paper. 
        \item According to the NeurIPS Code of Ethics, workers involved in data collection, curation, or other labor should be paid at least the minimum wage in the country of the data collector. 
    \end{itemize}

\item {\bf Institutional review board (IRB) approvals or equivalent for research with human subjects}
    \item[] Question: Does the paper describe potential risks incurred by study participants, whether such risks were disclosed to the subjects, and whether Institutional Review Board (IRB) approvals (or an equivalent approval/review based on the requirements of your country or institution) were obtained?
    \item[] Answer: \answerNA{} 
    \item[] Justification:
    \item[] Guidelines:
    \begin{itemize}
        \item The answer \answerNA{} means that the paper does not involve crowdsourcing nor research with human subjects.
        \item Depending on the country in which research is conducted, IRB approval (or equivalent) may be required for any human subjects research. If you obtained IRB approval, you should clearly state this in the paper. 
        \item We recognize that the procedures for this may vary significantly between institutions and locations, and we expect authors to adhere to the NeurIPS Code of Ethics and the guidelines for their institution. 
        \item For initial submissions, do not include any information that would break anonymity (if applicable), such as the institution conducting the review.
    \end{itemize}

\item {\bf Declaration of LLM usage}
    \item[] Question: Does the paper describe the usage of LLMs if it is an important, original, or non-standard component of the core methods in this research? Note that if the LLM is used only for writing, editing, or formatting purposes and does \emph{not} impact the core methodology, scientific rigor, or originality of the research, declaration is not required.
    \item[] Answer: \answerNA{} 
    \item[] Justification:
    \item[] Guidelines:
    \begin{itemize}
        \item The answer \answerNA{} means that the core method development in this research does not involve LLMs as any important, original, or non-standard components.
        \item Please refer to our LLM policy in the NeurIPS handbook for what should or should not be described.
    \end{itemize}

\end{enumerate}

\end{document}